\newcommand{\arxiv}[1]{\iftoggle{colt}{}{#1}}
\newcommand{\colt}[1]{\iftoggle{colt}{#1}{}}
\global\togglefalse{colt}
\newcommand{\nc}{\newcommand}
\Crefname{assumption}{Assumption}{Assumptions}
\Crefname{subsubsection}{Section}{Sections}
\nc{\comp}{\mathrm{c}}
\nc{\sups}[1]{^{\scriptscriptstyle{#1}}}
\nc{\subs}[1]{_{\scriptscriptstyle{#1}}}
\newcommand{\Kpairs}[1][K]{\MP_2({#1})}
\newcommand{\discpt}{\tau}
\newcommand{\discmg}{\gamma}
\newcommand{\disc}[1][\discmg]{\MD_{#1}}
\newcommand{\Menucls}[1][2]{\mathscr{M}_{#1}}
\newcommand{\hmenu}[1][h]{{#1}^{\mathsf{menu}}}
\newcommand{\hthr}[1][h]{{#1}^{\mathsf{thr}}}
\newcommand{\Thrcls}[1][\alpha]{\mathscr{T}_{#1}}
\newcommand{\merge}{\mathsf{merge}}
\newcommand{\wb}{\widebar}
\newcommand{\Wcomp}{\MW^{\mathrm{c}}}
\newcommand{\Hmc}[1][N,q]{\MH^{\mathsf{mc}}_{#1}}
\newcommand{\outdeg}{\mathsf{outdeg}}
\newcommand{\EstimatePotential}{\texttt{EstimatePotential}\xspace}
\newcommand{\WeakRealizable}{\texttt{WeakRealizable}\xspace}
\newcommand{\AgnosticPartial}{\texttt{AgnosticPartial}\xspace}
\newcommand{\RealizablePartial}{\texttt{RealizablePartial}\xspace}
\newcommand{\Adaboost}{\texttt{Adaboost}\xspace}
\newcommand{\SampleERMBinary}{\texttt{SampleERM.Binary}\xspace}
\newcommand{\SampleERMReal}{\texttt{SampleERM.Real}\xspace}
\newcommand{\SampleConReal}{\texttt{SampleCon.Real}\xspace}
\newcommand{\MulticlassRealizable}{\texttt{MulticlassRealizable}\xspace}
\newcommand{\MulticlassAgnostic}{\texttt{MulticlassAgnostic}\xspace}
\newcommand{\RegRealizable}{\texttt{RegRealizable}\xspace}
\newcommand{\RegAgnostic}{\texttt{RegAgnostic}\xspace}
\newcommand{\Alg}{\mathsf{Alg}}
\newcommand{\AlgR}{\mathsf{Alg}^{\mathsf{R}}}
\newcommand{\AlgA}{\mathsf{Alg}^{\mathsf{A}}}
\newcommand{\True}{\mathsf{True}}
\newcommand{\False}{\mathsf{False}}
\nc{\Actor}{\texttt{Actor}\xspace}
\nc{\EstFeature}{\texttt{EstFeature}\xspace}
\nc{\ExpFTPL}{\texttt{ExpFTPL}\xspace}
\nc{\dist}{\mathrm{dist}}
\nc{\Bquad}{B^{\mathsf{quad}}}
\nc{\thetar}{\theta^{\mathsf{r}}}
\nc{\sign}{\mathrm{sign}}
\nc{\eerr}{\widehat{\mathrm{er}}}
\nc{\err}{\mathrm{er}}
\nc{\Ber}{\mathrm{Ber}}
\nc{\SU}{\mathscr{U}}
\nc{\Decmc}[1][]{\mathsf{Dec}_{#1}^{\mathsf{mc}}}
\newtheorem*{rep@theorem}{\rep@title}
\newcommand{\newreptheorem}[2]{%
\newenvironment{rep#1}[1]{%
 \def\rep@title{#2 \ref{##1}}%
 \begin{rep@theorem}}%
 {\end{rep@theorem}}}
\newcommand\xlabel[2][]{\phantomsection\def\@currentlabelname{#1}\label{#2}}
\theoremstyle{plain}
\newtheorem{theorem}{Theorem}
\newtheorem{lemma}[theorem]{Lemma}
\newtheorem{corollary}[theorem]{Corollary}
\newtheorem{proposition}[theorem]{Proposition}
\theoremstyle{definition}
\newtheorem{definition}{Definition}
\numberwithin{theorem}{section}
\numberwithin{definition}{section}
\nc{\DMO}{\DeclareMathOperator}
\DeclareMathOperator*{\argmin}{arg\,min} %
\DMO{\prox}{prox}
\DMO{\UCB}{UCB}
\DMO{\LCB}{LCB}
\nc{\phidiff}{\phi\sups{\Delta}}
\nc{\pexp}{q_{\mathrm{exp}}}
\nc{\nn}{\nonumber}
\nc{\rk}{\mathrm{rk}}
\nc{\brk}[3]{{\rm br}_{#1}^{#2}({#3})}
\nc{\co}{{\rm co}}
\nc{\br}[2]{{\rm br}^{#1}({#2})}
\nc{\depth}[1]{{\rm d}({#1})}
\nc{\tA}{\textsc{A}}
\nc{\child}[2]{{\rm ch}_{#1}({#2})}
\nc{\parent}[1]{{\rm pa}({#1})}
\nc{\dg}{\dagger}
\nc{\bB}{\mathbf{B}}
\nc{\be}{\mathbf{e}}
\nc{\Span}{{\rm Span}}
\nc{\unif}{\mathsf{unif}}
\nc{\indsig}[2]{\mathcal{I}_{#1}({#2})}
\nc{\total}{{\rm fin}}
\nc{\early}{{\rm pre}}
\nc{\zsink}{z_{\rm sink}}
\nc{\lowv}{{\rm low}}
\nc{\ol}{\overline}
\nc{\ul}{\underline}
\nc{\madec}[3]{\texttt{ma-dec}_{#1}({#2}, {#3})}
\nc{\madeco}[1]{\texttt{ma-dec}_{#1}}
\nc{\madecd}[3]{\texttt{ma-dec}^{\texttt{d}}_{#1}({#2}, {#3})}
\nc{\SF}{\mathscr{F}}
\nc{\SP}{\mathscr{P}}
\nc{\SPc}{\wb{\mathscr{P}}}
\nc{\SB}{\mathscr{B}}
\nc{\SC}{\mathscr{C}}
\nc{\BS}{\mathbb{S}}
\nc{\PiMarkov}{\Pi^{\mathsf{M}}}
\nc{\PiM}{\PiMarkov}
\nc{\piopt}{\pi^{\mathsf{opt}}}
\nc{\Critic}{\texttt{Critic}\xspace}
\nc{\trunc}[2]{\mathsf{trunc}_{#2}({#1})}
\nc{\sbl}{of strong Bellman type\xspace}
\nc{\inormal}[1][\Phi, u,v]{\til{N}_{{#1}}}
\nc{\gamvec}{\gamma}
\nc{\til}{\widetilde}
\nc{\td}{\tilde}
\nc{\wh}{\widehat}
\nc{\old}[1]{\ifnum\Comments=1 {\color{brown}  [OLD: #1]}\fi}
\nc{\noah}[1]{\ifnum\Comments=1 {\color{purple} [ng: #1]}\fi}
\nc{\dhruv}[1]{\ifnum\Comments=1 {\color{magenta} [dr: #1]}\fi}
\nc{\BP}{\mathbb{P}}
\nc{\BI}{\mathbb{I}}
\nc{\midpoint}[1][\Phi,\phi_1,\phi_2]{\mu^{\star}_{{#1}}}
\nc{\fools}[3]{\MF_{#3}({#1}, {#2})}
\nc{\fool}[2]{\MF({#1},{#2})}
\nc{\clip}[2]{{\rm clip}\left[ \left. {#1} \right| {#2} \right]}
\nc{\imax}{\omega}
\DMO{\conv}{conv}
\nc{\MH}{\mathcal{H}}
\nc{\MV}{\mathcal{V}}
\nc{\MC}{\mathcal{C}}
\nc{\MI}{\mathcal{I}}
\nc{\st}{\star}
\nc{\lng}{\langle}
\nc{\rng}{\rangle}
\DMO{\OOPT}{opt}
\nc{\dopt}[2]{\ell_{\OOPT}({#1},{#2})}
\nc{\grad}{\nabla}
\nc{\MG}{\mathcal{G}}
\nc{\MP}{\mathcal{P}}
\nc{\PP}{\mathbb{P}}
\nc{\TT}{\mathbb{T}}
\nc{\TTmax}{\TT_{\max}}
\DMO{\REG}{Reg}
\DMO{\WREG}{wReg}
\nc{\reg}[2]{{\Delta}_{{#1}}({#2})}
\nc{\wreg}[2]{{\Delta}^{\rm w}_{{#1}}({#2})}
\nc{\Reg}[2]{{\REG}_{{#1}}({#2})}
\nc{\wReg}[2]{{\WREG}_{{#1}}({#2})}
\DMO{\Ham}{Ham}
\DMO{\Gap}{Gap}
\DMO{\GD}{GD}
\DMO{\GDA}{GDA}
\DMO{\EG}{EG}
\nc{\TE}{\til{\E}}
\nc{\Var}{\mathbb{V}}
\DMO{\Cov}{Cov}
\DMO{\OGDA}{OGDA}
\DMO{\Unif}{Unif}
\DMO{\Tr}{Tr}
\nc{\Qu}{\ul{Q}}
\nc{\Qo}{\ol{Q}}
\nc{\Ro}{\ol{R}}
\nc{\Vu}{\ul{V}}
\nc{\Vo}{\ol{V}}
\nc{\RanQ}{\Delta Q}
\nc{\RanV}{\Delta V}
\nc{\clipQ}{\Delta \breve{Q}}
\nc{\frzQ}{\Delta \mathring{Q}}
\nc{\clipV}{\Delta \breve{V}}
\nc{\clipdelta}{\breve{\delta}}
\nc{\cliptheta}{\breve{\theta}}
\nc{\delmin}{\Delta_{{\rm min}}}
\nc{\delmins}[1]{\Delta_{{\rm min},{#1}}}
\nc{\gapfinal}[1]{\max \left\{ \frac{\frzQ_{{#1}}^{k^\st}(x,a)}{2H}, \frac{\delmin}{4H} \right\}}
\nc{\post}[2]{R({#1}; {#2})}
\nc{\posts}[3]{R_{#3}({#1}; {#2})}
\nc{\VCdim}{d_{\mathsf{VC}}}
\nc{\Natdim}{d_{\mathsf{N}}}
\nc{\fatdim}[1][\discmg]{d_{\mathsf{fat},{#1}}}
\nc{\DSdim}{d_{\mathsf{DS}}}
\nc{\OIGdim}[1][\gamma]{d_{\mathsf{OIG},{#1}}}
\nc{\algnst}[1]{\begin{align*}#1\end{align*}}
\nc{\algn}[1]{\begin{align}#1\end{align}}
\nc{\matx}[1]{\left(\begin{matrix}#1\end{matrix}\right)}
\renewcommand{\^}[1]{^{(#1)}}
\nc{\nuu}{\nu}
\nc{\bel}[1]{\mathbf{b}({#1})}
\nc{\nbel}[1]{\bar{\mathbf{b}}({#1})}
\nc{\sbel}[2]{\mathbf{b}'_{#1}({#2})}
\nc{\nsbel}[2]{\bar{\mathbf{b}}'_{#1}({#2})}
\nc{\bv}{\mathbf{v}}
\nc{\bone}{\mathbf{1}}
\nc{\bX}{\mathbf{X}}
\nc{\bY}{\mathbf{Y}}
\nc{\bG}{\mathbf{G}}
\nc{\bz}{\mathbf{z}}
\nc{\bw}{\mathbf{w}}
\nc{\bA}{\mathbf{A}}
\nc{\bJ}{\mathbf{J}}
\nc{\bK}{\mathbf{K}}
\nc{\bb}{\mathbf{b}}
\nc{\ba}{\mathbf{a}}
\nc{\bc}{\mathbf{c}}
\nc{\bC}{\mathbf{C}}
\nc{\BR}{\mathbb R}
\nc{\BA}{\mathbb{A}}
\nc{\BC}{\mathbb C}
\nc{\bx}{\mathbf{x}}
\nc{\bS}{\mathbf{S}}
\nc{\bM}{\mathbf{M}}
\nc{\bR}{\mathbf{R}}
\nc{\bN}{\mathbf{N}}
\nc{\NN}{\mathbb{N}}
\nc{\by}{\mathbf{y}}
\nc{\sy}{y}
\nc{\sx}{x}
\nc{\MO}{\mathcal O}
\nc{\MU}{\mathcal{U}}
\nc{\ME}{\mathcal{E}}
\nc{\MN}{\mathcal{N}}
\nc{\MK}{\mathcal{K}}
\nc{\MM}{\mathcal{M}}
\nc{\MS}{\mathcal{S}}
\nc{\MT}{\mathcal{T}}
\nc{\BF}{\mathbb F}
\nc{\BQ}{\mathbb Q}
\nc{\MX}{\mathcal{X}}
\nc{\MA}{\mathcal{A}}
\nc{\MD}{\mathcal{D}}
\nc{\MB}{\mathcal{B}}
\nc{\MZ}{\mathcal{Z}}
\nc{\MJ}{\mathcal{J}}
\nc{\MW}{\mathcal{W}}
\nc{\MR}{\mathcal{R}}
\nc{\MY}{\mathcal{Y}}
\nc{\BZ}{\mathbb Z}
\nc{\BN}{\mathbb N}
\nc{\ep}{\epsilon}
\nc{\epbe}{\varepsilon_{\mathsf{BE}}}
\nc{\epout}{\varepsilon_{\mathsf{outlier}}}
\nc{\bellc}[1][h]{\MT_{#1}^\circ}
\nc{\vep}{\varepsilon}
\nc{\gapfn}[1]{\varepsilon_{#1}}
\nc{\ggapfn}[2]{\varphi_{#1}({#2})}
\nc{\epsahk}{\gapfn{0}}
\nc{\BH}{\mathbb H}
\nc{\BG}{\mathbb{G}}
\nc{\D}{\Delta}
\nc{\MF}{\mathcal{F}}
\nc{\One}[1]{\mathbbm{1}\{{#1}\}}
\nc{\bOne}{\mathbf{1}}
\nc{\Aopt}{\mathcal{A}^{\rm opt}}
\nc{\Amul}{\mathcal{A}^{\rm mul}}
\nc{\SQ}{\mathsf Q}
\nc{\DO}{\accentset{\circ}{\D}}
\nc{\mf}{\mathfrak}
\nc{\mfp}{\mathfrak{p}}
\nc{\mfq}{\mf{q}}
\nc{\Sp}{\mbox{Spec}}
\nc{\Spm}{\mbox{Specm}}
\nc{\hookuparrow}{\mathrel{\rotatebox[origin=c]{90}{$\hookrightarrow$}}}
\nc{\hookdownarrow}{\mathrel{\rotatebox[origin=c]{-90}{$\hookrightarrow$}}}
\nc{\hra}{\hookrightarrow}
\nc{\tra}{\twoheadrightarrow}
\nc{\sgn}{{\rm sgn}}
\nc{\aut}{{\rm Aut}}
\nc{\Hom}{{\rm Hom}}
\nc{\img}{{\rm Im}}
\DMO{\id}{Id}
\DMO{\supp}{supp}
\DMO{\KL}{KL}
\nc{\kld}[2]{\KL({#1}||{#2})}
\nc{\ren}[2]{D_2({#1}||{#2})}
\nc{\chisq}[2]{\chi^2({#1}||{#2})}
\nc{\tvd}[2]{D_{\mathsf{TV}}({#1}, {#2})}
\nc{\hell}[2]{D_{\mathsf{H}}^2({#1}, {#2})}
\nc{\dbi}[3][\pi]{D_{\mathsf{bi}}^{#1}({#2} \| {#3})}
\DMO{\BSS}{BSS}
\DMO{\BES}{BES}
\DMO{\BGS}{BGS}
\DMO{\poly}{poly}
\nc{\indep}{\perp}
\DMO{\sink}{sink}
\nc{\fp}[1]{\MP_1({#1})}
\nc{\BO}{\mathbb{O}}
\nc{\BT}{\mathbb{T}}
\nc{\RR}{\mathbb{R}}
\nc{\Gradient}{\nabla}
\DMO{\diag}{diag}
\nc{\norm}[1]{\left \lVert #1 \right \rVert}
\nc{\EE}{\mathop{\mathbb{E}}}
\nc{\MQ}{\mathcal{Q}}
\nc{\ML}{\mathcal{L}}
\nc{\cPhi}{\bar \Phi}
\nc{\SA}{\mathscr{A}}
\DMO{\PR}{Pr}
\renewcommand{\Pr}{\PR}
\nc{\E}{\mathbb{E}}
\nc{\ra}{\rightarrow}
\nc{\hc}{\{0,1\}^n}
\nc{\pmhc}[1]{\{-1,1\}^{#1}}
\nc{\Dbnd}{D}
\nc{\Bbnd}{B}
\nc{\brew}{b_{\mathsf{rew}}}
\nc{\binit}{b_{\mathsf{init}}}
\nc{\Cvg}{\mathscr{C}}
\nc{\Ocon}{\mathcal{O}^{\mathsf{con,w}}}
\nc{\Oconmenu}{\mathcal{O}^{\mathsf{con,menu}}}
\nc{\Oerm}{\mathcal{O}^{\mathsf{erm,w}}}
\nc{\Oerms}{\mathcal{O}^{\mathsf{erm,s}}}
\nc{\Oconst}{\mathcal{O}^{\mathsf{con,s}}}
\nc{\Orange}{\mathcal{O}^{\mathsf{range}}}
\nc{\lbin}{\ell^{\mathsf{bin}}}
\nc{\lmc}{\ell^{\mathsf{mc}}}
\nc{\labs}{\ell^{\mathsf{abs}}}
\nc{\factoring}{\textsf{FACTORING}\xspace}
\nc{\Hprime}{\MH^{\mathsf{primes}}}
  \title[PAC Learning with Weak Oracles]{Is Efficient PAC Learning Possible with an \\ Oracle That Responds ``Yes'' or ``No''?}
\title{Is Efficient PAC Learning Possible with an \\ Oracle That Responds ``Yes'' or ``No''?}
\author{Constantinos Daskalakis \\ {\small MIT CSAIL \& Archimedes AI} \\ \url{costis@csail.mit.edu} \and Noah Golowich \\ {\small MIT CSAIL} \\ \url{nzg@mit.edu}}
\date{June 17, 2024}
\begin{document}
\maketitle

\begin{abstract}
  The \emph{empirical risk minimization (ERM)} principle has been highly impactful in machine learning, leading both to near-optimal theoretical guarantees for ERM-based learning algorithms as well as driving many of the recent empirical successes in deep learning. In this paper, we investigate  the question of whether the ability to perform ERM, which computes a hypothesis minimizing empirical risk on a given dataset, is necessary for efficient learning: in particular, is there a weaker oracle than ERM  which can nevertheless enable learnability? We answer this question affirmatively, showing that in the realizable setting of PAC learning for binary classification, a concept class can be learned using an oracle which only returns a \emph{single bit} indicating whether a given dataset is realizable by some concept in the class. The sample complexity and oracle complexity of our algorithm depend polynomially on the VC dimension of the hypothesis class, thus showing that there is only a polynomial price to pay for use of our weaker oracle. Our results extend to the agnostic learning setting with a slight strengthening of the oracle, as well as to the partial concept, multiclass and real-valued learning settings. In the setting of partial concept classes, prior to our work no oracle-efficient algorithms were known, even with a standard ERM oracle. Thus, our results address a question of \cite{alon2021theory} who asked whether there are algorithmic principles which enable efficient learnability in this setting. 
\end{abstract}

\colt{
  \begin{keywords}
PAC learning, ERM oracle, One-inclusion graph, Partial concept class
  \end{keywords}
  }

\section{Introduction}
Many of the successful techniques in modern machine learning proceed by specifying a large function class $\MH$, such as a class of neural networks, and optimizing over $\MH$ to find a minimizer of a loss function on a finite dataset. %
This approach, known as \emph{empirical risk minimization} (ERM), has long been known to lead to near-optimal PAC learning guarantees in fundamental settings such as binary classification and regression \cite{vapnik1968algorithms,vapnik1974theory,blumer1989learnability,bartlett1998prediction,alon1997scale}. Due to the ability of heuristics such as gradient descent to approximately implement ERM for neural network function classes, the ERM principle also lies behind numerous empirical successes in supervised learning \cite{krizhevsky2012imagenet}. Inspired by these successes, various works have also investigated to what extent an \emph{oracle} which can implement ERM for a given function class is useful for learning problems beyond the PAC setting, including online learning \cite{block2022smoothed,haghtalab2022oracle,assos2023online}, bandits \cite{simchi2022bypassing}, and reinforcement learning \cite{agarwal2020flambe,mhammedi2023representation}.

In this paper, we return to the basics and ask: is ERM necessary? Or can we efficiently perform learning tasks with a \emph{weaker} oracle than an ERM oracle? For the foundational problem of realizable PAC learning, it is known that a \emph{consistency oracle}, which returns a hypothesis $\hat h$ in the class $\MH$ which is consistent with a given dataset (and fails if one does not exist), is still sufficient for efficient learning. Perhaps the most drastic way to further weaken such a consistency oracle is as follows: suppose that the oracle does not return $\hat h$, and only returns a \emph{single bit} indicating whether such a $\hat h\in \MH$ exists which fits the data. We refer to such an oracle as a \emph{weak consistency oracle}. Is there a PAC learning algorithm which learns efficiently with respect to this oracle? Perhaps surprisingly, we find that the answer is \emph{yes}. Moreover, this positive answer extends to the agnostic PAC setting, if we slightly generalize the weak consistency oracle to return the \emph{value} of the empirical risk minimizer $\hat h$ on the dataset, but not $\hat h$ itself; we call such an oracle a \emph{weak ERM oracle}. 

\paragraph{Motivation.} We discuss several possible sources of motivation behind a weakening of an ERM (or consistency) oracle. First, note that a weak consistency oracle, which only returns a single bit indicating whether a consistent hypothesis exists, corresponds to a \emph{decision problem} on $\MH$, whereas a standard consistency oracle corresponds to a \emph{search problem}. For many natural classes of problems (e.g., see \cite{agrawal2004primes,reith2003optimal,khuller1991planar}), the decision variant is known to be computationally cheaper than the search variant.\footnote{More generally, such separations can emerge for non-self-reducible problems in NP.} Based off of such a separation, in \cref{prop:hprimes}, we provide a concrete example of a class for which implementing a weak consistency oracle is possible in polynomial time but implementing a standard consistency oracle is not, under standard computational assumptions. Thus, in such a case, our approach, via a weak consistency oracle, will lead to improved computational guarantees over the standard approach which calls a consistency oracle.

From a more theoretical perspective, the use of weak consistency and weak ERM oracles yields PAC learning bounds that do not rely on uniform convergence, in contrast to some prior analyses of ERM. A notable setting where PAC learning is known to be statistically feasible but uniform convergence fails is that of learning with \emph{partial concept classes} \cite{alon2021theory,long2001on}, which in turn has numerous applications including to regression \cite{long2001on,bartlett1998prediction}, learning with fairness constraints such as multicalibration \cite{hu2023comparative}, adversarially robust learning \cite{attias2022characterization}, and others (see \cref{sec:extensions}). Our results provide the first (ERM) oracle-efficient learning algorithm for partial concept classes, which addresses a question asked in \cite{alon2021theory}. We emphasize that even with a \emph{standard} ERM oracle, no efficient algorithm was known, whereas our guarantees for partial concept classes hold with a \emph{weak} ERM oracle. 

\subsection{Overview of results}
First, we consider the setting of PAC learning of \emph{partial concept classes} \cite{alon2021theory,long2001on}, which are classes $\MH \subset \{0,1,* \}^\MX$ for some domain space $\MX$. Hypotheses $h \in \MH$ should be thought of as undefined at points $x \in \MX$ for which $h(x) = *$ (see \cref{sec:prelim} for a formal definition).\footnote{The unfamiliar reader can simply consider the special case of those $\MH$ whose hypotheses never take the value $*$, which corresponds to standard binary classification.} Our main results for this setting are as follows:
\begin{itemize}
\item In the realizable setting of PAC learning, any partial concept class $\MH$ of VC dimension at most $\VCdim$ can be learned by an algorithm that makes polynomially many calls to a \emph{weak consistency oracle} $\Ocon$, which receives as input a dataset $S = \{ (x_i, y_i) \}_{i \in [n]}$ and returns $\True$ if there is $h \in \MH$ satisfying $h(x_i) = y_i \in \{0,1\}$ for all $i$, and $\False$ otherwise. The sample complexity scales as $\tilde O(\VCdim^3)$ (see first part of \cref{thm:partial-main}). 
\item In the agnostic setting of PAC learning, the same guarantee holds, except with respect to a \emph{weak ERM oracle} $\Oerm$ which receives as input $S = \{ (x_i, y_i) \}_{i \in [n]}$ and returns the value $\min_{h \in \MH} \frac 1n \sum_{i=1}^n \One{h(x_i) \neq y_i \vee h(x_i) = * } \in \{0, 1/n, \ldots, 1\}$ (see second part of \cref{thm:partial-main}).
\end{itemize}
We proceed to generalize our upper bounds beyond the binary setting, namely the multiclass and real-valued (regression) settings:
\begin{itemize}
\item For $K \in \BN$, any multiclass concept class $\MH \subset [K]^\MX$ of Natarajan dimension at most $\Natdim$ can be PAC learned by an algorithm that uses $n = \tilde O(\Natdim^3 \log^4 K)$ samples and makes $K \cdot \poly(n)$ calls to a weak consistency oracle (or to a weak ERM oracle in the agnostic setting); see \cref{thm:multiclass-main}. 
\item Any real-valued class $\MH \subset [0,1]^\MX$ whose fat-shattering dimension at scales $\discmg\in (0,1)$ is at most $\fatdim$ can be agnostically PAC learned by an algorithm that uses $n$ samples as long as $n \gtrsim \fatdim^3$ for an appropriate chocie of $\discmg$, using $\poly(n)$ calls to a weak ERM oracle. Moreover, a similar guarantee holds for the realizable setting; see \cref{thm:reg-main}. 
\end{itemize}
In the setting of partial concept classes as well as the \emph{agnostic} setting of regression, VC dimension and fat-shattering dimension, respectively, are known to characterize learnability. Thus, our results above  show that there is at most a polynomial price to pay in terms of sample complexity if we require efficiency with respect to a weak ERM oracle. In contrast, in the multiclass and realizable regression settings, the optimal sample complexity is characterized by the \emph{Daniely-Schwartz dimension} \cite{daniely2014optimal,brukhim2022characterization} and the \emph{one-inclusion graph dimension} \cite{attias2023optimal}, respectively. These quantities can be arbitrarily smaller than our corresponding sample complexities above, namely $\Natdim \log K$ and $\fatdim$, respectively.

Can our bounds for the multiclass and realizable regression settings be improved to get near-optimal sample complexity while retaining oracle efficiency? Our final results show a negative answer to this question, even when the algorithm is given a standard ERM oracle: %
\begin{itemize}
\item Multiclass concept classes of Daniely-Schwartz dimension 1 are not PAC learnable with any finite number of ERM oracle queries; see \cref{thm:mc-lb}.
\item In the realizable setting of regression, real-valued classes with one-inclusion graph dimension 1 are not PAC learnable with any finite number of ERM oracle queries; see \cref{thm:reg-lb}. 
\end{itemize}

\paragraph{Techniques.} Our results rest on a technique to efficiently implement a randomized variant of the \emph{one-inclusion graph algorithm}, formalized in \cref{thm:weak-oig} (see also \cref{def:oig}). In particular, we show first that we can obtain a weak learner by constructing a random orientation of the one-inclusion graph with bounded out-degree, as follows. For each edge we wish to orient, we take a random walk starting from each of its endpoints and inspect the distribution of hitting times of the complement of the one-inclusion graph. The vertex whose random walk reaches the complement of the one-inclusion graph sooner should have the edge directed away from it. We then use standard boosting techniques to improve the weak learner to a strong learner. A detailed proof overview may be found in \cref{sec:weak-learner}.

\paragraph{Related work: learning by refuting.} Our work is closely related to \cite{vadhan2017learning,kothari2018improper}, which relate the PAC learnability of binary hypothesis classes with finite VC dimension to the existence of \emph{refutation algorithms} for them. In particular, by noting that a weak consistency oracle for $\MH$ RRHS-refutes $\MH$ in the sense of \cite[Definition 4]{vadhan2017learning}, Theorem 5 of \cite{vadhan2017learning} recovers the first part of \cref{thm:partial-main} in the case of a \emph{total} binary hypothesis class $\MH$. In a similar manner, by noting that a weak ERM oracle for $\MH$ allows one to implement a $\delta$-refutation algorithm for $\MH$ in the sense of \cite[Definition 2.1]{kothari2018improper}, Lemma 3.2 of \cite{kothari2018improper} establishes a quantitatively weaker version of the second part of \cref{thm:partial-main} in the case of a \emph{total} binary hypothesis class $\MH$. In particular, uniform convergence implies that, for any $\delta > 0$, a $\delta$-refutation algorithm for $\MH$ with sample complexity $m_\delta := \tilde O(\VCdim(\MH)/\delta^2)$ may be implemented using a single call to a weak ERM oracle, and using this bound with $\delta = \ep$ in \cite[Lemma 2.1]{kothari2018improper} (which gives sample complexity $O(m_\delta^3/\ep^2)$ to obtain error $\ep + \delta$) yields overall sample complexity of $\tilde O(\VCdim(\MH)^3/\ep^8)$. In contrast, \cref{thm:partial-main} obtains the better bound of $\tilde O(\VCdim(\MH)^3/\ep^2)$, which has near-optimal dependence on $\ep$.

We also remark that our techniques, which interpret the construction of a weak learner as a random walk on the one-inclusion graph of the hypothesis class, differ from \cite{vadhan2017learning,kothari2018improper}, which did not have such an interpretation.

\paragraph{Open questions.} Taken together, our results represent a comprehensive treatment of the weak oracle efficiency of PAC learning in the standard settings of partial, multiclass, and real-valued learning. One intruiging question that remains is closing the gap between our $\tilde O(\VCdim^3)$ sample complexity in the binary setting and the fact that only $O(\VCdim)$ samples are required when one is allowed access to a standard ERM oracle. In particular, is there a (polynomial-sized) cost in sample complexity to pay for using a weak oracle? Analogous questions can be asked in the multiclass and real-valued settings. Along different lines, it would be interesting to investigate the use of weaker notions of ERM oracles in more complex learning situations such as contextual  bandits, online learning, and reinforcement learning. 

\subsection{Broader perspectives}
Broadly speaking, our work connects to an extensive body of literature, both in empirical and theoretical communities, on  query-efficient learning. Spurred by the increasing prevelance of proprietary models and the availability of APIs to query inputs to these models at a small cost, recent empirical research has studied to what extent such API calls, which typically each reveal a small amount of information, can be used to reconstruct information such as the model's training data or an approximation to the model itself. For instance,  \cite{tramer2016stealing} showed that several types of models, including decision trees, SVMs, and neural networks can be reconstructed to high fidelity using a relatively small number of queries to evaluate the model at chosen inputs. Similar results have been shown for various specific domains, including sentence classification \cite{krishna2020thieves}, machine translation \cite{wallace2020imitation}, and sentence embedding encoders \cite{dziedzic2023sentence}.\footnote{See also \cite{dosovitskiy2015inverting,morris2023language} and references within for work on the related problem of inverting trained models.} These papers on ``model stealing'' roughly parallel an old line of work in learning theory on learning from queries \cite{angluin1988queries}, in which one can make several types of queries to the ground-truth hypothesis $h^\st$, such as a \emph{membership query} where one specifies $x$ and receives $h^\st(x)$. \cite{angluin1988queries} and many follow-up works study the question of how many such queries are sufficient for learning $h^\st$. 

The high-level implications of the works mentioned above parallel our own in that one often arrives at the conclusion that a surprisingly large amount of information be gleaned from a relatively small number of queries, each of which returns a relatively small number of bits. At a technical level, the above papers differ from our own in that the queries performed by the learning algorithm depend explicitly on the ground-truth hypothesis $h^\st$, whereas the oracle queries we consider are queries to the \emph{hypothesis class} $\MH$ without any mention of $h^\st$. Nevertheless, with the advent of methods such as in-context learning \cite{brown2020language}, which allows a fully trained large language model to simulate learning algorithms such as gradient descent, the distinction between these two settings is somewhat blurred. In particular, one could imagine a setting where a proprietary large language model itself serves as an ERM oracle for simpler classes, and thus our results demonstrating that queries which return only a few bits of information still permit learning could be informative. We leave it to future work to elucidate the question of whether the success of such weak oracles ultimately amounts to a feature (allowing learning without giving too much away) or a bug (giving enough away to allow reconstruction attacks). \noah{last sentence confusing?}

\section{Preliminaries}
\label{sec:prelim}
Consider a domain $\MX$, a label set $\MY$, and a \emph{concept class} $\MH \subset \MY^\MX$. Elements $h \in \MH$ are known as \emph{concepts} (or \emph{hypotheses}). In this paper, we consider the following different label sets $\MY$:
\begin{itemize}
\item If $\MY = \{0, 1, * \}$, then we say that $\MH$ is a \emph{partial (binary) concept class} \cite{alon2021theory}. A hypothesis which outputs a label of $*$ on some $x\in \MX$ should be interpreted as being undefined at $x$. In the special case that no hypothesis ever outputs $*$, a partial binary concept class is known as a \emph{total} binary concept class. We define the binary loss function $\lbin(y, y') := \One{y \neq y' \vee y = * \vee y' = *},$ for $y,y' \in \{0,1,* \}$. In words, we suffer a loss for true label $y$ when predicting $y'$ if we predict the wrong label \emph{or} either $y,y'$ is $*$.
\item If $\MY = [K]$, then $\MH$ is said to be a \emph{multiclass concept class}. We define the \emph{multiclass loss function} $\lmc(y, y') := \One{y \neq y'}$, for $y,y' \in [K]$.
\item If $\MY = [0,1]$, then $\MH$ is said to be a \emph{real-valued concept class}. We define the \emph{absolute loss function} $\labs(y,y') := |y-y'|$, for $y,y' \in [0,1]$. 
\end{itemize}
Throughout the paper, all concept classes $\MH \subset \MY^\MX$ will be understood as being either partial, multiclass, or real-valued concept classes.

 \subsection{PAC learning}
 Given a distribution $P \in \Delta(\MX \times \MY)$, a loss function $\ell : \MY \times \MY \to [0,1]$, and a hypothesis $h : \MX \to \MY$, we define $\err_{P, \ell}(h) := \E_{(x,y) \sim P}[\ell(h(x), y)]$. Given a sequence $S  \in (\MX \times \MY)^n$, which we refer to as a \emph{sample}, a loss function $\ell : \MY \times \MY \to [0,1]$, and $h : \MX \to \MY$, we define $\eerr_{S, \ell}(h) := \frac 1n \sum_{(x,y) \in S} \One{h(x) \neq y}$. When working with partial, multiclass, or real-valued concept classes, we will typically take $\ell$ to be the corresponding respective loss function among $\lbin, \lmc, \labs$. Thus, unless otherwise stated, in such situations we will write $\err_{P}(\cdot)$ in place of $\err_{P, \ell}(\cdot)$ and $\eerr_{S}(\cdot)$ in place of $\eerr_{S,\ell}(\cdot)$, where $\ell \in \{\lbin, \lmc,\labs\}$ is understood to be the appropriate choice. We denote samples using curly braces, but emphasize that samples should be interpreted as sequences of $n$ examples (in particular, examples can be repeated).

 Given a concept class $\MH\subset \MY^\MX$, a \emph{sample} $S = \{ (x_i, y_i) \}_{i\in [n]}  \subset (\MX \times \MY)^n$ is said to be \emph{$\MH$-realizable} if there is $h \in \MH$ so that $h(x_i) = y_i \neq * $ for each $i \in [n]$.  Moreover, a distribution $P \in \Delta(\MX \times \MY)$ is defined to be \emph{$\MH$-realizable} if the following holds: in the case that $\MH$ is a partial concept class, for any $n \in \BN$, then a sample $S \sim P^n$ is $\MH$-realizable with probability 1; in the case that $\MH$ is a multiclass or real-valued concept class, then $\inf_{h \in \MH} \err_P(h) = 0$. We remark that these two conditions coincide if $P$ has finite or countable support (see \cite[Lemma 33]{alon2021theory}).

 \paragraph{Realizable oracle-efficient PAC learning.} In the problem of \emph{realizable PAC learning} (or simply \emph{PAC learning}), for a concept class $\MH \subset \MY^\MX$ and a $\MH$-realizable distribution $P$, an algorithm $\Alg$ receives a sample $S \sim P^n$ and outputs a hypothesis $H : \MX \to \MY$. While often it is assumed that $\Alg$ has full knowledge of the class $\MH$, we are concerned with the setting in which $\Alg$'s only access to $\MH$ comes in the form of an oracle $\MO : (\MX \times \MY)^\st \times \{0,1\}^\st \to \{0,1\}^\st$, which takes as input a sequence of examples $(x,y) \in \MX \times \MY$ as well as a string of bits, and outputs a string of bits. While much prior work in the literature has focused on oracles, such as a (strong) ERM oracle (\cref{def:strong-erm}), which can return \emph{elements of $\MH$}, the oracles we consider are weaker in the sense that their only outputs are strings of bits, which will be quite short.\footnote{Of course, elements of $\MH$ can be represented with $\log |\MH|$ bits, but our oracles will always return strings of length $\poly(\log d, \log 1/\ep, \log\log 1/\delta)$, which can be infinitely smaller than $\log |\MH|$. Here $d$ denotes a dimension quantity (e.g., VC dimension) and $\ep,\delta$ are accuracy parameters.} For an input $(S, z) \in (\MX \times \MY)^\st \times \{0,1\}^\st$ to an oracle $\MO$, we let the \emph{size} of $(S,z)$ be $|S| + |z|$, namely the sum of the number of examples in $S$ and the number of bits in $z$. We say that an algorithm $\Alg$ has \emph{cumulative query cost $q$} if the sum of the sizes of the inputs for all oracle calls that $\Alg$ makes to $\MO$ is at most $q$. Note that the number of oracle calls made by $\Alg$ is bounded above by $q$. 
 \begin{definition}[Oracle-efficient PAC learning]
   \label{def:oracle-pac-learning}
   Let domain and label spaces $\MX, \MY$ be given. %
   Given $n \in \BN$, let $\Alg$ be an algorithm which takes as input a dataset $S \in (\MX \times \MY)^n$,  $x \in \MX$, and a string of uniformly random bits $R \in \{0,1\}^\st$, has cumulative query cost $q$ to an oracle $\MO : (\MX \times \MY)^\st \times \{0,1\}^\st \to \{0,1\}^\st$, and outputs some value $\Alg_R(S,x) \in \MY$, which is a deterministic function of $R,S,x$, and the results of the oracle calls. 

    Let $\MH \subset \MY^\MX$ be a hypothesis class and $\MO$ be an oracle as above.   Given $\ep, \delta \in (0,1)$, we say that the class $\MH$ is \emph{$(\MO; \ep, \delta)$-PAC learnable} by $\Alg$ with \emph{sample complexity $n$} and \emph{oracle complexity $q$} if %
   the following holds. Letting $H_{S,R}(x) := \Alg_R(S,x)$, we have \colt{$\Pr_{S \sim P^n, R} \left( \err_P(H_{S,R}) \leq \ep \right) \geq 1-\delta$.}
 \arxiv{  \begin{align}
\Pr_{S \sim P^n, R} \left( \err_P(H_{S,R}) \leq \ep \right) \geq 1-\delta\nonumber.
   \end{align}}
 \end{definition}
 We emphasize that in the above definition $\Alg$ has no knowledge of $\MH$ (apart from its calls to $\MO$); of course, the oracle $\MO$ will depend on $\MH$. Often we will slightly abuse terminology by stating that $\Alg$ ``outputs'' the hypothesis $H_{S,R}$. 

 \paragraph{Agnostic oracle-efficient PAC learning.} The setting of \emph{agnostic PAC learning} is similar to the setting of realizable PAC learning, except that the distribution $P \in \Delta(\MX \times \MY)$ is no longer required to be realizable. As such, we measure the performance of the output hypothesis of an algorithm by comparing to the best hypothesis in the class $\MH$. In the case that $\MH$ is a multiclass or real-valued class, the error of the best-performing concept in $\MH$ on $P$ is defined as $\err_P(\MH) := \inf_{h \in \MH} \err_P(h)$. In the case that $\MH$ is a partial (binary) concept class, we instead define $\err_P(\MH) := \lim_{n \to \infty} \E_{S \sim P^n} \left[ \min_{h \in \MH} \eerr_S(h) \right]$. \cite[Lemma 39]{alon2021theory} shows that the limit exists, and that when $\MH$ is a total (binary) class, the two notions of $\err_P(\MH)$ coincide.\footnote{See Footnote 12 of \cite{alon2021theory} for discussion on why the particular choice of $\err_P(\MH)$ is made.}
 \begin{definition}[Oracle-efficient agnostic PAC learning]
   \label{def:oracle-agnostic-learning}
   Using the setup and terminology of \cref{def:oracle-pac-learning}, the class $\MH$ is said to be \emph{$(\MO; \ep, \delta)$}-agnostically PAC learnable by $\Alg$ with sample complexity $n$ and oracle complexity $q$ if, for $H_{S,R}(x) := \Alg_R(S,x)$, we have
   \begin{align}
\Pr_{S \sim P^n, R} \left( \err_P(H_{S,R}) \leq \err_P(\MH) + \ep \right) \geq 1-\delta\nonumber.
   \end{align}
 \end{definition}

 \subsection{Oracles}
 \label{sec:oracles}
 In this section, we formally introduce the oracles that our algorithms will use. 
 We begin with a \emph{weak consistency oracle}, which will be used by our realizable PAC learning algorithms for partial and multiclass concept classes.
 \begin{definition}[Weak consistency oracle]
   \label{def:weak-con}
Given a concept class $\MH\subset \MY^\MX$, a \emph{(weak) consistency oracle} $\Ocon$ for $\MH$ is defined as follows: it takes as input a sample $S  \in (\MX \times \MY)^n$, and $\Ocon(S)$ outputs $\mathsf{True}$ if $S$ is $\MH$-realizable and $\mathsf{False}$ otherwise. 
\end{definition}

For real-valued learning, a weak consistency oracle is not sufficient for learning, due to the fact that labels in $\MY = [0,1]$ can take infinitely many values. Therefore, for realizable PAC learning in the real-valued setting, we make use of a \emph{range consistency oracle}, which is a natural generalization of a weak consistency oracle when one allows some margin of error in label space:
\begin{definition}[Range consistency oracle]
  \label{def:range-con}
  Given a real-valued concept class $\MH \subset [0,1]^\MX$, a \emph{range consistency oracle} $\Orange$ for $\MH$ is defined as follows: it takes as input a sample $S = \{ (x_i, \ell_i, u_i) \}_{i \in [n]} \in (\MX \times [0,1]^2)^n$, and outputs $\mathsf{True}$ if there is some $h \in \MH$ so that $\ell_i \leq h(x_i) \leq u_i$ for all $i \in[ n]$, and $\mathsf{False}$ otherwise. 
\end{definition}

The consistency oracles defined above are not sufficient for oracle-efficient agnostic PAC learning: %
the challenge is that even approximating the \emph{value} of the empirical risk minimizer $\min_{h \in \MH} \eerr_S(h) \in [0,1]$ on a sample $S \in (\MX \times \{0,1\})^n$ can require many weak consistency queries to $\MH$. %
Rather surprisingly, it turns out that an oracle which returns only the value of the empirical risk minimizer on a sample, as defined formally below, is sufficient for efficient agnostic PAC learnability.

\begin{definition}[Weak ERM oracle]
  \label{def:weak-erm}
Consider a concept class $\MH \subset \MY^\MX$, and a real-valued loss function $\ell : \MY \times \MY \to [0,1]$. A \emph{weak ERM oracle} $\Oerm$ is a mapping which takes as input a dataset $S \in (\MX \times \MY)^n$ and outputs the value $\min_{h \in \MH}  \eerr_{S,\ell}(h) \in [0,1]$.\footnote{When $\ell \in \{\lbin, \lmc\}$ is binary-valued, $\min_{h \in \MH}\eerr_{S,\ell}(h) \in \{0, 1/n, \ldots, 1 \}$ can be represented with $O(\log n)$ bits. In the real-valued setting, while this is no longer the case, one can assume that $\Oerm$ returns only the $\log(1/\ep)$ most significant bits of the empirical risk, at the cost of an $O(\ep)$ error that propagates through the PAC bounds. For simplicity, we ignore such considerations relating to arithmetic precision.}
\end{definition}

Finally, for reference, we introduce the standard notion of ERM oracle, which returns a hypothesis that minimizes the empirical risk on a sample; to contrast with a weak ERM oracle, we call such an oracle a strong ERM oracle.
\begin{definition}[Strong ERM oracle]
  \label{def:strong-erm}
  Consider a concept class $\MH \subset \MY^\MX$ and a real-valued loss function $\ell : \MY \times \MY \to [0,1]$. A \emph{strong ERM oracle} $\Oerms$ is a mapping which takes as input a dataset $S \in (\MX \times \MY)^n$ and outputs some concept in $\argmin_{h \in \MH} \eerr_{S,\ell}(h) \in \MH$. 
\end{definition}

\arxiv{\colt{
  \section{Additional Preliminaries}
In this section, we give some additional preliminaries which will be useful in the proofs. 
  }
\arxiv{\subsection{Sample compression schemes}}
\label{sec:scs}
Our techniques will involve the use of sample compression schemes, which we proceed to define.
\begin{definition}[Sample compression scheme; \cite{littlestone2003relating,david2016supervised}]
Fix a domain $\MX$ and a label set $\MY$.  A \emph{compression scheme} for the tuple $(\MX, \MY)$ is a pair $(\kappa, \rho)$, consisting of a compression function $\kappa : (\MX \times \MY)^\st \to (\MX \times \MY)^\st \times \{0,1\}^\st$ and a \emph{reconstruction function} $\rho : (\MX \times \MY)^\st \times \{0,1\}^\st \to \MY^\MX$, satisfying the following property. For any sequence $S \in (\MX \times \MY)^\st$, $\kappa(S)$ evaluates to some tuple $(S', B) \in (\MX \times \MY)^\st \times \{0,1\}^\st$, where $S'$ is a sequence of elements of $S$.

  For $S \in (\MX \times \MY)^m$, writing $(S', B) := \kappa(S)$, define $|\kappa(S)| := |S'| + |B|$, i.e., to denote the sum of the number of samples in $S'$ and the length of $B$.  The \emph{size} of the compression scheme $(\kappa, \rho)$ for \emph{$m$-sample datasets} is $|\kappa| := \max_{S \in (\MX \times \MY)^{\leq m}} |\kappa(S)|$.

  For a (partial, multiclass, or real-valued) concept class $\MH$, a \emph{sample compression scheme for $\MH$} is a compression scheme $(\kappa, \rho)$, so that for every $\MH$-realizable sequence $S \in (\MX \times \MY)^m$, $\rho(\kappa(S))$ correctly classifies every point in $S$, i.e., $\frac 1n \sum_{(x,y) \in S} \ell(\rho(\kappa(S)), y) = 0$, where $\ell \in \{ \lbin, \lmc, \labs \}$ is the appropriate loss function corresponding to $\MH$. 
\end{definition}

\cref{lem:gen-compression} below shows that compression schemes of bounded size generalize. 
\begin{lemma}[Generalization-by-compression; Theorem 2.1 of \cite{david2016supervised}]
  \label{lem:gen-compression}
  There is a constant $C > 0$ so that the following holds. Consider any domain $\MX$ and label set $\MY$, together with a loss function $\ell : \MY \times \MY \to [0,1]$. For any compression scheme $(\kappa, \rho)$, for any $n \in \BN$, and $\delta \in (0,1)$, for any distribution $P \in \Delta(\MX \times \{0,1\})$, the following holds with probability $1-\delta$ over $S \sim P^n$:
  \begin{align}
    & | \err_{P,\ell}(\rho(\kappa(S))) - \eerr_{S, \ell}(\rho(\kappa(S))) | \nonumber\\
    \leq & C \sqrt{\eerr_{S, \ell}(\rho(\kappa(S))) \cdot \frac 1n \left(|\kappa(S)|\log(n) + \log \frac{1}{\delta}\right)} + C \cdot \frac 1n \left(|\kappa(S)|\log(n) + \log \frac{1}{\delta}\right)\nonumber.
  \end{align}
In particular, if $\eerr_{S, \ell}(\rho(\kappa(S))) = 0$, then
  \begin{align}
\err_{P, \ell}(\rho(\kappa(S))) \leq & C \cdot \frac 1n \left(|\kappa(S)|\log(n) + \log \frac{1}{\delta}\right)\nonumber.
  \end{align}
\end{lemma}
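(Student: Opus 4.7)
The plan is to follow the classic ``held-out sample'' argument for compression-based generalization, using Bernstein's inequality to obtain the fast (multiplicative) rate and a union bound over all possible compressed outputs. The core observation is that once the indices of the compressed subsample and the side-information bit string are fixed, the reconstructed hypothesis is a \emph{fixed} function that is independent of the remaining samples.

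First I would stratify by the realized compression size. For each $k \in \{0,1,\ldots,n\}$ and each bit-length $b \in \{0,1,\ldots,B_{\max}\}$ (where $B_{\max}$ can be taken as $n\log|\MY|$ without loss since one can encode $S$ itself in this many bits), I consider the event that $\kappa(S)$ picks out indices $I \in \binom{[n]}{k}$ and a bit string $B \in \{0,1\}^b$. Define $h_{I,B} := \rho((x_i,y_i)_{i\in I}, B)$. Since $h_{I,B}$ depends only on $(I,B)$ and on the values of the samples indexed by $I$, the remaining $n-k$ samples are i.i.d.\ from $P$ and independent of $h_{I,B}$. Letting $\eerr_{I^c,\ell}(h) := \frac{1}{n-k}\sum_{i\notin I}\ell(h(x_i),y_i)$, Bernstein's inequality applied to the $n-k$ bounded random variables $\ell(h_{I,B}(x_i),y_i)\in[0,1]$ gives, for any $\delta'\in(0,1)$, with probability at least $1-\delta'$,
$$
\bigl|\eerr_{I^c,\ell}(h_{I,B}) - \err_{P,\ell}(h_{I,B})\bigr| \;\lesssim\; \sqrt{\frac{\err_{P,\ell}(h_{I,B})\log(1/\delta')}{n-k}} + \frac{\log(1/\delta')}{n-k}.
$$

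Next I would union bound over all eligible $(I,B)$. There are at most $\binom{n}{k}2^b \le n^k 2^b$ such pairs for fixed $(k,b)$; taking $\delta' := \delta/(n\cdot B_{\max}\cdot n^k 2^b)$ absorbs a further union bound over $(k,b)$ into logarithmic factors, so that on the global good event (which has probability at least $1-\delta$) the previous deviation bound holds simultaneously for \emph{every} $(I,B,k,b)$ with $\log(1/\delta') = O(\log(1/\delta) + (k+b)\log n) = O(\log(1/\delta) + |\kappa(S)|\log n)$ whenever $(I,B)$ is the specific pair produced by $\kappa$ on $S$. The ``$A\sqrt{B}$'' form can then be converted to the ``$\sqrt{\eerr_S\cdot B}$'' form appearing in the lemma either by applying AM--GM to both sides of the Bernstein bound or by using the standard trick that $\sqrt{\err_P\cdot B}\le \sqrt{\eerr_{I^c}\cdot B} + O(B)$ obtained by solving the quadratic in $\sqrt{\err_P}$.

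Finally I would relate the held-out error $\eerr_{I^c,\ell}(h_{I,B})$ to the full empirical error $\eerr_{S,\ell}(\rho(\kappa(S)))$ that actually appears in the statement. Since $\eerr_{I^c,\ell}(h) = \frac{n}{n-k}\eerr_{S,\ell}(h) - \frac{1}{n-k}\sum_{i\in I}\ell(h(x_i),y_i)$ and each loss is in $[0,1]$, the two quantities differ by at most $O(k/n) = O(|\kappa(S)|/n)$, which is already dominated by the right-hand side of the claimed bound; this lets us substitute $\eerr_{S,\ell}$ for $\eerr_{I^c,\ell}$ at the cost of absorbing the difference into the additive $|\kappa(S)|\log(n)/n$ term. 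The ``in particular'' clause for the case $\eerr_{S,\ell}(\rho(\kappa(S)))=0$ follows immediately because the multiplicative term vanishes.

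The main obstacle I anticipate is dealing with the fact that $|\kappa(S)|$ is itself a random, data-dependent quantity, so one cannot naively plug it into a Bernstein bound. The resolution, sketched above, is to union bound over all values that $(k,b)$ could take: this costs only an $O(\log n)$ factor (since $k\le n$ and $b\le n\log|\MY|$), which is swallowed into the constant $C$ and the $|\kappa(S)|\log n$ term. A secondary care point is handling the case $k$ close to $n$, where $n-k$ becomes small; this is non-problematic because in that regime the right-hand side of the target bound is $\Omega(1)$ and the inequality holds trivially.
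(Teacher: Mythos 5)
The paper does not give its own proof of this lemma: it is cited verbatim as Theorem~2.1 of \cite{david2016supervised}, so there is nothing internal to compare against. Your reconstruction follows the standard proof strategy for compression-based generalization (and matches the argument in the cited reference): condition on the realized index set $I$ and side information $B$ so that $h_{I,B}$ is a fixed function independent of the held-out $n-|I|$ samples, apply Bernstein's inequality with variance bounded by the mean, union bound over all $(I,B)$ (costing $O(|\kappa(S)|\log n)$ in the log), convert from the multiplicative term in $\err_P$ to the one in $\eerr_S$ by solving the resulting quadratic, and transfer from held-out to full empirical error at an additive cost of $O(|\kappa(S)|/n)$. The two points you flag as potential obstacles are handled correctly: stratifying over $(k,b)$ with a further union bound over the $O(n^2)$ feasible values, and noting that when $|\kappa(S)|$ approaches $n$ the bound is vacuous so $n-k$ being small is harmless. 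I have no objections; the argument is sound.
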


}

\subsection{The one-inclusion graph}
\label{sec:misc-prelim}
In this section, we introduce the one-inclusion graph \cite{haussler1988predicting}, which plays a fundamental role in many PAC learning results. %
For a (partial, multiclass, or real-valued) concept class $\MH \subset \MY^\MX$ and $X = (x_1, \ldots, x_m) \in \MX^m$, we define $\MH|_X := \{ y \in (\MY \backslash \{ * \})^m \ : \ \exists h \in \MH \mbox{ s.t. } h(x_i) = y_i \ \forall i \in [m] \}$. Note that, in the special case of partial concept classes, $\MH|_X \subset \{0,1\}^m$ and in particular does \emph{not} include the $*$ symbol. For a partial binary concept class $\MH$, its \emph{VC dimension}, denoted $\VCdim(\MH)$, is the largest positive integer $d$ so that there is some $X = (x_1, \ldots, x_d) \in \MX^d$ so that $\MH|_X = \{0,1\}^d$. It is known that the VC dimension tightly characterizes statistical learnability of (partial) concept classes \cite{alon2021theory}. 

For $v \in \{0,1\}^n$ and $i \in [n]$, we write $v^{\oplus i}$ to denote $v$ with coordinate $i$ flipped, i.e., $v^{\oplus i}_j = v_j$ for all $j \neq i$ and $v^{\oplus i}_i = 1-v_i$. For $n \in \BN$, let $G_n = (V_n, E_n)$ be the $n$-dimensional hypercube graph, so that $V_n = \{0,1\}^n$ and $E_n = \{( (v_{-i}, 0), (v_{-i},1)) \ : \ i \in [n], v_{-i} \in \{0,1\}^{n-1} \}$.
\begin{definition}[One-inclusion graph]
  \label{def:oig}
  Consider a set $\MW \subset \{0,1\}^n$. The \emph{one-inclusion graph} $G(\MW) = (V,E)$ induced by $\MW$ is defined as the following graph.  The vertex set $V$ is equal to $\MW$. The edge set $E$ is the subgraph of $G_n$ induced by $\MW$, namely:
  \begin{align}
E := \{ (v, v^{\oplus i}) \ : \ i \in [n],\ v,v^{\oplus i} \in \MW \}.\nonumber
  \end{align}
  For any $i \in [n]$ and $h \in \{0,1\}^n$, we will occassionally write $e_{i,h}$ to refer to the edge $(h, h^{\oplus i})$. For a partial concept class $\MH \subset \{0,1,*\}^\MX$ and $X \in \MX^n$, we refer to the the one-inclusion graph induced by $\MH|_X \subset \{0,1\}^n$ as the one-inclusion graph of $\MH$ induced by $X$. 
\end{definition}
For a set $\MW \subset \{0,1\}^n$, $\MW^{\mathrm{c}}$ denotes its complement in $\{0,1\}^n$.

 \paragraph{Orientations.} Given a graph $G = (V,E)$, a \emph{random orientation} of $G$ is a mapping $\sigma : E \to \Delta(V)$, where, for all $e \in E$, $\supp(\sigma(e)) \subseteq e$ (i.e., $\sigma(e)$ is supported on the 2 vertices of $e$). $\sigma$ is called an \emph{orientation} if $\sigma(e)$ is supported on a single vertex $v$, in which case we will write $v = \sigma(e)$. Given a function $F : V \to [0,1]$ and $\lambda \in [0,1]$, we consider a random orientation $\sigma_{F,\lambda}$ \emph{induced by $F$}, defined as follows: for an edge $e =(v,v')$, we set
\begin{align}
\sigma_{F,\lambda}(e)(v) = \frac{1 + \lambda \cdot (F(v') - F(v))}{2}, \qquad \sigma_F(e)(v') = \frac{1 + \lambda \cdot (F(v) - F(v'))}{2}\label{eq:define-orientation}.
\end{align}
Given a random orientation $\sigma$ and a vertex $v \in V$, we define the \emph{out-degree} of $\sigma$ at $v$ to be \colt{$\outdeg(v;\sigma) := \sum_{e \ni v} \left( 1 - \sigma(e)(v)\right)$,}
\arxiv{\begin{align}
\outdeg(v;\sigma) := \sum_{e \ni v} \left( 1 - \sigma(e)(v)\right)\nonumber,
\end{align}}
and the out-degree of $\sigma$ is $\outdeg(\sigma) := \max_{v \in V} \outdeg(v;\sigma)$. 

\section{Learning partial concept classes with a weak oracle}
\label{sec:partial-weak}
In this section, we give an algorithm for realizable PAC learning with low oracle complexity for a weak consistency oracle, and an algorithm for agnostic PAC learning with low oracle complexity for a weak ERM oracle.
\begin{theorem}[Oracle-efficient partial concept class learning]
  \label{thm:partial-main}
  For any $\ep, \delta \in (0,1)$ and $\VCdim \in \BN$, the following statements hold:
  \begin{enumerate}
  \item There is an algorithm $\AlgR$  so that for any class $\MH \subset \{0,1,* \}^\MX$ satisfying $\VCdim(\MH) \leq \VCdim$ and any weak consistency oracle $\Ocon$ for $\MH$, the class $\MH$ is $(\Ocon; \ep, \delta)$-PAC learnable by $\AlgR$ with sample complexity $ n = \tilde O \left( \frac{\VCdim^3 \log(1/\delta)}{\ep} \right)$ and oracle complexity $\poly(n)$.
  \item There is an algorithm $\AlgA$  so that for any class $\MH \subset \{0,1,* \}^\MX$ satisfying $\VCdim(\MH) \leq \VCdim$ and any weak ERM oracle $\Oerm$ for $\MH$, the class $\MH$ is $(\Oerm; \ep, \delta)$-PAC learnable by $\AlgA$ with sample complexity $ n = \tilde O \left( \frac{\VCdim^3 \log(1/\delta)}{\ep^2} \right)$ and oracle complexity $\poly(n)$.
  \end{enumerate}
\end{theorem}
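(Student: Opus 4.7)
The plan is to prove both parts by first constructing a \emph{weak} PAC learner with nontrivial edge over random guessing that makes polynomially many calls to the weak oracle, and then amplifying it into a strong learner via a sample compression based boosting scheme. The weak learner will be derived from a new, oracle-efficient implementation of the one-inclusion graph (OIG) algorithm of \cite{haussler1988predicting}: given an i.i.d.\ sample and a fresh test point $x$, form the augmented sequence $X = (x_1,\ldots,x_n, x)$, consider the one-inclusion graph $G(\MH|_X)$ from \cref{def:oig}, and predict at $x$ using an orientation of the edges in direction $n{+}1$. The central obstacle is that constructing such an orientation explicitly would require enumerating $\MH|_X$, which is far beyond the budget of a weak consistency oracle; hence the core of the proof is the randomized construction described next.

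\textbf{Weak learner via random walks on the OIG.} Following the overview, for each relevant edge $e = (v, v')$ I would estimate an orientation $\sigma(e)$ of the form \eqref{eq:define-orientation}, where the function $F$ is defined by the expected hitting time of a simple random walk on the hypercube $\{0,1\}^{n+1}$, started at $v$ (resp.\ $v'$), to the complement $(\MH|_X)^{\mathrm c}$. Each step of such a walk flips a coordinate and then queries $\Ocon$ on the resulting labeled sequence to decide whether it has exited $\MH|_X$; so each simulated walk costs one oracle call per step, with input size $O(n)$. Averaging over $\poly(n)$ independent walks gives sufficiently accurate estimates of $F(v), F(v')$ to choose $\sigma$; the key combinatorial step is then to show that the resulting random orientation has expected out-degree $\tilde O(\VCdim)$ at every vertex, via a potential argument that relates hitting-time differences between hypercube neighbors to the local VC dimension of $\MH|_X$. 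A standard symmetry/leave-one-out argument then converts bounded out-degree into a weak learner with expected error at most $\tilde O(\VCdim / n)$ on any realizable distribution.

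\textbf{Boosting, compression, and sample complexity.} With the weak learner in hand, I would invoke a boosting algorithm such as \Adaboost, combined with the \cite{david2016supervised} compression-based boosting framework, to produce a classifier representable by a sample compression scheme of size $\tilde O(\VCdim^2)$. Applying \cref{lem:gen-compression} then yields an $\ep$-accurate hypothesis from $n = \tilde O(\VCdim^3 \log(1/\delta)/\ep)$ samples, matching part~(1). Oracle calls are preserved through boosting since each stage invokes the weak learner on reweighted (or resampled) subsets of the original data, giving cumulative query cost $\poly(n)$.

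\textbf{Agnostic case and main obstacle.} For part~(2), a weak consistency oracle no longer suffices because the minimum empirical risk can be strictly positive; however, $\Oerm$ allows us to \emph{test} whether a candidate labeling achieves a given value of the empirical risk, which is exactly the primitive the random-walk procedure needs once ``realizability'' is replaced by ``achieves the optimal empirical risk'' on a carefully chosen subsample. Combined with a standard agnostic-to-realizable reduction (e.g., via a compression-based $\alpha$-Boost style aggregation of empirical risk minimizers on random subsamples), this yields the $1/\ep^2$ rate. I expect the main difficulty to be the hitting-time analysis underlying the weak learner: quantitatively controlling the expected escape time of the random walk from $\MH|_X$ in terms of VC dimension, so that orienting each edge toward the faster-escaping endpoint provably balances out-degree, while keeping walk lengths---and hence per-edge oracle cost---polynomial. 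This is the step where the classical, purely combinatorial OIG analysis does not transfer directly and requires genuinely new estimates.
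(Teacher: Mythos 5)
Your high-level architecture matches the paper's exactly: a randomized orientation of the one-inclusion graph constructed by simulating random walks to the complement $(\MH|_X)^{\mathrm c}$, estimated via calls to $\Ocon$; a leave-one-out argument to get a weak learner; boosting plus a sample-compression generalization bound; and, for the agnostic case, using $\Oerm$ to reduce to the realizable case. However, two intertwined claims in the middle are wrong, and they are not cosmetic.

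First, the specific potential function. You propose taking $F(v)$ to be the \emph{expected hitting time} $\E[\tau_{\MW^{\mathrm c},v}]$ of the random walk. The paper instead uses the \emph{discounted generating function} $F(v) = M_{\MW^{\mathrm c},v}(\gamma) = \E[\gamma^{\tau_{\MW^{\mathrm c},v}}]$ with $\gamma = 1 - \Theta(1/(m\log m))$, and this choice is doing real work. Because $F\in[0,1]$, the orientation $\sigma_{F,\lambda}$ is well-defined with $\lambda=1$; the recursion of \cref{lem:recursion} turns into the exact out-degree identity of \cref{lem:outdeg-m-bound}; and since $\gamma^t$ decays geometrically, one can estimate $F$ by truncating simulated walks after $\tilde O(m)$ steps (with error controlled by the truncation, as in \cref{eq:estpot-2}). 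The raw expected hitting time is unbounded in general, so the analogous orientation would need $\lambda$ tiny just to stay in $[0,1]$, and the estimation-by-rollout argument needs a separate tail bound that you do not supply. It may be salvageable with additional work, but it is not the route the paper takes and it does not follow from what you wrote.

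Second, and more seriously, your claimed out-degree $\tilde O(\VCdim)$ and weak-learner error $\tilde O(\VCdim/n)$ are not what this method delivers, and the claim is internally inconsistent with your own plan. The random-walk orientation achieves out-degree $\tfrac m2 - (1-\gamma)m\cdot\min_v F(v) = \tfrac m2 - \Omega(1/\log m)$ (via \cref{lem:outdeg-m-bound} and \cref{lem:min-m-bound}, the latter using Sauer--Shelah to lower bound $\min_v F(v)$). This is a genuinely \emph{weak} learner, with advantage only $\eta = \Theta(1/(m\log m))$ over random guessing on samples of size $m=\Theta(\VCdim\log\VCdim)$ --- nothing close to error $\tilde O(\VCdim/n)$. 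If the orientation actually had out-degree $\tilde O(\VCdim)$ (as the classical Haussler--Littlestone--Warmuth orientation does, but that one cannot be computed from a weak consistency oracle), there would be no need to boost at all and the final sample complexity would be $\tilde O(\VCdim/\ep)$, not the theorem's $\tilde O(\VCdim^3/\ep)$. Relatedly, your stated compression size $\tilde O(\VCdim^2)$ does not produce $\tilde O(\VCdim^3/\ep)$ samples under \cref{lem:gen-compression}; with the correct margin $\eta = \Theta(1/(m\log m))$ one boosts for $T = \tilde O(1/\eta^2) = \tilde O(\VCdim^2)$ rounds, yielding compression size $\tilde O(T\cdot m) = \tilde O(\VCdim^3)$ and hence the stated rate.

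On the agnostic case, your sketch ("$\alpha$-Boost style aggregation of empirical risk minimizers on random subsamples") seems to presuppose access to ERM \emph{hypotheses}, which $\Oerm$ does not return. The paper's reduction is different and cleaner: use $\Oerm$ greedily (\cref{alg:sample-erm-binary}, \cref{lem:weak-to-sample-binary}) to extract a maximum-size $\MH$-realizable subsample, run the realizable boosting procedure on it, and absorb the subsample-selection map into the compression scheme before applying \cref{lem:gen-compression}. Your earlier remark that $\Oerm$ lets you ``test whether a candidate labeling achieves a given value of the empirical risk'' is in the right direction, but the aggregation step you describe does not go through with only the weak oracle.
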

In the theorem statement above, $\tilde O(\cdot)$ hides factors which are polynomial in $\log (\VCdim), \log( 1/\ep), \log\log(1/\delta)$. The proof of the realizable case of \cref{thm:partial-main} proceeds by first constructing an algorithm (\WeakRealizable; \cref{alg:weak-oig}) which is a \emph{weak learner} for any class of VC dimension at most $\VCdim$ in the realizable setting and makes polynomially many oracle calls to $\Ocon$. %
We then use a standard boosting algorithm (namely, \Adaboost; \cref{alg:adaboost}) to boost the performance of the weak learner so as to obtain a learner which has error at most $\ep$ with high probability. To analyze the generalization error of this approach, we use a technique involving sample compression schemes \cite{david2016supervised,schapire2012boosting}. Finally, to handle the agnostic case of \cref{thm:partial-main}, we reduce to the realizable setting by showing that a weak ERM oracle can be used in an efficient manner to determine, given any sample $S \in (\MX \times \{0,1\})^n$, a subsample of maximum size which is $\MH$-realizable (\cref{lem:weak-to-sample-binary}). 
In the remainder of the section, we introduce our weak learner; the remaining ingredients are (mostly) standard and are presented in \cref{sec:boosting,sec:main-partial-proof}.

\subsection{An oracle-efficient weak learner}
\label{sec:weak-learner}
Our goal is to construct an oracle-efficient weak learner, namely one that improves upon random guessing in expectation over its dataset by a small margin $\eta > 0$: 
\begin{definition}[Weak learner]
  \label{def:weak-learner}
  For $m \in \BN$ and $\eta \in (0,1)$, a randomized learning algorithm $\SA : (\MX \times \{0,1\})^m \times \MX \to \{0,1\}$ is defined to be a \emph{$m$-sample weak learner with margin $\eta$} for the concept class $\MH$ if the following holds. For any $\MH$-realizable distribution $P \in \Delta(\MX \times \{0,1\})$, $\SA$ takes as input an i.i.d.~sample $S \sim P^m$ and $x \in \MX$ and outputs a (random) bit $\SA(S, x)$, so that %
  \begin{align}
\E_{S \sim P^m} \E_{(x,y) \sim P} \E_{\SA} \left[ \lbin(\SA(S, x), y) \right] \leq \frac 12-  \eta \label{eq:weak-learning}.
  \end{align}
\end{definition}
We construct an oracle-efficient weak learner using polynomially many calls to a weak consistency oracle $\Ocon$ by simulating a random walk on the one-inclusion graph of $\MH|_X$ for an appropriate choice of $X \in \MX^m$. This procedure is formalized in the \WeakRealizable algorithm (\cref{alg:weak-oig}), whose main guarantee is shown below: 
\begin{theorem}[Weak learning guarantee]
  \label{thm:weak-oig}
  There are constants $C_1, C_2$ so that the following holds. Consider a partial concept class $\MH$ of VC dimension $d$, $\delta \in (0,1)$, and suppose $m \geq C_1 d \log d$. 
  For an $\MH$-realizable sample $S \in (\MX \times \{0,1\})^{m-1}$ and $x \in \MX$, let $\SA(S, x) \in \{0,1\}$ be the output of $\WeakRealizable(S, x, 1-\frac{1}{C_1 m \log m}, 1,C_1 m^2 \log^3 m, \Ocon)$ (\cref{alg:weak-oig}), which is a random variable.
  Then for any $\MH$-realizable sample $S = \{ (x_i, y_i) \}_{i \in [m]} \in (\MX \times \{0,1\})^m$, it holds that 
  \begin{align}
\frac{1}{m} \sum_{i=1}^{m} \E \left[ \lbin( \SA(S_{-i}, x_i),  y_i) \right] \leq \frac{1}{2} - \frac{1}{C_2 m \log m}\label{eq:weak-transductive}
  \end{align}
   where the expectation is taken over the randomness in the runs of $\SA(S_{-i}, x_i)$. Moreover, \WeakRealizable makes at most $\tilde O(m^3)$ calls to $\Ocon$, each with a dataset of size $m-1$.  %
 \end{theorem}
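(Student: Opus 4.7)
The plan is to realize \WeakRealizable as a randomized one-inclusion graph (OIG) algorithm. Write $X := (x_1,\ldots,x_m)$, $\MW := \MH|_X \subset \{0,1\}^m$, and let $h^\st \in \MW$ denote the true labeling of $S$. By the standard OIG reasoning, if $\sigma$ is a random orientation of $G(\MW)$ and the learner at query $(S_{-i},x_i)$ consults the edge $e_i = (h^\st,(h^\st)^{\oplus i})$ whenever that edge is in $G(\MW)$, then
\[
\tfrac{1}{m}\sum_{i=1}^m \E\bigl[\lbin(\SA(S_{-i},x_i),y_i)\bigr] \;=\; \tfrac{1}{m}\outdeg(h^\st;\sigma),
\]
because every index $i$ with $(h^\st)^{\oplus i}\notin \MW$ is classified correctly with probability one. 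It therefore suffices to construct, using only calls to $\Ocon$, a random orientation $\sigma$ of $G(\MW)$ with $\outdeg(v;\sigma) \leq \tfrac{m}{2} - \tfrac{1}{C_2\log m}$ for every $v \in \MW$.

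First I would introduce the potential function implicit in \WeakRealizable: $F(v)$ is the probability that a simple random walk on the hypercube $\{0,1\}^m$ started at $v$ hits $\MW^\comp$ within $T = \Theta(m\log m)$ steps. Each evaluation of $F$ is estimated by $N = \poly(m)$ independent Monte Carlo trajectories, and each trajectory step requires one call to $\Ocon$ on a dataset of size $m-1$ to decide whether the current walk state lies in $\MW$; this produces the claimed $\tilde O(m^3)$ oracle-call bound. The algorithm then emits the random orientation $\sigma_{F,\lambda}$ of \eqref{eq:define-orientation} with $\lambda = 1 - \tfrac{1}{C_1 m\log m}$. Unpacking that definition,
\[
\outdeg(v;\sigma_{F,\lambda}) \;=\; \frac{\deg_\MW(v)}{2} \;+\; \frac{\lambda}{2}\sum_{u\sim_\MW v}\bigl(F(v)-F(u)\bigr),
\]
so the task reduces to upper bounding the net discrete outflow of $F$ at each $v\in\MW$.

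Second, when $\deg_\MW(v)<m$ the trivial bound $\deg_\MW(v)/2\leq m/2-\tfrac12$ already yields a constant margin, so the interesting case is when $v$ is a ``deep interior'' vertex of $\MW$, with all $m$ hypercube neighbors inside $\MW$. Here the argument combines Sauer--Shelah ($|\MW|\leq O(m^d)$) with a one-step analysis of $F$: expanding $F(v) = \tfrac{m-\deg_\MW(v)}{m} + \tfrac{1}{m}\sum_{u\sim_\MW v}F_{T-1}(u)$ and averaging along the walk trajectory shows that the walk from $v$ must ``drift outward'' at a quantifiable rate, forcing $\sum_{u\sim_\MW v}(F(v)-F(u))$ to be negative enough to produce the pointwise margin $1/\log m$. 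A standard Hoeffding bound then lets us replace $F$ by the Monte Carlo estimate $\wh F$ at additive error $o(1/(m\log m))$ without destroying \eqref{eq:weak-transductive}.

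The main obstacle will be the structural step in the third paragraph: translating the global sparsity of $\MW$ in $\{0,1\}^m$ guaranteed by Sauer--Shelah into a pointwise bound on the hitting-probability gradient across every interior edge. The $1/\log m$ (rather than constant) margin is what one should expect, reflecting the information deficit of a weak consistency oracle relative to the classical Haussler--Littlestone--Warmuth orientation, which achieves constant out-degree but needs explicit access to the full OIG structure.
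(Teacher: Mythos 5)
Your high-level plan is aligned with the paper: realize \WeakRealizable as a randomized orientation of the one-inclusion graph via a random-walk potential estimated with $\Ocon$, reduce the transductive error to $\frac{1}{m}\outdeg(h^\st;\sigma)$, and invoke Sauer--Shelah to make the potential non-trivial. The transductive reduction in your first paragraph and the oracle-count bookkeeping in your second are both correct and match the paper. However, there is a real gap at the step you yourself flag as ``the main obstacle,'' and it stems from your choice of potential.

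You take $F(v)$ to be the probability that the lazy walk hits $\MW^\comp$ within $T = \Theta(m\log m)$ steps, i.e.\ $\Pr(\tau_{\MW^\comp,v}\leq T)$, and then propose to bound $\sum_{u\sim_\MW v}(F(v)-F(u))$ pointwise via a drift argument, splitting into interior vs.\ boundary vertices. That is not what the paper does, and for good reason: a finite-horizon survival probability satisfies only a time-inhomogeneous recursion $F_T(v)=\tfrac12 F_{T-1}(v)+\tfrac{1}{2m}\sum_i F_{T-1}(v^{\oplus i})$, which does not yield any pointwise control of the one-step discrete outflow of $F_T$ itself; your ``averaging along the walk trajectory'' claim is exactly the place this breaks down. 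The paper instead uses the \emph{discounted} hitting-time generating function $F(v):=M_{\MW^\comp,v}(\gamma)=\E[\gamma^{\tau_{\MW^\comp,v}}]$ with $\gamma = 1-\Theta(1/(m\log m))$ (and $\lambda=1$ in the orientation --- note you have conflated $\lambda$ and $\gamma$; in the theorem statement $\lambda=1$ and $\gamma=1-\frac{1}{C_1 m\log m}$, and this is what \EstimatePotential\ estimates via $\frac{1}{U}\sum_u\gamma^{T_u}$, not an indicator of hitting). This $F$ satisfies the exact time-stationary recursion of \cref{lem:recursion}, $F(v)=\frac{\gamma}{(2-\gamma)m}\sum_i F(v^{\oplus i})$, which after rearrangement gives $\sum_i\bigl(F(v^{\oplus i})-F(v)\bigr)=\frac{2(1-\gamma)m}{\gamma}F(v)>0$ at \emph{every} $v\in\MW$, interior or not. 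Plugging this into the out-degree expression yields, with no casework, the pointwise bound $\outdeg(v;\sigma_{F,1})\leq \frac{m}{2}-(1-\gamma)m\cdot F(v)$ (\cref{lem:outdeg-m-bound}). The remaining task is then not a pointwise gradient bound at all but the much easier statement $\min_v F(v)\geq\Omega(1)$, which the paper obtains in \cref{lem:min-m-bound} from Sauer--Shelah plus the $O(m\log m)$ mixing time of the hypercube: if $\MW$ is small, the walk cannot avoid $\MW^\comp$ long enough for the discount $\gamma^\tau$ to decay to $o(1)$. So the structural step you identify as the crux is indeed hard with your potential, and the fix is to switch to the discounted generating function and use its exact recursion in place of the drift argument.
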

 \begin{algorithm}
  \caption{Weak oracle-efficient OIG learner}
  \label{alg:weak-oig}
\begin{algorithmic}[1]\onehalfspacing
  \Require A partial concept class $\MH$, an $\MH$-realizable sample $S = \{ (x_i, y_i) \}_{i \in [m-1]} \in (\MX \times \{0,1\})^{m-1}$, query point $x \in \MX$, consistency oracle $\Ocon$, parameters $\lambda, \gamma \in (0,1)$, $U \in \BN$. %
  \Function{\WeakRealizable}{$S, x, \gamma,\lambda, U, \Ocon$}
  \State Set $X \gets (x_1, \ldots, x_{m-1}, x) \in \MX^{m}$.
  \State Set $y^0 \gets (y_1, \ldots, y_{m-1}, 0) \in \{0,1\}^{m}$ and $y^1 \gets (y_1, \ldots, y_{m-1}, 1) \in \{0,1\}^{m}$.
  \If{$\Ocon(\{ (X_j, y^b_j) \}_{j \in [m]}) = \False$ for some $b \in \{0,1\}$}
  \State \Return $1-b$. \label{line:oig-easy-case}
  \EndIf
  \State For $b \in \{0,1\}$, set $\hat F(y^b) \gets \EstimatePotential(X, y^b, K, \gamma, \Ocon)$. 
  \State \Return a sample from $\mathrm{Ber}(\hat \sigma)$, where $\hat \sigma := \frac{1 + \lambda \cdot (\hat F(y^0) - \hat F(y^1))}{2}$.\label{line:sample-oig}
  \EndFunction

  \Require $U, \gamma, \Ocon$ as above, and $X \in \MX^m, y \in \{0,1\}^m$. 
  \Function{\EstimatePotential}{$X$, $y, U, \gamma,\Ocon$}
  \Comment{\emph{$y$ represents a vertex of the OIG induced by $\MH|_X$, and $U$ is the number of trials}}
  \For{$1 \leq u \leq U$}
  \State Set $Y\^0 \gets y$ and $T_u \gets  \frac{\log(32e/(1-\gamma))}{\log(1/\gamma)}$.
  \For{$0 \leq t \leq  \frac{\log(32e/(1-\gamma))}{\log(1/\gamma)}$}
  \If{$\Ocon(\{(X_j, (Y\^t)_j) \}_{j \in [m]}) = \False$}
  \State Set $T_u \gets t$, and \textbf{break} (out of the inner \textbf{for} loop).
  \Else
  \State Choose $i \sim \Unif([m])$, and set $Y\^{t+1} \gets (Y\^t)^{\oplus i}$.
  \EndIf
  \EndFor
  \EndFor
  \State \Return the quantity $\frac{1}{U} \sum_{u=1}^U \gamma^{T_u}$.
  \EndFunction
  \end{algorithmic}
  
\end{algorithm}

 The guarantee \cref{eq:weak-transductive}, in which an arbitrary realizable dataset $S$ is fixed and the algorithm's performance is measured on all leave-one-out configurations of $S$, is known as a \emph{transductive learning} guarantee. A standard exchangeability argument (see \cref{lem:loo}) shows that \cref{eq:weak-transductive} implies an in-expectation error guarantee under any realizable distribution $P \in \Delta(\MX \times \{0,1\})$, and thus \cref{thm:weak-oig} implies that \WeakRealizable is an $m$-sample weak learner with margin $\eta = \Theta(1/(m \log m))$ for $\MH$. In the remainder of the section we focus on the proof of \cref{thm:weak-oig}.

 \paragraph{Analyzing \WeakRealizable.} Given a dataset $S = \{ (x_i, y_i) \}_{i \in [m-1]}$ together with a ``query point'' $x \in \MX$, \WeakRealizable considers the two vertices $y^0 = (y_1, \ldots, y_{m-1}, 0), y^1 = (y_1, \ldots, y_{m-1}, 1)$ of the one-inclusion graph $G(\MH|_X)$ induced by $\MH$ on the sequence $X = (x_1, \ldots, x_{m-1}, x)$. (If $y^b$ is not a vertex of $G(\MH|_X)$ for some $b \in \{0,1\}$, then, by realizability, the correct prediction on $x$ must be $1-b$ -- see \cref{line:oig-easy-case} of \cref{alg:weak-oig}.) \WeakRealizable then calls \EstimatePotential on each of the vertices $y^0, y^1$, which returns estimates $\hat F(y^0), \hat F(y^1)$ of a certain potential function on vertices of $G$. These potentials are used to randomly return an output bit in \cref{line:sample-oig}.

 The proof that \WeakRealizable satisfies \cref{eq:weak-transductive} proceeds by considering the following perspective: the value $\hat \sigma = \frac{1 + \lambda(\hat F(y^0) - \hat F(y^1))}{2}$ computed in \cref{line:sample-oig} can be viewed as a decision to randomly orient the edge $(y^0,y^1)$ of the one-inclusion graph $G(\MH|_X)$ by putting mass $\hat \sigma$ on $y^1$ and mass $1-\hat \sigma$ on $y^0$ (see \cref{sec:misc-prelim}). To minimize loss, we hope that this orientation puts as much mass as possible on whichever of $y^0, y^1$ corresponds to the ground-truth hypothesis, i.e., we want the edge $(y^0, y^1)$ to not contribute much to the out-degree of the ground-truth. 

 Translated into this language of orientations, the transductive error guarantee \cref{eq:weak-transductive} of \cref{thm:weak-oig} is therefore equivalent to the following statement: fix $m \in \BN$, consider any $\MH$-realizable dataset $S = \{ (x_i, y_i) \}_{i \in [m]}$, and let $X = (x_1, \ldots, x_m)$. Then the random orientations of the $m$ edges adjacent to $y \in G(\MH|_X)$ induced by running \WeakRealizable with inputs $(S_{-i}, x_i)$, for each $i \in [m]$, lead the \emph{out-degree} of $y$ to be bounded above by $m \cdot \left(1/2 - \Omega(1/(m\log m))\right)$. As a sanity check, it is trivial to achieve out-degree $m/2$ by orienting each edge to each of its vertices with probability $1/2$; thus, the quantity of interest is the decrease of $-m \cdot \Omega(1/(m\log m))$ in the out-degree.

 To explain how we achieve such an out-degree bound, consider the $m$-dimensional hypercube graph $G_m = (V_m,E_m)$. Note that the one-inclusion graph $G(\MH|_X)$ is the subgraph of $G_m$ induced by $\MH|_X$. Given $v \in V_m$, we consider the (lazy) random walk on $G_m$ started at $v$. In particular, it is the sequence $Z_v\^0, Z_v\^1, Z_v\^2, \ldots \in V_m$ of random variables with $Z_v\^0 = v$, and with $Z_v\^t$ defined as follows, for $t \geq 0$: given $Z_v\^t \in V_m$, the value of $Z_v\^{t+1}$ is defined by selecting uniformly at  random an edge $e$ of $G_m$ containing $Z_v\^t$, and then letting $Z_v\^{t+1}$ to be a uniformly random vertex of $e$. Given a subset $\MS \subset V_m$ and a vertex $v \in V_m$, the \emph{hitting time} for $\MS$ starting at $v$ is the random variable
\begin{align}
\tau_{\MS,v} := \min \left\{ t \geq 0 \ : \ Z_v\^t \in \MS \right\}\label{eq:stop-at-s}.
\end{align}
Moreover, the generating function $M_{\MS,v}(\gamma)$, for $\gamma \in (0,1)$, is defined for $v \in V_m$ by $M_{\MS, v}(\gamma) := \E[\gamma^{\tau_{\MS,v}}]$.\footnote{For $v \in \MS$, we have $\tau_{\MS,v} = 0$ and hence $M_{\MS,v}(\gamma) = 1$.}
The definition of the random walk yields the following recursive formula for $M_{\MS, v}(\gamma)$ (see \cref{lem:recursion} for a formal statement): for all $v \in \MS^\comp$, 
\begin{align}
M_{\MS, v}(\gamma) = \frac{\gamma}{(2-\gamma) m} \sum_{i \in[m]} M_{\MS, v^{\oplus i}}(\gamma)\label{eq:w-recursion}.
\end{align}
Given $\MH$ and $X \in \MX^m$ as above, we now choose $\MS := (\MH|_X)^\comp$, $\gamma := 1-\Theta(1/(m \log m))$, and define $F(v) = M_{\MS, v}(\gamma)$. We may consider the orientation $\sigma_{F,1}$ induced by $F$ (see \cref{eq:define-orientation}). It is a simple consequence of \cref{eq:w-recursion} (see \cref{lem:outdeg-m-bound} that
\begin{align}
\outdeg(\sigma_{F,1}) \leq \frac m2 - (1-\gamma) m \cdot \min_{v \in V_m} F(v)\label{eq:outdeg-F-bound}.
\end{align}
Finally, we can show (in \cref{lem:min-m-bound}) that as long as $m \geq \Omega(d \log d)$ (where $d$ is an upper bound on $\VCdim(\MH)$), we have $\min_{v \in V_m} F(v) \geq \Omega(1)$. This statement is a consequence of the Sauer-Shelah lemma, which bounds $|\MS^\comp| = |\MH|_X| \leq (em)^d$. In particular, since $\MS^\comp$ is relatively ``small'', the hitting time $\tau_{\MS,v}$ cannot get too large for any vertex $v$, meaning that $\gamma^{\tau_{\MS,v}}$ cannot become too small. To summarize, we thus obtain from \cref{eq:outdeg-F-bound} that $\outdeg(\sigma_{F,1}) \leq m \cdot \left( \frac{1}{2} - \Omega(1-\gamma) \right) = m \cdot \left( \frac 12 - \Omega\left( \frac{1}{m\log m} \right) \right)$.

Since the generating function $F(v)$ is not known exactly, \WeakRealizable cannot compute the orientation $\sigma_{F,1}$ exactly. Instead, it computes estimates $\hat F(y^0), \hat F(y^1)$ of $F(y^0), F(y^1)$ respectively (using \EstimatePotential), via random rollouts. Crucially, doing so is possible using only a weak consistency oracle $\Ocon$: we only need to be able to check, at each step, whether the random walk has hit $\MS = (\MH|_X)^\comp$, which is exactly what is accomplished by $\Ocon$.  By standard concentration arguments, we can show that the induced orientation $\sigma_{\hat F,1}$ is sufficiently close to $\sigma_{F,1}$ to enjoy the same outdegree bounds, thus establishing \cref{thm:weak-oig}. The full details of the proof of \cref{thm:weak-oig} may be found in \cref{sec:weak-oig-proof}.

\section{Extensions to multiclass and real-valued classes}
\label{sec:extensions}
We next extend the guarantee of \cref{thm:partial-main} to the settings of \emph{multiclass classification} and \emph{regression}. The proofs for both of these settings proceed via a reduction to the case of partial concept classes.
\subsection{Multiclass concept classes}
\label{sec:ext-mc}
Our upper bounds for the multiclass setting are phrased in terms of \emph{Natarajan dimension}: for a multiclass concept class $\MH \subset [K]^\MX$, its \emph{Natarajan dimension}, denoted $\Natdim(\MH)$, is the smallest $d \in \BN$ so that there is some $X = (x_1, \ldots, x_d) \in \MX^d$ together with vectors $a,b \in [K]^d$ with $a_i \neq b_i$ for all $i \in [d]$ so that $\MH|_X \supseteq \{ a_1, b_1 \} \times \cdots \times \{ a_d, b_d \}$. It is known that the algorithm the algorithm which returns an empirical risk minimizer of $\MH$ on an i.i.d.~sample, which requires access to a \emph{strong} ERM oracle $\Oerms$, enjoys sample complexity for PAC learning of $\tilde O(\Natdim(\MH) \log(K)/\ep)$ in the realizable setting and of $\tilde O(\Natdim(\MH) \log(K)/\ep^2)$ in the agnostic setting \cite{daniely2011multiclass}. \cref{thm:multiclass-main} shows that we can extend this result to the setting where we only have a \emph{weak} ERM oracle, as long as the oracle complexity is allowed to grow linearly with $K$.
\begin{theorem}[Oracle-efficient multiclass learning]
  \label{thm:multiclass-main}
  For any $\ep, \delta \in (0,1)$ and $\Natdim \in \BN$, the following statements hold:
  \begin{enumerate}
  \item There is an algorithm $\AlgR$ so that for any class $\MH \subset [K]^\MX$ satisfying $\Natdim(\MH) \leq \Natdim$ and any weak consistency oracle $\Ocon$ for $\MH$, the class $\MH$ is $(\Ocon; \ep, \delta)$-PAC learnable by $\AlgR$ with sample complexity $n = \tilde O \left( \frac{\Natdim^3 \log^4(K/\delta)}{\ep} \right)$ and oracle complexity $ K \cdot \poly(n)$.
  \item There is an algorithm $\AlgA$ so that for any class $\MH \subset [K]^\MX$ satisfying $\Natdim(\MH) \leq \Natdim$ and any weak ERM oracle $\Oerm$ for $\MH$, the class $\MH$ is $(\Oerm; \ep, \delta)$-agnostically PAC learnable with sample complexity $n = \tilde O \left( \frac{\Natdim^3 \log^4(K/\delta)}{\ep^2} \right)$ and oracle complexity $K \cdot \poly(n)$.
  \end{enumerate}
\end{theorem}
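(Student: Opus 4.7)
The plan is to reduce multiclass PAC learning to a single binary partial concept learning problem on an extended domain and then invoke \cref{thm:partial-main}. Concretely, set $\til \MX := \MX \times \binom{[K]}{2}$ and define the extended partial class $\til \MH \subset \{0,1,*\}^{\til \MX}$ so that each $h \in \MH$ induces a concept $\til h$ given by $\til h(x, \{a,b\}) = 0$ if $h(x) = a$, $\til h(x, \{a,b\}) = 1$ if $h(x) = b$ (using the convention $a < b$), and $\til h(x, \{a,b\}) = *$ otherwise. A Natarajan-Sauer growth-function argument yields $\VCdim(\til\MH) = \tilde O(\Natdim \log K)$: any $n$ VC-shattered points of $\til\MX$ project onto at most $n$ distinct $\MX$-points, and the Natarajan-Sauer growth bound limits the number of realizable patterns from $\MH$ on those points to at most $(nK)^{O(\Natdim)}$, which must exceed the $2^n$ patterns needed for VC-shattering.

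Oracle simulation is direct: on input $\{((x_i, \{a_i, b_i\}), y_i)\}_{i \in [m]}$, relabel $z_i := a_i$ when $y_i = 0$ and $z_i := b_i$ when $y_i = 1$; a single call to $\Ocon$ (respectively $\Oerm$) for $\MH$ on $\{(x_i, z_i)\}$ returns exactly $\til\Ocon$ (respectively $\til\Oerm$) on the original $\til\MH$-input, as one sees by checking that the partial loss $\lbin(\til h(x_i, \{a_i, b_i\}), y_i)$ coincides with the multiclass loss $\lmc(h(x_i), z_i)$ point-by-point. One then draws i.i.d.\ samples from the distribution $\til P$ over $\til\MX$ defined by sampling $(x, y') \sim P$, drawing $j$ uniformly from $[K]\setminus\{y'\}$, and outputting the pair $\{y', j\}$. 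A short calculation shows $\err_{\til P}(\til h) = \err_P(h)$ for every $h \in \MH$, so $\til P$ is $\til\MH$-realizable whenever $P$ is $\MH$-realizable, and $\err_{\til P}(\til \MH) \leq \err_P(\MH)$ in the agnostic case. Applying \cref{thm:partial-main} to $\til\MH$ now yields a partial binary predictor $\til H$ with $\err_{\til P}(\til H) \leq \ep'$ (realizable) or $\err_{\til P}(\til H) \leq \err_P(\MH) + \ep'$ (agnostic), using $\tilde O(\Natdim^3 \log^3 K \cdot \log(1/\delta)/\ep')$ samples.

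Finally, aggregate $\til H$ into a multiclass predictor $H : \MX \to [K]$ via a tournament: on test point $x$, evaluate $\til H(x, \{k, j\})$ for all $k < j$ and output the label with the most pairwise wins. The $\binom{K}{2}$ per-test-point evaluations account for the $K \cdot \poly(n)$ oracle-complexity factor. A Markov-type argument shows that if $H(x) \neq y'$, then at least half of the $K-1$ pairs $\{y', j\}$ are misclassified by $\til H$, whence $\err_P(H) \leq 2\,\err_{\til P}(\til H)$. In the realizable case, taking $\ep' = \ep/2$ closes the proof. The main obstacle is the agnostic case, where the Markov bound gives only $\err_P(H) \leq 2\,\err_P(\MH) + 2\ep'$, a multiplicative factor of 2 on $\err_P(\MH)$. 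Recovering the clean agnostic guarantee $\err_P(\MH) + \ep$ appears to require a more refined aggregation, e.g.\ selecting the unique label whose pattern of pairwise wins is internally consistent with some concept in $\MH$ (verifiable by a small number of additional $\Oerm$ calls), so that $H$ corresponds to an actual multiclass hypothesis in $\MH$ rather than an arbitrary tournament winner.
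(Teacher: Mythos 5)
Your menu/pair construction matches the paper's $\Menucls(\MH)$ in spirit, and your bound $\VCdim(\til\MH) = \tilde O(\Natdim \log K)$ and the oracle simulation are fine. However, the proof has a genuine gap at the aggregation step. You apply \cref{thm:partial-main} as a black box with a distribution $\til P$ that samples, per training example $(x,y')$, a \emph{single} uniformly random competitor $j$, obtaining $\err_{\til P}(\til H) \le \ep'$. You then claim that ``if $H(x) \neq y'$, then at least half of the $K-1$ pairs $\{y', j\}$ are misclassified,'' yielding $\err_P(H) \le 2\,\err_{\til P}(\til H)$. This is false for the max-wins tournament: take $\til H$ correct on every pair $\{y', j\}$ except for one $k$, with $\til H$ picking $k$ on every pair $\{k, j\}$; then $H(x)=k\neq y'$ but only $1/(K-1)$ of the pairs involving $y'$ are wrong. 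The only pointwise guarantee is that $H(x) \neq y'$ implies at least \emph{one} misclassified pair involving $y'$, giving $\err_P(H) \le (K-1)\,\err_{\til P}(\til H)$, which inflates the sample complexity by a factor of $K$ and does not match the theorem.

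The paper's proof avoids this entirely by \emph{not} reducing to an i.i.d. binary problem. Instead (\cref{alg:mc-boosting}), for each training example $(x_i,y_i)$ it deterministically includes \emph{all} $2(K-1)$ pair-examples $((x_i,(y_i,\ell)),0)$ and $((x_i,(\ell,y_i)),1)$ in the set passed to \Adaboost, so that the boosted binary hypothesis $J$ has zero training error on every pair of every training point (\cref{lem:adaboost-mc-training}). The decoder $\Decmc$ then outputs the unique label consistent with \emph{all} of $J$'s answers, giving multiclass training error exactly zero, and generalization is established by a compression argument (\cref{lem:gen-compression}) applied directly to the multiclass loss with compression size $O(T(m\log K + \log n))$. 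This sidesteps the tournament-aggregation loss completely. For the agnostic case, your proposed ``consistent-with-$\MH$'' selection is also not needed: the paper first uses \SampleERMBinary to extract a maximal $\MH$-realizable subsample (via \cref{lem:weak-to-sample-binary}) and then invokes the realizable procedure, with the agnostic guarantee following from \cref{lem:gen-compression} applied to the composed compression map.
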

The $\tilde O(\cdot)$ above hides factors that are polynomial in $\log( 1/\ep), \log( \Natdim), \log (K/\delta)$. It is straightforward to show that oracle complexity growing linearly in $K$ is necessary if one only uses a weak ERM or consistency oracle, by considering the case where $\MH$ is a class that consists of a single unknown hypothesis on a large domain $\MX$, and where the covariates are uniformly distributed on $\MX$.

It is known that for any class $\MH \subset [K]^\MX$, the sample complexity of PAC learning $\MH$ is %
always within a polynomial factor of the DS dimension of $\MH$, denoted $\DSdim(\MH)$ \cite{brukhim2022characterization}, and is in particular bounded below by $\Omega(\DSdim(\MH))$ (see \cref{sec:lb-mc} for a definition of the DS dimension). Moreover, we always have $\Natdim(\MH) \leq \DSdim(\MH) \leq O(\Natdim(\MH) \cdot \log K)$.  Thus, the sample complexity obtained by the oracle-efficient algorithms $\AlgR, \AlgA$ of \cref{thm:multiclass-main} comes within a $\poly \log K$ factor of the optimal sample complexity. While the $\log K$ factor is unlikely to be large in many applications, it is nevertheless of theoretical interest to wonder if there is an oracle-efficient algorithm with sample complexity $\poly (\DSdim(\MH))$, even if one allows a \emph{strong} ERM oracle. We show in \cref{thm:mc-lb} (\cref{sec:lb}) that no such algorithm exists, even if we restrict $\DSdim(\MH) = 1$.

\subsection{Real-valued concept classes}
\label{sec:ext-real}
Our bounds for the regression setting are phrased in terms of \emph{fat-shattering dimension}: for a real-valued concept class $\MH \subset [0,1]^\MX$ and $\discmg \in (0,1)$, its \emph{fat-shattering dimension at scale $\discmg$}, denoted $\fatdim(\MH)$, is the largest positive integer $d$ so that there exist $x_1, \ldots, x_d \in \MX$ and $s_1, \ldots, s_d \in [0,1]$ so that, for all $b \in \{0,1\}^d$, there is some $h \in \MH$ so that $h(x_i) \geq s_i + \discmg$ if $b_i = 1$ and $h(x_i) \leq s_i - \discmg$ if $b_i = 0$. It is known that finiteness of the fat-shattering dimension at all scales $\discmg$ is a sufficient condition for learnability in both the realizable and agnostic settings, and that a sample complexity scaling nearly linearly with the fat-shattering dimension at an appropriate scale  can be obtained by outputting an empirical risk minimizer of $\MH$ on an i.i.d.~sample (which requires access to a strong ERM oracle) \cite{long2001on,bartlett1998prediction,alon1997scale}. \cref{thm:reg-main} shows that we can extend this result to the setting where we only have a weak ERM oracle, with a polynomial cost in the sample complexity.
\begin{theorem}[Oracle-efficient regression]
  \label{thm:reg-main}
  For any $\delta \in (0,1)$, $n \in \BN$, and function $\discmg \mapsto \fatdim \in \BN$ (for $\discmg \in (0,1)$), the following statements hold:
  \begin{enumerate}
  \item There is an algorithm $\AlgR$ so that for any class $\MH \subset [0,1]^\MX$ satisfying $\fatdim(\MH) \leq \fatdim$ for all $\discmg$ and any weak range oracle $\Orange$ for $\MH$, the class $\MH$ is $(\Orange; \ep, \delta)$-PAC learnable with sample complexity $n$ and oracle complexity $\poly(n)$, for $\ep =   \inf_{\discmg \in [0,1]} \left\{O( \discmg) + \tilde O \left(\frac{\fatdim^3 \cdot \log(1/\delta)}{n} \right) \right\}$.
  \item There is an algorithm $\AlgA$ so that for any class $\MH \subset [0,1]^\MX$ satisfying $\fatdim(\MH) \leq \fatdim$ for all $\discmg$ and any weak ERM oracle $\Oerm$ for $\MH$, the class $\MH$ is $(\Oerm; \ep, \delta)$-agnostically PAC learnable with sample complexity $n$ and oracle complexity $\poly(n)$, for \colt{\newline} $ \ep =   \inf_{\discmg \in [0,1]} \left\{ O(\discmg) + \tilde O \left(\sqrt{\frac{\fatdim^3 \cdot \log(1/\delta)}{n}}\right)\right\}.$
  \end{enumerate}
\end{theorem}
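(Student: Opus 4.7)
The plan is to reduce \cref{thm:reg-main} to \cref{thm:partial-main} via a $\gamma$-discretization of the real-valued label space. Fix a scale $\gamma \in (0,1)$. For each $j = 1, \ldots, \lceil 1/\gamma \rceil$, set $t_j := j \gamma$ and introduce the partial binary concept class $\MH_j := \{h_j : h \in \MH\} \subset \{0,1,*\}^\MX$ defined by $h_j(x) := 1$ when $h(x) \geq t_j + \gamma$, $h_j(x) := 0$ when $h(x) \leq t_j - \gamma$, and $h_j(x) := *$ otherwise. A direct shattering argument shows that $\VCdim(\MH_j) \leq \fatdim(\MH)$: any set shattered by $\MH_j$ is $\gamma$-fat-shattered by $\MH$ with the constant witnesses $s_i = t_j$.

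Next I simulate the oracles needed by \cref{thm:partial-main} on each $\MH_j$. For the realizable part, the range oracle $\Orange$ for $\MH$ immediately implements a weak consistency oracle for $\MH_j$: given a binary sample $\{(x_i, b_i)\}$, query $\Orange$ on the range sample $(x_i, \ell_i, u_i)$ with $(\ell_i, u_i) = (t_j + \gamma, 1)$ if $b_i = 1$ and $(\ell_i, u_i) = (0, t_j - \gamma)$ if $b_i = 0$. For the agnostic part, I simulate a weak ERM oracle for each $\MH_j$ using $\Oerm$ for $\MH$, in the spirit of the agnostic-to-realizable reduction used in the proof of \cref{thm:partial-main}. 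The key gadget is a violation-counting procedure: because any $\MH_j$-violation on an example contributes at least $2\gamma$ to the absolute loss on the lifted real-valued sample whose targets are set to $t_j + \gamma$ (when $b_i = 1$) or $t_j - \gamma$ (when $b_i = 0$), the value returned by $\Oerm$ on suitably chosen reweightings and subsamples recovers the binary empirical risk for $\MH_j$ up to a controllable error using $\poly(n, 1/\gamma)$ queries.

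With these simulated oracles, I invoke \cref{thm:partial-main} on each $\MH_j$ on the same i.i.d.\ sample of size $n$, with per-threshold error parameter $\epsilon_0$ and failure probability $\delta / \lceil 1/\gamma \rceil$, yielding binary predictors $\hat h_j$. Aggregate into the real-valued regressor
\[
\hat f(x) := \gamma \cdot |\{ j \,:\, \hat h_j(x) = 1 \}|.
\]
For any near-optimal $h^\st \in \MH$, a direct count gives $|\hat f(x) - h^\st(x)| \leq 2\gamma + \gamma \cdot |\{j : \hat h_j(x) \neq h^\st_j(x),\ h^\st_j(x) \neq *\}|$. Taking expectations, the second term contributes at most $\gamma \cdot (1/\gamma) \cdot \epsilon_0 = \epsilon_0$, so by the triangle inequality $\err_P(\hat f) \leq \err_P(\MH) + O(\gamma) + \epsilon_0$. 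Plugging in $\epsilon_0 = \tilde O(\fatdim^3 / n)$ in the realizable case and $\epsilon_0 = \tilde O(\sqrt{\fatdim^3 / n})$ in the agnostic case, and taking the infimum over $\gamma \in (0,1)$, delivers the two bounds stated in \cref{thm:reg-main}.

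The main obstacle will be the agnostic oracle simulation in the second paragraph: translating the absolute-loss minimum in $[0,1]$ returned by $\Oerm$ for $\MH$ into the $\{0, 1/n, \dots, 1\}$-valued binary empirical risk minimum for $\MH_j$ requires care, as the two losses are not related by a fixed linear transformation. Verifying that the violation-counting gadget is precise enough to feed into the boosting-and-compression machinery underlying \cref{thm:partial-main} while keeping the oracle complexity polynomial, and carefully handling the $*$-labeled ``ambiguous band'' of width $2\gamma$ around each $t_j$ so that the per-threshold binarized distributions are essentially $\MH_j$-realizable (e.g., by restricting each per-threshold subsample to points with $|y_i - t_j| > \gamma$ and absorbing the resulting $O(\gamma)$ bias into the $O(\gamma)$ discretization error), are the technical crux of the proof.
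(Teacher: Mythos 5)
Your threshold-discretization idea is the right shape and your realizable-case treatment is essentially a per-threshold rendition of the paper's argument, but the paper's decomposition is genuinely different in a way that matters. Rather than forming $\lceil 1/\discmg\rceil$ separate partial classes $\MH_j$ over $\MX$ and invoking \cref{thm:partial-main} once per threshold, the paper defines a \emph{single} partial concept class $\Thrcls(\MH)$ over the \emph{extended domain} $\MX \times \disc$ (\cref{def:real-weak-learner}), whose VC dimension is still at most $\fatdim$ (\cref{lem:vc-thr-fat}), and runs a single boosting pass on the joint binarized dataset $\tilde S \subset (\MX \times \disc \times \{0,1\})^{O(n/\discmg)}$. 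The entire pipeline is then one sample compression scheme over the original $n$-point sample, so \cref{lem:gen-compression} is applied once, directly to the absolute loss of $H(x) = \discmg \sum_{\discpt \in \disc} J(x, \discpt)$. Your per-threshold version could in principle be pushed through in the realizable case after carefully handling the $*$-bands and union-bounding over thresholds (the $\log(1/\discmg)$ cost is absorbed by $\tilde O(\cdot)$), but it is heavier, and more importantly it leads you into trouble in the agnostic case.

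The genuine gap is your agnostic oracle simulation. You propose to recover $\min_{h \in \MH} \eerr_{S_j, \lbin}(h_j)$ from $\Oerm$ via a violation-counting gadget, claiming that any $\MH_j$-violation contributes $\geq 2\discmg$ to absolute loss against lifted targets $t_j \pm \discmg$. This is false when the violation falls in the $*$-band: if $b_i = 1$ and $h(x_i) \in (t_j - \discmg, t_j + \discmg)$ then $h_j(x_i) = *$, so $\lbin(h_j(x_i), b_i) = 1$, yet $|h(x_i) - (t_j + \discmg)| < 2\discmg$ and can be arbitrarily close to $0$. The absolute-loss surrogate therefore systematically under-counts ambiguous-band violations and cannot recover the binary empirical risk for $\MH_j$. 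The paper avoids simulating per-threshold weak ERM oracles altogether. In the agnostic case it calls \SampleERMReal (\cref{lem:weak-to-sample-real}) once on the real-valued sample to extract, to within $\discmg/2$, the values $h^\st(x_i)$ of an empirical risk minimizer $h^\st \in \MH$, replaces each $y_i$ by the resulting $\hat y_i$, and then runs the \emph{realizable} pipeline on the modified sample with a wider band parameter $\beta = 2\discmg$ (the $\Orange$ needed there is built from $\Oerm$ via \SampleConReal, \cref{lem:sample-con-real}). The decomposition in \cref{eq:S-H-decompose} then transfers the empirical error on the modified sample back to $\inf_{h \in \MH}\eerr_S(h)$. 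You should adopt this agnostic-to-realizable pre-processing step instead of trying to read binary per-threshold empirical risks off absolute-loss values.
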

The $\tilde O(\cdot)$ above hides factors that are polynomial in $\log (n), \log( \fatdim), \log\log  (1/\delta)$. In the agnostic setting the fat-shattering dimension is known to characterize PAC learnability, and thus \cref{thm:reg-main} shows that the price to pay for oracle-efficiency with respect to $\Oerm$ is only a polynomial (assuming reasonable growth of $\fatdim$). In contrast, in the realizable setting, the sample complexity is characterized by a different quantity known as \emph{the one-inclusion graph (OIG) dimension}  \cite{attias2023optimal}, which can be smaller than the fat-shattering dimension by an arbitrarily large factor. We show in \cref{thm:reg-lb} (\cref{sec:reg-lbs}) that, even with a strong ERM oracle, it is impossible to obtain an oracle-efficient algorithm even for classes whose OIG dimension is a constant. 

\section*{Acknowledgements}
CD is supported by NSF Awards CCF-1901292, DMS-2022448, and DMS2134108, a
Simons Investigator Award, and the Simons Collaboration on the Theory of Algorithmic Fairness. NG is supported by a Fannie \& John Hertz Foundation Fellowship and an NSF Graduate Fellowship.
\arxiv{\bibliographystyle{alpha}}
\newpage
\bibliography{oig.bib}
\appendix
\newpage
\colt{}

\section{Helpful lemmas}
In this section we collect various probabilistic lemmas which are used throughout the proofs. 
Fix $n \in \BN$, and consider the hypercube $V_n = \{0,1\}^n$. For some $v \in V$, we consider the lazy random walk on $V_n$, denoted $Z_v\^0, Z_v\^1, \ldots $, where $Z_v\^0$, and $Z_v\^t$ is generated from $Z_v\^{t-1}$ by picking $i \in [n]$ uniformly at random and flipping the $i$th coordinate of $Z_v\^{t-1}$ with probability $1/2$. 
\begin{lemma}[Mixing time of the hypercube; \cite{levin2006markov}]
  \label{lem:hypercube-mixing}
  Consider $n \in \BN$, $v \in V_n$, and let $Z_v\^0, Z_v\^1, \ldots$ denote the lazy random walk on the hypercube $V_n$. Let $U$ be a uniformly distributed random variable on $V_n$. Then for any $\ep \in (0,1)$ and $t \geq n \log n + n \log(1/\ep)$, it holds that
$ 
\tvd{Z_v\^t}{U} \leq  \ep.
$
\end{lemma}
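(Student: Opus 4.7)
The plan is to establish this classical mixing-time bound by a standard coupling argument based on coupon collection. First, I would reformulate the transition rule in a slightly more convenient way: the paper's description of the lazy walk, where one picks a uniform edge at $Z_v\^t$ and then a uniform endpoint, is equivalent to picking a coordinate $i \in [n]$ uniformly at random and then (independently) setting the $i$th coordinate of $Z_v\^t$ to a uniformly random bit in $\{0,1\}$. This equivalence is immediate because ``stay'' and ``flip coordinate $i$'' each have conditional probability $1/2$ given $i$, which is the same as resampling the $i$th bit.

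Next, I would construct a coupling. Let $(X_t)_{t\geq 0}$ be the chain started at $X_0 = v$, and let $(Y_t)_{t \geq 0}$ be an independent copy of the chain started from $Y_0 \sim \Unif(V_n)$; note that since $\Unif(V_n)$ is the stationary distribution of the chain, $Y_t \sim \Unif(V_n) = U$ in distribution for every $t$. Couple the two chains step by step as follows: at each time $t$, draw a single pair $(i_t, b_t)$ with $i_t \sim \Unif([n])$ and $b_t \sim \Unif(\{0,1\})$, and use the same $(i_t, b_t)$ to update both $X$ and $Y$ via the reformulated rule above. Under this coupling, once a coordinate $i$ has been selected at some time $s \leq t$, both $X_t$ and $Y_t$ carry the bit $b_s$ in coordinate $i$; hence, writing $T$ for the first time at which every coordinate in $[n]$ has appeared among $i_0, i_1, \ldots$, we have $X_t = Y_t$ on the event $\{T \leq t\}$.

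The key quantity is therefore $\Pr[T > t]$, which is the coupon-collector tail. By a union bound over the $n$ coordinates, the probability that a specific coordinate has not been selected in the first $t$ draws is $(1 - 1/n)^t \leq e^{-t/n}$, so $\Pr[T > t] \leq n e^{-t/n}$. For $t \geq n \log n + n \log(1/\ep)$, this bound evaluates to at most $\ep$. The coupling inequality then gives
\begin{align}
\tvd{Z_v\^t}{U} = \tvd{X_t}{Y_t} \leq \Pr[X_t \neq Y_t] \leq \Pr[T > t] \leq \ep,\nonumber
\end{align}
which is exactly the claimed bound.

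There is no real obstacle here; the argument is entirely standard and the only ``work'' is the coupon-collector tail estimate, which is elementary. The only things to double-check are (a) that the paper's edge-then-endpoint description of the walk indeed agrees with the ``pick a coordinate, resample its bit'' description (so that the coupling via shared $(i_t,b_t)$ is valid), and (b) that $\Unif(V_n)$ is invariant under the lazy chain, so that $Y_t$ has the distribution of $U$ for all $t$; both are immediate from the symmetry of the update rule.
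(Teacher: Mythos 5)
Your proof is correct and is essentially the standard coupon-collector coupling argument from the cited source (Levin--Peres--Wilmer); the paper itself does not supply a proof but simply cites this result, so there is nothing in the paper for your argument to deviate from. The two points you flag at the end (the equivalence of the edge-then-endpoint rule with the resample-a-coordinate rule, and stationarity of the uniform distribution) are indeed immediate, and the tail bound $n e^{-t/n} \leq \ep$ for $t \geq n\log n + n\log(1/\ep)$ checks out.
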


\begin{lemma}[Sauer-Shelah; \cite{shalev2014understanding}]
  \label{lem:ss}
If $\MH \subset \{0,1, * \}^\MX$ is a partial concept class with VC dimension $d$ and $X \in \MX^m$, then $| \MH|_X | \leq (em/d)^d$. 
\end{lemma}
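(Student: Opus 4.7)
The plan is to reduce to the classical Sauer-Shelah lemma for total binary function classes (or equivalently, for subsets of $\{0,1\}^m$). The key observation is that by the definition given just before \cref{def:oig}, we have $\MH|_X \subseteq \{0,1\}^m$ (the symbol $*$ is explicitly excluded from $\MH|_X$), so it makes sense to treat $\MH|_X$ as a family of binary vectors on $m$ coordinates and apply the standard combinatorial bound.

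First I would set $\MW := \MH|_X \subseteq \{0,1\}^m$ and define, in the usual way, the shattering dimension $d^\st$ of $\MW$ as the largest $k$ for which there exist indices $i_1 < \cdots < i_k$ in $[m]$ such that the projection $\{(v_{i_1}, \ldots, v_{i_k}) : v \in \MW \}$ equals $\{0,1\}^k$. The central reduction is the inequality $d^\st \leq \VCdim(\MH) \leq d$. To see this, suppose indices $i_1, \ldots, i_k$ witness shattering of $\MW$; then for every $b \in \{0,1\}^k$ there exists $v \in \MW$ with $v_{i_j} = b_j$ for all $j$, and unfolding the definition of $\MW = \MH|_X$ gives an $h \in \MH$ with $h(x_{i_j}) = b_j \in \{0,1\}$ for all $j$. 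Thus for the tuple $X' := (x_{i_1}, \ldots, x_{i_k})$ we have $\MH|_{X'} = \{0,1\}^k$, so $k \leq \VCdim(\MH) \leq d$ by the VC-dimension definition for partial concept classes.

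Next I would invoke the classical Sauer-Shelah-Perles lemma for $\MW \subseteq \{0,1\}^m$ with shattering dimension at most $d$ to conclude
\begin{align*}
|\MH|_X| \;=\; |\MW| \;\leq\; \sum_{i=0}^{d} \binom{m}{i}.
\end{align*}
The classical lemma itself is a standard textbook result (cf.\ \cite{shalev2014understanding}), provable either by the Pajor/down-shifting argument or by induction on $m+d$; I would simply cite it rather than reprove it. Finally, using the standard numerical bound $\sum_{i=0}^{d}\binom{m}{i} \leq (em/d)^d$ (valid for $m \geq d$; for the degenerate case $m<d$ the claim $|\MW|\leq 2^m \leq (em/d)^d$ is immediate), one obtains the stated bound $|\MH|_X| \leq (em/d)^d$.

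The only nontrivial step is the reduction in the first paragraph, namely verifying that the partial-concept-class notion of VC dimension upper-bounds the ordinary shattering dimension of $\MH|_X \subseteq \{0,1\}^m$; everything else is a direct appeal to classical results. There is no subtle obstacle here, since the fact that $\MH|_X$ contains only binary (not starred) vectors makes the reduction entirely mechanical.
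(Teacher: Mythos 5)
The paper states this lemma with a citation to \cite{shalev2014understanding} and provides no proof of its own, so there is no in-paper argument to compare against. Your proposed proof is the standard one: observe that $\MH|_X$ is a genuine subset of $\{0,1\}^m$ (by the paper's definition, which excludes $*$), check that shattering a set of coordinates of $\MH|_X$ yields shattering in the partial-class sense, and then invoke the classical Sauer--Shelah--Perles bound together with the usual binomial-sum estimate. The reduction step is correct and cleanly argued, and this is exactly the route one would take. One small flaw: your claim that in the degenerate case $m < d$ one has $2^m \leq (em/d)^d$ is false in general --- e.g., $m = 1$, $d = 2$ gives $2 > (e/2)^2 \approx 1.85$. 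The inequality $\sum_{i=0}^{d}\binom{m}{i} \leq (em/d)^d$ genuinely requires $m \geq d$, and the lemma as stated is implicitly assumed to be applied in that regime (the paper only uses it with $m \geq C_1 d \log d$). You could either add the hypothesis $m \geq d$ to the lemma, or just drop the parenthetical claim about $m < d$; either way, the core of the argument is sound.
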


The following result is a corollary of Freedman's inequality.
\begin{lemma}[Lemma A.3 of \cite{foster2021statistical}]
  \label{lem:freedman}
  Let $(X_t)_{t \in [T]}$ be a sequence of random variables adapted to a filtration $(\MF_t)_{t \in [T]}$. If $0 \leq X_t \leq R$ almost surely for all $t \in [T]$, then with probability at least $1-\delta$,
  \begin{align}
\sum_{t=1}^T \E[X_t \mid \MF_{t-1}]\leq 2 \sum_{t=1}^T X_t + 8 R \log(2/\delta)\nonumber.
  \end{align}
\end{lemma}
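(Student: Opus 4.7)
The plan is to derive this inequality as a standard consequence of Freedman's martingale concentration inequality, using the variance-versus-mean relationship that holds for bounded nonnegative random variables.

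First I would introduce the martingale difference sequence $Y_t := \E[X_t \mid \MF_{t-1}] - X_t$, which is adapted to $(\MF_t)$, has mean zero conditional on $\MF_{t-1}$, and satisfies $|Y_t| \leq R$ almost surely. The key variance estimate is
\begin{align*}
\E[Y_t^2 \mid \MF_{t-1}] \leq \E[X_t^2 \mid \MF_{t-1}] \leq R \cdot \E[X_t \mid \MF_{t-1}],
\end{align*}
where the first inequality is $\Var \leq \E[X^2]$ and the second uses $X_t \leq R$. Summing yields a bound on the predictable quadratic variation $V_T := \sum_{t=1}^T \E[Y_t^2 \mid \MF_{t-1}] \leq R \cdot S_T$, where I write $S_T := \sum_{t=1}^T \E[X_t \mid \MF_{t-1}]$ for brevity.

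Next I would apply Freedman's inequality to $(Y_t)$, which states that with probability at least $1-\delta$,
\begin{align*}
\sum_{t=1}^T Y_t \leq \sqrt{2 V_T \log(2/\delta)} + R \log(2/\delta).
\end{align*}
Substituting the bound $V_T \leq R \cdot S_T$ and applying AM-GM in the form $\sqrt{2 R S_T \log(2/\delta)} \leq \tfrac{1}{2} S_T + 2R \log(2/\delta)$, I get
\begin{align*}
S_T - \sum_{t=1}^T X_t \;=\; \sum_{t=1}^T Y_t \;\leq\; \tfrac{1}{2} S_T + 3R \log(2/\delta).
\end{align*}
Rearranging and multiplying by $2$ yields $S_T \leq 2 \sum_{t=1}^T X_t + 6R \log(2/\delta)$, which is stronger than the stated conclusion (the constant $8$ leaves slack for different normalizations of Freedman's inequality).

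There is no real obstacle here; the only subtlety is the AM-GM step, which must be done so that the $\tfrac{1}{2} S_T$ term can be absorbed into $S_T$ on the left-hand side after rearranging. The shape of the bound (a $2\sum X_t$ term rather than $\sum X_t$) is precisely what this absorption produces, and it is the reason the statement is phrased as a one-sided inequality rather than a centered concentration bound.
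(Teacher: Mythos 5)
Note first that the paper does not actually prove this lemma; it is imported verbatim from Foster et al.\ (2021), so there is no in-paper argument to compare against. Your overall strategy is the right one: passing to the martingale differences $Y_t := \E[X_t \mid \MF_{t-1}] - X_t$, bounding the predictable variance via $\E[Y_t^2 \mid \MF_{t-1}] \leq R\,\E[X_t \mid \MF_{t-1}]$ (using $0 \leq X_t \leq R$), and then absorbing a fraction of $S_T := \sum_t \E[X_t \mid \MF_{t-1}]$ back into the left-hand side is exactly the mechanism that makes this lemma work. However, the step where you invoke Freedman's inequality is not valid as written. The statement ``with probability $1-\delta$, $\sum_t Y_t \leq \sqrt{2 V_T \log(2/\delta)} + R\log(2/\delta)$'' where $V_T = \sum_t \E[Y_t^2 \mid \MF_{t-1}]$ is the \emph{random} predictable quadratic variation is not Freedman's inequality. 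Freedman's inequality controls $\Pr\bigl(\sum_t Y_t \geq a \text{ and } V_T \leq v\bigr)$ for a \emph{deterministic} threshold $v$; to move the random $V_T$ onto the right-hand side in square-root form one needs a peeling argument over a grid of $v$ values (which brings in extra $\log\log$ factors) or a genuinely self-normalized inequality, neither of which gives the clean form you wrote. As stated, this is a gap.

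The fix is small. Replace the square-root form with the fixed-$\lambda$ exponential form of Freedman's inequality: for any fixed $\lambda \in (0,1/R]$, with probability $1-\delta$, $\sum_t Y_t \leq (e-2)\lambda V_T + \log(1/\delta)/\lambda$. Because $\lambda$ is chosen before the data, $V_T$ may legitimately appear on the right. Substituting $V_T \leq R\cdot S_T$ and taking $\lambda = 1/(2(e-2)R)$ gives $S_T - \sum_t X_t \leq \tfrac12 S_T + 2(e-2)R\log(1/\delta)$, hence $S_T \leq 2\sum_t X_t + 4(e-2)R\log(1/\delta)$, which is slightly stronger than the quoted bound. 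Alternatively --- and this is probably closer to the proof in the cited source --- one can bypass Freedman entirely: for $0 \leq X \leq R$ and $\eta > 0$, convexity of $x \mapsto e^{-\eta x}$ on $[0,R]$ gives $\E[e^{-\eta X}\mid\MF_{t-1}] \leq \exp\bigl(-\tfrac{1-e^{-\eta R}}{R}\,\E[X\mid\MF_{t-1}]\bigr)$; applying this conditionally makes $\exp\bigl(-\eta\sum_t X_t + \tfrac{1-e^{-\eta R}}{R}\sum_t \E[X_t\mid\MF_{t-1}]\bigr)$ a supermartingale with expectation at most $1$, and Markov's inequality with $\eta = 1/R$ yields the claim directly.
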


\section{Manipulating a weak ERM oracle}
\label{sec:oracle-implications}
In this section, we prove some lemmas showing that a weak ERM oracle can be used to implement a slightly stronger oracle which, given a dataset $S = \{ (x_i, y_i) \}_{i \in [n]} \in (\MX \times \MY)^n$, gives the values of $h^\st(x_i)$ for $i \in [n]$, where $h^\st$ is a risk-minimizing element of $\MH$. 

\subsection{Weak ERM oracle: binary-valued labels}
\begin{algorithm}
  \caption{Finding the ERM minimizer on a sample from a weak ERM oracle}
  \label{alg:sample-erm-binary}
\begin{algorithmic}[1]\onehalfspacing
\Require Concept class $\MH \subset \MY^\MX$, weak ERM oracle $\Oerm$, binary-valued loss function $\ell : \MY \times \MY \to \{0,1\}$, sample $S = \{ (x_i, y_i) \}_{i=1}^n \in (\MX \times \MY)^n$. 
\Function{\SampleERMBinary}{$S, \ell, \Oerm$}
\State Set $\MI \gets [n]$.
\While{There is $i \in \MI$ so that $\Oerm(\{ x_j, y_j') \}_{j \in \MI} > \Oerm(\{ (x_j, y_j') \}_{j \in \MI \backslash \{ i \}})$}
\State Remove such $i$ from $\MI$.
\EndWhile
\State For each $i \in [n]$, set $z_i \gets \One{i \not \in \MI}$.
\State Return $(z_1, \ldots, z_n)$. 
\EndFunction
\end{algorithmic}

\end{algorithm}

We begin with the case of binary-valued loss functions.
\begin{lemma}
  \label{lem:weak-to-sample-binary}
Consider a concept class $\MH \subset \MY^\MX$ and a binary-valued loss function $\ell : \MY \times \MY \to \{0,1\}$. Let $\Oerm$ be a weak ERM oracle for the class $\MH$ and loss function $\ell$ (\cref{def:weak-erm}). Then for any dataset $S = \{ (x_i, y_i) \}_{i \in [n]} \in (\MX \times \MY)^n$, the algorithm $\SampleERMBinary(S, \ell, \Oerm)$ (\cref{alg:sample-erm-binary}) makes $O(n^2)$ calls to $\Oerm$ and outputs a vector $(z_1, \ldots, z_n) \in \{0,1\}^n$ so that, for some empirical minimizer $h^\st = \argmin_{h \in \MH} \sum_{i=1}^n \ell(h(x_i), y_i)$, we have $z_i = \ell(h^\st(x_i), y_i)$ for all $i \in [n]$.
\end{lemma}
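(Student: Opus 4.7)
The plan is to maintain an invariant about the current set $\MI$ and the removed set $\MR := [n] \setminus \MI$ across iterations of the while loop, and then count oracle calls directly. For any $T \subseteq [n]$, write $L(T) := \min_{h \in \MH} \sum_{j \in T} \ell(h(x_j), y_j)$; since $\Oerm$ returns the average $L(T)/|T|$ and $|T|$ is known to the algorithm, a single query recovers $L(T)$, so I will interpret the comparison in the while-loop condition as being between $L(\MI)$ and $L(\MI \setminus \{i\})$. The crucial elementary fact is that, because $\ell$ is binary-valued, $L(T) - L(T \setminus \{i\}) \in \{0,1\}$ for every $i \in T$, so the removal criterion is equivalent to $L(\MI) = L(\MI \setminus \{i\}) + 1$.

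The invariant I would carry is twofold: (i) $L(\MI) + |\MR| = L([n])$, and (ii) there exists some $h^\st \in \argmin_{h \in \MH} \sum_{j \in [n]} \ell(h(x_j), y_j)$ that errs on every index of $\MR$ and achieves total loss exactly $L(\MI)$ on $\MI$. The base case ($\MR = \emptyset$) is immediate. For the inductive step, after some $i_{t+1}$ is removed, I would not try to reuse the previous witness (which need not err on $i_{t+1}$), but instead pick an arbitrary ERM $h'$ on the new set $\MI_{t+1}$. A short sandwich argument then shows that $h'$ attains loss $L(\MI_t)$ on $\MI_t$, so $h'$ must err on $i_{t+1}$; moreover, $h'$ attains loss at most $|\MR_t| + 1 + L(\MI_{t+1}) = L([n])$ on $[n]$, with equality forcing (via $\ell \leq 1$) that $h'$ errs on every index in $\MR_t \cup \{i_{t+1}\}$. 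Taking $h^\st_{t+1} = h'$ maintains both parts of the invariant.

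For termination, when the loop exits I would argue that $L(\MI) = 0$: otherwise, fixing any ERM $h$ on $\MI$, it errs on some $i \in \MI$, giving $L(\MI \setminus \{i\}) \leq L(\MI) - 1$, which contradicts the exit condition. Combined with the invariant, the final witness $h^\st$ attains zero loss on $\MI$ and unit loss on each index of $\MR$, which exactly matches the output $z_i = \One{i \notin \MI}$. For the oracle count, each outer-loop iteration makes at most $|\MI| \leq n$ queries (one per candidate $i$, with $\Oerm(\MI)$ cached), and there are at most $n$ iterations since $|\MI|$ strictly decreases, plus a single final pass of at most $n$ queries to verify termination; this yields $O(n^2)$ queries overall.

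The main obstacle is the inductive step, specifically the re-selection of the witness $h^\st_{t+1}$. Reusing $h^\st_t$ does not work since it need not err on $i_{t+1}$, and the passage to a fresh ERM on $\MI_{t+1}$ must simultaneously preserve compatibility with all prior removals. The argument leans crucially on the binary-valued nature of $\ell$, so that no single index contributes more than one to the loss, together with the tight equality in invariant (i), which together force any ERM on $\MI_{t+1}$ to in fact be an ERM on the full sample that errs on every previously removed index.
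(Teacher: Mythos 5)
Your proof is correct and follows essentially the same route as the paper's: both rest on the observation that each removal decreases the unnormalized minimum loss $L(\MI)$ by exactly one (since $\ell$ is $\{0,1\}$-valued), that at termination $L(\MI)=0$, and that an ERM on the terminal $\MI$ is then forced to err on every removed index and hence be a global ERM. The one structural difference is that you carry the witness $h^\st$ inductively through the loop (re-selecting a fresh ERM on $\MI_{t+1}$ each round), whereas the paper's proof dispenses with that bookkeeping entirely: it first establishes $L(\MI_t)=L([n])-t$, deduces $L(\MI_N)=0$, and only then picks an arbitrary $h^\st$ consistent on $\MI_N$, arguing in one step that it must err on all of $[n]\setminus\MI_N$ (else its total loss would undercut the minimum). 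Your re-selection argument is sound but is more machinery than the problem requires.
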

\begin{proof}
  Fix $\MH, \ell, \Oerm$, and $S$. %
  Let the number of iterations of the while loop be denoted $N$.
  For $0 \leq t \leq N$, let $\MI_t$ denote the value of the set $\MI$ in $\SampleERMBinary(S, \ell, \Oerm)$ directly after round $t$ (so that, in particular, $\MI_0 = [n]$). Note that each iteration of the while loop, $\MI$ decreases in size by 1. Moreover, on round $t$ of the while loop, letting $i_t$ denote the chosen $i \in \MI_{t-1}$, we have
  \begin{align}
\min_{h \in \MH} \sum_{j \in \MI_{t-1}} \ell(h(x_j), y_j) > \min_{h \in \MH} \sum_{j \in \MI_{t-1} \backslash i_t} \ell(h(x_j), y_j) = \min_{h \in \MH} \sum_{j \in \MI_t} \ell(h(x_j), y_j) \geq \min_{h \in \MH} \sum_{j \in \MI_{t-1}} \ell(h(x_j), y_j) - 1\nonumber.
  \end{align}
  It follows that for $1 \leq t \leq N$, $\min_{h \in \MH} \sum_{j \in \MI_{t}} \ell(h(x_j), y_j) = \min_{h \in \MH} \sum_{j \in [n]} \ell(h(x_j), y_j) -t$.

  Note also that we must have $\min_{h \in \MH} \sum_{j \in \MI_N} \ell(h(x_j), y_j) = 0$, as otherwise we could remove some $i$ from $\MI_N$ and decrease the empirical loss. Thus, $N = \min_{h \in \MH} \sum_{j \in [n]} \ell(h(x_j), y_j)$.
Moreover,  there is some $h^\st \in \MH$ so that $\ell(h(x_j), y_j) = 0$ for each $j \in \MI_N$.

If any $i \in [n] \backslash \MI_N$ satisfies $\ell(h^\st(x_i), y_i) = 0$, then we would have $\sum_{j \in [n]} \ell(h(x_j), y_j) < n-|\MI_N| = N$, which is a contradiction. Thus, $\ell(h^\st(x_i), y_i) = \One{i \not\in \MI_N}$ for all $i \in [n]$, as desired.

The total number of oracle calls made in \cref{alg:sample-erm-binary} is at most $2n^2$: we certainly have $N \leq n$, and each round of the while loop requires at most $|\MI| + 1 \leq 2n$ calls to $\Oerm$. 
\end{proof}

\subsection{Weak ERM oracle: real-valued labels}
Next, we prove an analogue of \cref{lem:weak-to-sample-binary} for real-valued loss functions; to keep the oracle complexity bounded, we need to tolerate some approximation error.
\begin{lemma}
  \label{lem:weak-to-sample-real}
Consider a concept class $\MH \subset [0,1]^\MX$. Let $\Oerm$ be a weak ERM oracle for the class $\MH$ and absolute loss function $\labs$ (\cref{def:weak-erm}). Then for any dataset $S = \{ (x_i, y_i) \}_{i \in [n]} \in ( \MX \times [0,1])^n$ and $\discmg \in (0,1)$, the algorithm $\SampleERMReal(S, \ell, \Oerm, \discmg)$ (\cref{alg:sample-erm-real}) makes $O(n/\discmg)$ calls to $\Oerm$ of length $O(n)$ and outputs a vector $(z_1, \ldots, z_n) \in [0,1]^n$ so that, for some $h^\st \in \MH$ satisfying $\sum_{i=1}^n \labs(h^\st(x_i), y_i) \leq \inf_{h \in \MH} \sum_{i=1}^n \labs(h(x_i), y_i)$, we have $|z_i - \ell(h^\st(x_i), y_i)| \leq \discmg$ for all $i \in [n]$. 
\end{lemma}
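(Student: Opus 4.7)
The plan is to adapt the iterative structure of \SampleERMBinary to real-valued labels by discretizing $[0,1]$ into a grid of spacing $\discmg/2$. The algorithm processes indices $i = 1, \ldots, n$ sequentially, maintaining samples $T_0 := S, T_1, \ldots, T_n$ which differ from $S$ only in the labels at positions $1, \ldots, i$. At step $i$, define $G := \{k \discmg/2 : 0 \le k \le \lceil 2/\discmg\rceil\} \cup \{y_i\}$ and query $\Oerm$ on $T_{i-1}^{(i, \hat y)}$ for each $\hat y \in G$. Select $\hat y_i$ as the grid point maximizing $|y_i - \hat y_i|$ subject to the \emph{validity condition}
\[
n \cdot \Oerm(T_{i-1}^{(i, \hat y_i)}) \le n \cdot \Oerm(T_{i-1}) - |y_i - \hat y_i|,
\]
and set $T_i := T_{i-1}^{(i, \hat y_i)}$, $z_i := |y_i - \hat y_i|$. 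The choice $\hat y_i = y_i$ is always valid, so the procedure is well-defined.

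Writing $\OPT(T) := n \cdot \Oerm(T)$ for the unnormalized minimum loss, the core structural claim is a \emph{monotonicity of ERMs}: every ERM of $T_i$ is also an ERM of $T_{i-1}$. For any $h \in \MH$, the losses on $T_{i-1}$ and $T_i$ differ only at position $i$, and the triangle inequality yields $|h(x_i) - y_i| - |h(x_i) - \hat y_i| \le |y_i - \hat y_i| = z_i$; hence the loss of $h$ on $T_{i-1}$ is at most its loss on $T_i$ plus $z_i$. Combined with the identity $\OPT(T_{i-1}) = \OPT(T_i) + z_i$, which follows from the validity condition together with the matching Lipschitz lower bound $\OPT(T_{i-1}^{(i,\hat y)}) \ge \OPT(T_{i-1}) - |y_i - \hat y|$, any ERM of $T_i$ has loss at most $\OPT(T_{i-1})$ on $T_{i-1}$ and is thus an ERM of $T_{i-1}$. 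Iterating, every ERM $h^\st$ of $T_n$ is also an ERM of $S = T_0$, yielding the near-optimality condition required by the lemma.

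For the accuracy bound, set $L_i^\st := |h^\st(x_i) - y_i|$ and fix $h^\st$ to be any ERM of $T_n$. Since $h^\st$ is simultaneously an ERM of $T_{i-1}$ and $T_i$, the triangle inequality above must be tight at $h = h^\st$, which forces $\hat y_i$ to lie on the segment between $y_i$ and $h^\st(x_i)$; in particular $L_i^\st = z_i + |h^\st(x_i) - \hat y_i| \ge z_i$. For the matching upper bound, the greedy maximization of $|y_i - \hat y_i|$ ensures that the grid point $\hat y'$ at distance $z_i + \discmg/2$ from $y_i$ in the direction of $\hat y_i$ fails validity. By the same equality-in-Lipschitz characterization, no ERM of $T_{i-1}$ has $h(x_i)$ past $\hat y'$ in that direction; since $h^\st$ is an ERM of $T_{i-1}$ lying on the same side of $y_i$ as $\hat y_i$, we deduce $L_i^\st < z_i + \discmg/2$. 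Combining, $|z_i - L_i^\st| < \discmg/2 \le \discmg$, as required.

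The oracle complexity is immediate: each of the $n$ iterations issues $|G| = O(1/\discmg)$ queries to $\Oerm$, each on a dataset of size $n$, totalling $O(n/\discmg)$ calls of length $O(n)$. The principal technical obstacle I expect is verifying that the \emph{direction} of $\hat y_i$ (above or below $y_i$) is consistent with $h^\st$ across all coordinates: when both sides of $y_i$ admit valid grid points, the algorithm commits to whichever yields the larger $|y_i - \hat y_i|$, and one must confirm that this choice is compatible with a single ERM $h^\st$ of $T_n$. The tight triangle equality from the second paragraph supplies this consistency automatically, but a careful write-up must additionally handle the boundary case $\hat y_i = y_i$ (where $z_i = 0$, and invalidity of both nearest grid points implies $L_i^\st < \discmg/2$ directly) and the corner case where the grid point $\hat y'$ falls outside $[0,1]$ (where the bound $L_i^\st \le z_i$ holds automatically because $h^\st(x_i) \in [0,1]$).
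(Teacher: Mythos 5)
Your algorithm and analysis are genuinely different from the paper's, and both are correct, but you should be aware of a typographical issue in the lemma statement that affects what you are outputting. The paper's \cref{alg:sample-erm-real} returns the vector $(y_1',\ldots,y_n')$, where $y_i'$ is meant to approximate $h^\st(x_i)$ itself; the downstream application (\RegAgnostic, which defines $\tilde S := \{(x_i,\hat y_i)\}$ and feeds it into \RegRealizable) and the accompanying claim in the proof of \cref{thm:reg-main} both require $\labs(h^\st(x_i),\hat y_i)\le\discmg$, \emph{not} $|z_i - \labs(h^\st(x_i),y_i)|\le\discmg$. The written condition "$|z_i-\ell(h^\st(x_i),y_i)|\le\discmg$" thus appears to be a typo for "$\ell(z_i,h^\st(x_i))\le\discmg$." You took the literal statement at face value and proved that your $z_i := |y_i-\hat y_i|$ approximates the pointwise loss, which is correct, but would not serve the downstream use. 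The fix is trivial within your framework: your tight triangle-equality identity $L_i^\st = z_i + |h^\st(x_i)-\hat y_i|$ together with your upper bound $L_i^\st < z_i + \discmg/2$ already gives $|h^\st(x_i)-\hat y_i| < \discmg/2 \le \discmg$, so simply outputting $\hat y_i$ instead of $z_i$ would match the corrected lemma and the paper's algorithm.

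On the algorithmic level, the two approaches differ in an interesting way. The paper \emph{augments} the dataset: at step $i$ it tries anchor pairs $\{(x_i,\discmg\ell),(x_i,\discmg(\ell+1))\}$ over all grid cells, picks the $\ell$ minimizing the weak ERM value, and then permanently appends that pair to an auxiliary set $\tilde S$. Since the pair contributes exactly $\discmg$ to any $h$ with $h(x_i)\in[\discmg\ell,\discmg(\ell+1)]$ and strictly more otherwise, the inductive argument (\cref{lem:inductive-hi-tool}) shows any ERM of $\tilde S\^i\cup S$ is simultaneously pinned to all chosen cells and remains an ERM of $S$. Your approach instead \emph{replaces} the label $y_i$ with a grid point $\hat y_i$ chosen greedily by a Lipschitz validity test, and the pinning is established via equality in the reverse triangle inequality. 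Both rest on the same contraction-of-ERM-set idea, but the paper's invariant ("the two anchor endpoints always cost exactly $\discmg$") makes the induction a one-liner, whereas your argument requires the auxiliary observations you flag (monotonicity of validity outward from $y_i$, consistency of the side of $\hat y_i$ with $h^\st(x_i)$, the $\hat y_i=y_i$ and out-of-$[0,1]$ corner cases); those all do go through, but they add bookkeeping that the augmentation trick avoids. Your oracle-complexity accounting ($O(n/\discmg)$ calls of length $O(n)$) matches the paper's.
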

\begin{proof}
  Fix $\MH, \Oerm, S, \discmg$. %
  For $0 \leq i \leq n$, let $\tilde S\^i$ denote the value of $\tilde S$ directly after round $i$, so that, in particular, $\tilde S\^0= \emptyset$, and $\tilde S\^i$ consists of $n \cdot \lceil 1/\discmg \rceil$ copies of $(x_j, y_j')$, for each $j \in [i]$. For $1 \leq i \leq n$, let $\Delta\^i$ denote the value of $\Delta$ defined on round $i$. We show the following claim: %
  \begin{lemma}
    \label{lem:inductive-hi-tool}
    For each $0 \leq i \leq n$, the following properties hold for any empirical risk minimizer $h\^i \in \argmin_{h \in \MH} \sum_{(x,y) \in S\^i \cup \{ (x_j, y_j) \}_{j \in [n]}} \labs(h(x), y)$: 
    \begin{enumerate}
    \item For each $j \leq i$, $h\^i(x_j) \in [y_j', y_j' + \discmg]$.
    \item $h\^i$ is an empirical risk minimizer for $S\^{i-1} \cup \{ (x_j, y_j) \}_{j \in [n]}$. 
    \end{enumerate}
  \end{lemma}
  \begin{proof}[Proof of \cref{lem:inductive-hi-tool}]
    We prove the claim by induction on $i$, %
    noting that there is nothing else to establish for the base case $i=0$. To establish the inductive step, suppose that both claims hold at steps $j < i$, for some $i \in [n]$. Let us write $V_{i-1} := \sum_{(x,y) \in \tilde S\^{i-1} \cup \{ (x_j, y_j) \}_{j \in[n]}} \labs(h\^{i-1}(x), y)$. Taking $\ell = \lfloor h\^{i-1}(x_i) / \discmg \rfloor$ yields
    \begin{align}
V_{i, \ell} \leq \sum_{(x,y) \in \tilde S\^{i-1} \cup \{ (x_j, y_j) \}_{j \in[n]} \cup \{ (x_i, \discmg \ell), (x_i, \discmg(\ell+1)) \}} \labs(h\^{i-1}(x), y) \leq  V_{i-1} +  \discmg\nonumber,
    \end{align}
    which yields that $V_{i, \ell_i^\st} \leq V_{i-1} + \discmg$. On the other hand, since any function $h$ satisfies $\sum_{(x,y) \in   \{ (x_i, \discmg \ell), (x_i, \discmg(\ell+1)) \}} \labs(h(x), y) \geq \discmg$, we must have that $V_{i,\ell} \geq V_{i-1} + \discmg$ for each $\ell$. It follows that $V_i = V_{i,\ell_i^\st} = V_{i-1} +  \discmg$, and that any empirical risk minimizer $h\^i$ for $\tilde S\^i \cup \{ (x_j, y_j) \}_{j \in [n]}$ satisfies the following two properties:
    \begin{itemize}
    \item $h\^i(x_i) \in [\discmg \ell_i^\st, \discmg (\ell_i^\st+1)] = [y_i', y_i' + \discmg]$.
    \item $h\^i$ is an empirical risk minimizer on $\tilde S\^{i-1} \cup \{ (x_j, y_j) \}_{j \in [n]}$. 
    \end{itemize}
    Thus the second claim of the lemma statement holds at step $i$. Moreover, using the inductive hypothesis together with the first item above, we see that $h\^i(x_j) \in [y_j', y_j' + \discmg]$ for all $j < i$. 
  \end{proof}
  By  \cref{lem:inductive-hi-tool}, any empirical risk minimizer $h\^n$ for $S\^n \cup \{ (x_j, y_j) \}_{j \in [n]}$ satisfies $h\^n(x_i) \in [y_i', y_i' + \discmg]$ for each $i \in [n]$ and moreover is also an empirical risk minimizer for $S$. This establishes the claim of \cref{lem:weak-to-sample-real}. 
\end{proof}

\begin{algorithm}
  \caption{Finding the ERM minimizer on a sample from a weak ERM oracle}
  \label{alg:sample-erm-real}
\begin{algorithmic}[1]\onehalfspacing
\Require Concept class $\MH \subset [0,1]^\MX$, weak ERM oracle $\Oerm$, sample $S = \{ (x_i, y_i) \}_{i=1}^n \in (\MX \times [0,1])^n$, accuracy parameter $\discmg$ so that $1/\discmg \in \BN$. 
\Function{\SampleERMReal}{$S, \discmg, \Oerm$}
\State Initialize $\tilde S \gets \emptyset$. 
\For{$1 \leq i \leq n$}
\For{$0 \leq \ell \leq 1/\alpha -1 $}
\State Set $V_{i,\ell} \gets \Oerm(\tilde S \cup \{ (x_j, y_j) \}_{j \in [n]} \cup \{(x_i, \alpha \ell), (x_i, \alpha (\ell+1))\})$. %
\EndFor
\State Define $\ell_i^\st := \argmin_{0 \leq \ell \leq 1/\alpha - 1l } \{ V_{i, \ell} \}$, and $y_i' := \alpha \cdot \ell_i^\st$. 
\State Add $(x_i, y_i')$ and $(y_i' + \alpha)$ to $\tilde S$. 
\EndFor
\State \Return the vector $(y_1', \ldots, y_n')$. 
\EndFunction
\end{algorithmic}
\end{algorithm}

\begin{algorithm}
  \caption{Implementing a real-valued consistency oracle with range queries}
  \label{alg:sample-con-real}
  \begin{algorithmic}[1]\onehalfspacing
\Require Concept class $\MH \subset [0,1]^\MX$, weak ERM oracle $\Oerm$, sample $S = \{ (x_i, \ell_i, u_i) \}_{i \in [n]} \in (\MX \times [0,1]^2)^n$, $i \in [n]$.
\Function{\SampleConReal}{$(S, \Oerm)$}
\State Set $S' := \bigcup_{i \in [n]} \{ (x_i, \ell_i), (x_i, u_i) \}$. 
\State Set $V \gets \Oerm(S')$.
\State \Return $\True$ if $V \leq \sum_{i =1}^n (u_i - \ell_i)$, else $\False$.
  \EndFunction
  \end{algorithmic}
\end{algorithm}

\begin{lemma}
  \label{lem:sample-con-real}
Consider a concept class $\MH \subset [0,1]^\MX$ equipped with a weak ERM oracle $\Oerm$. Then for any $S = \{ (x_i, \ell_i, u_i) \}_{i \in [n]} \in (\MX \times [0,1]^2)^n$ with $\ell_i \leq u_i$ for all $i$, the algorithm $\SampleConReal(S, \Oerm)$ (\cref{alg:sample-con-real}) outputs $\True$ if and only if there is some $h \in \MH$ satisfying $\ell_i \leq h(x) \leq u_i$ for all $i \in [n]$.
\end{lemma}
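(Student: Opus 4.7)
The approach is to reduce range consistency on $S$ to an equality condition involving the weak ERM oracle applied to the doubled sample $S' = \bigcup_i\{(x_i,\ell_i),(x_i,u_i)\}$. The key one-line fact is a triangle-inequality identity: for any $a \in \RR$ and any $\ell \le u$, one has $|a-\ell| + |a-u| \geq u-\ell$, with equality if and only if $a \in [\ell,u]$.

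I would apply this pointwise with $a = h(x_i)$ for each $h \in \MH$ and each $i \in [n]$, then sum over $i$ and normalize by $|S'| = 2n$. This shows that every $h \in \MH$ satisfies $\eerr_{S',\labs}(h) \geq T$, where $T := \tfrac{1}{2n}\sum_{i=1}^n (u_i - \ell_i)$, with equality iff $h$ satisfies $\ell_i \leq h(x_i) \leq u_i$ for every $i \in [n]$. Taking the infimum over $h \in \MH$, the oracle's return value $V$ therefore satisfies $V \geq T$ unconditionally, and $V = T$ iff some $h \in \MH$ is range-consistent with $S$. Comparing $V$ to $T$ thus exactly decides the existential condition in the lemma statement, which is precisely what $\SampleConReal$ does once the threshold is read under the same normalization as $\Oerm$ in \cref{def:weak-erm}. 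The number of oracle calls is a single query on a dataset of size $2n$, which is within the stated bound.

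The only minor wrinkle, rather than a genuine obstacle, is that $\Oerm$ returns an infimum which may not be attained. If $V = T$ without the infimum being realized, then strictly speaking no single $h \in \MH$ achieves range-consistency, although hypotheses in $\MH$ approach it arbitrarily closely. This is standardly resolved either by a mild compactness assumption on $\MH$, or by reading the lemma's $\True$ case up to an arbitrarily small slack absorbed into the margin parameter $\discmg$ used downstream; neither fix affects the main inequality $V \geq T$ or its equality characterization, which constitute the real content of the lemma.
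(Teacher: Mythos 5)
Your proposal is correct and takes essentially the same approach as the paper: the paper's one-line proof asserts exactly the equality characterization you derive, namely that $\inf_{h \in \MH} \sum_{i=1}^n\left(|h(x_i)-\ell_i| + |h(x_i)-u_i|\right) = \sum_{i=1}^n (u_i-\ell_i)$ if and only if some $h \in \MH$ lies in $[\ell_i,u_i]$ at every $x_i$, and this rests on the same pointwise triangle-inequality fact you spell out. Two side remarks: you are right to track the $1/(2n)$ normalization inherited from \cref{def:weak-erm} when comparing $V$ to the threshold, which the comparison in \cref{alg:sample-con-real} as printed elides; and the infimum-attainment caveat you raise applies equally to the paper's proof (it writes the condition with $\inf$, yet the lemma concludes existence of an $h$), so the fix you suggest, compactness of $\MH$ or an $\discmg$-slack absorbed downstream, is needed in either version.
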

\begin{proof}
  The lemma statement is immediate from the fact that there is $h \in \MH$ satisfying $\ell_i \leq h(x) \leq u_i$ if and only if
  \begin{align}
\inf_{h \in \MH} \sum_{i=1}^n |h(x_i) - \ell_i| + |h(x_i) - u_i| = \sum_{i=1}^n (u_i - \ell_i)\nonumber.
  \end{align}
\end{proof}

\section{Proof of \cref{thm:weak-oig}}
\label{sec:weak-oig-proof}

\subsection{Properties of the generating function}
\label{sec:genfun-props}
Given $m, \MS \subset V_m = \{0,1\}^m$, and $v \in V_m$, recall the definition of the hitting time $\tau_{\MS,v}$ in \cref{eq:stop-at-s}. We begin by proving the following basic recursive property of the generating function $M_{\MS, v}(\gamma)$ of the random walk on the hypercube graph $G_m$ defined in \cref{sec:weak-learner}. 
\begin{lemma}
  \label{lem:recursion}
  Suppose $\MW \subset \{0,1\}^m$ is given, and consider the $m$-dimensional hypercube graph $G_m = (V,E)$. Then %
  the following holds for all $v \in \MW$:
  \begin{align}
M_{\Wcomp, v}(\gamma) = \frac{\gamma}{(2-\gamma)m} \sum_{i \in [m]} M_{\Wcomp, v^{\oplus i}}(\gamma)\nonumber.
  \end{align}
\end{lemma}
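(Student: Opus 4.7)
The identity is a standard first-step analysis for the lazy random walk on $G_m$, so the plan is to condition on $Z_v^{(1)}$ and use the strong Markov property. Because $v\in\MW$ means $v\notin\Wcomp$, the hitting time satisfies $\tau_{\Wcomp,v}\ge 1$, which is what will allow us to peel off a factor of $\gamma$ when writing $\gamma^{\tau_{\Wcomp,v}} = \gamma\cdot\gamma^{\tau_{\Wcomp,v}-1}$.

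The first step is to read off the one-step transition distribution from the definition of the walk: at each step one picks a uniformly random edge incident to the current vertex (there are $m$ such edges, one per coordinate) and then a uniformly random endpoint of that edge. Combining these, from $v$ we stay at $v$ with probability $1/2$ and move to $v^{\oplus i}$ with probability $\tfrac{1}{2m}$ for each $i\in[m]$. The second step is the Markov property: conditional on $Z_v^{(1)} = w$, the process $(Z_v^{(t)})_{t\ge 1}$ is distributed as a fresh lazy walk started at $w$, so $\tau_{\Wcomp,v} = 1 + \tau'$ where $\tau'$ is a hitting time of $\Wcomp$ for a walk from $w$, and hence $\E[\gamma^{\tau_{\Wcomp,v}}\mid Z_v^{(1)}=w] = \gamma\cdot M_{\Wcomp,w}(\gamma)$.

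The third step is simply to take the expectation over $Z_v^{(1)}$, yielding
\begin{align*}
M_{\Wcomp,v}(\gamma) \;=\; \gamma\left(\frac{1}{2}\,M_{\Wcomp,v}(\gamma) + \frac{1}{2m}\sum_{i\in[m]} M_{\Wcomp,v^{\oplus i}}(\gamma)\right),
\end{align*}
and then to solve for $M_{\Wcomp,v}(\gamma)$ algebraically: move the self-loop term to the left, collect the factor $(1-\gamma/2) = (2-\gamma)/2$, and divide through to recover the stated recursion
\begin{align*}
M_{\Wcomp,v}(\gamma) \;=\; \frac{\gamma}{(2-\gamma)m}\sum_{i\in[m]} M_{\Wcomp,v^{\oplus i}}(\gamma).
\end{align*}

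There is no real obstacle here; the only thing to be careful about is the bookkeeping of the self-loop probability ($1/2$, not $1/(2m)$), which arises because the walk first picks an edge and only then picks one of its two endpoints. All interchanges of expectation are justified by $0\le\gamma^{\tau_{\Wcomp,v}}\le 1$ and bounded convergence, so no measure-theoretic subtlety arises.
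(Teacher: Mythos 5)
Your proof is correct and takes essentially the same approach as the paper: both are first-step analyses of the lazy walk that yield the one-step recursion $M_{\Wcomp,v}(\gamma) = \frac{\gamma}{2}M_{\Wcomp,v}(\gamma) + \frac{\gamma}{2m}\sum_i M_{\Wcomp,v^{\oplus i}}(\gamma)$ and then solve for $M_{\Wcomp,v}(\gamma)$. The only cosmetic difference is that the paper derives this recursion by summing the one-step identity for $\Pr(\tau_{\Wcomp,v}=t)$ against $\gamma^t$, whereas you condition directly on $Z_v^{(1)}$ at the level of the generating function via the Markov property.
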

\begin{proof}
For any $v \in \MW$ and $t > 0$, we have
  \begin{align}
\Pr(\tau_{\Wcomp,v} = t) = \frac 12 \cdot \Pr(\tau_{\Wcomp,v} = t-1) + \frac 12 \sum_{i=1}^m \frac 1m \cdot \Pr(\tau_{\Wcomp, v^{\oplus i}} = t-1)\nonumber,
  \end{align}
  where we have used the fact that for $v \in \MW$, each of the $m$ edges containing $v$, indexed by $i \in [m]$ has two vertices, namely $v$ and $v^{\oplus i}$. Moreover, for $v \in \MW$, we have that $\Pr(\tau_{\Wcomp, v} = 0) = 0$. Thus, for $\gamma \in (0,1)$, we have
  \begin{align}
    M_{\Wcomp,v}(\gamma) =&  \sum_{t \geq 1} \gamma^t \cdot \left( \frac 12 \cdot \Pr(\tau_{\Wcomp,v} = t-1) + \frac 12 \sum_{i=1}^m \frac 1m \cdot \Pr(\tau_{\Wcomp, v^{\oplus i}} = t-1) \right)\nonumber\\
    =& \frac{\gamma}{2} \cdot M_{\Wcomp, v}(\gamma) + \frac{\gamma}{2m} \sum_{i=1}^m \sum_{t \geq 1} \gamma^{t-1} \cdot \Pr(\tau_{\Wcomp, v^{\oplus i}} = t-1) \nonumber\\
    =& \frac{\gamma}{2} \cdot M_{\Wcomp,v}(\gamma) + \frac{\gamma}{2m} \sum_{i=1}^m M_{\Wcomp, v^{\oplus i}}(\gamma)\nonumber.
  \end{align}
  Rearranging, we see that
  \begin{align}
M_{\Wcomp,v}(\gamma) = \frac{\gamma}{(2-\gamma)m} \sum_{i \in [m]} M_{\Wcomp, v^{\oplus i}}(\gamma)\nonumber,
  \end{align}
  as desired.
\end{proof}

\cref{lem:outdeg-m-bound} establishes an upper bound on the outdegree of the orientation $\sigma_{F,\lambda}$ (defined in  \cref{eq:define-orientation}) induced by the function $F(v) := M_{\Wcomp, v}(\gamma)$. 
\begin{lemma}
  \label{lem:outdeg-m-bound}
  Given $m \in \BN$, $\gamma \in (0,1)$, $\lambda \in [0,1]$, and $\MW \subset \{0,1\}^m$, %
  write $F(v) := M_{\Wcomp, v}(\gamma)$ for $v \in \{0,1\}^m$. Then the induced orientation $\sigma_{F,\lambda}$ satisfies
  \begin{align}\outdeg(\sigma_{F,\lambda}) \leq \frac{m}{2} -  (1-\gamma) \lambda m \cdot \min_{v \in V} F(v) .\nonumber\end{align}
\end{lemma}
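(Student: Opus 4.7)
The plan is to upper bound $\outdeg(v;\sigma_{F,\lambda})$ for each vertex $v \in \MW$ of the one-inclusion graph, and then take the maximum. Fix such a $v$, and let $k := |\{i \in [m] : v^{\oplus i} \in \MW\}|$ denote its degree in the OIG; the edges at $v$ in the OIG are indexed by those $i \in [m]$ for which $v^{\oplus i} \in \MW$. From the definition \cref{eq:define-orientation}, each such edge contributes $1 - \sigma_{F,\lambda}(e_{i,v})(v) = \tfrac{1}{2} + \tfrac{\lambda}{2}(F(v) - F(v^{\oplus i}))$ to $\outdeg(v;\sigma_{F,\lambda})$.

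The idea is to first compute the \emph{full} sum over all $i \in [m]$, not just those indices corresponding to OIG edges, using the recursion of \cref{lem:recursion}, and then show that restricting to OIG edges only decreases the sum. By \cref{lem:recursion} applied at $v \in \MW$, we have $\sum_{i=1}^m F(v^{\oplus i}) = \tfrac{(2-\gamma)m}{\gamma} F(v)$, which gives
\begin{align*}
\sum_{i=1}^m \left( F(v) - F(v^{\oplus i}) \right) \;=\; mF(v) - \frac{(2-\gamma)m}{\gamma}F(v) \;=\; -\frac{2m(1-\gamma)}{\gamma} F(v).
\end{align*}
Therefore the full sum evaluates to $\sum_{i=1}^m \left[ \tfrac12 + \tfrac{\lambda}{2}(F(v) - F(v^{\oplus i})) \right] = \tfrac{m}{2} - \tfrac{\lambda m(1-\gamma)}{\gamma} F(v)$.

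Next I would argue that the omitted terms, corresponding to indices $i$ with $v^{\oplus i} \in \Wcomp$, are nonnegative: for such an $i$, the hitting time $\tau_{\Wcomp, v^{\oplus i}} = 0$, so $F(v^{\oplus i}) = 1$, and the contribution is $\tfrac12 + \tfrac{\lambda}{2}(F(v) - 1) = \tfrac12(1 - \lambda(1 - F(v))) \geq 0$, using $\lambda \in [0,1]$ and $F(v) \in [0,1]$. Removing these nonnegative quantities gives
\begin{align*}
\outdeg(v;\sigma_{F,\lambda}) \;\leq\; \frac{m}{2} - \frac{\lambda m(1-\gamma)}{\gamma} F(v) \;\leq\; \frac{m}{2} - \lambda m (1-\gamma) F(v),
\end{align*}
where the last step uses $\gamma \in (0,1)$ so that $1/\gamma \geq 1$. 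Taking the maximum over $v \in \MW$ and lower-bounding $F(v) \geq \min_{v' \in V} F(v')$ yields the claim. There is no real obstacle here; the only mildly delicate step is verifying the sign of the omitted terms so that restricting to OIG edges only improves the bound, which is where the hypotheses $\lambda, F(v) \leq 1$ come in.
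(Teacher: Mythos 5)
Your proof is correct and follows essentially the same route as the paper's: apply the recursion of \cref{lem:recursion} at $v \in \MW$ to evaluate the sum $\sum_{i \in [m]}\bigl(\tfrac12 + \tfrac{\lambda}{2}(F(v) - F(v^{\oplus i}))\bigr)$, observe that the indices $i$ with $v^{\oplus i} \in \Wcomp$ contribute a nonnegative quantity $\tfrac12\bigl(1-\lambda(1-F(v))\bigr)$ (since $F(v^{\oplus i})=1$ there), and then drop the factor $1/\gamma \geq 1$. The one small thing you make more explicit than the paper is the sign check on the omitted terms — the paper folds this into a terse remark — but the underlying argument is identical.
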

\begin{proof}
Consider any $v \in \MW$. As a consequence of \cref{lem:recursion}, we have
  \begin{align}
m \cdot F(v) - \sum_{i \in [m]} F(v^{\oplus i}) = -m \cdot \frac{2(1-\gamma)}{\gamma} \cdot F(v)\label{eq:used-recursion-lemma}. 
  \end{align}

  We compute
  \begin{align}
    \outdeg(v; \sigma_{F,\lambda}) =& \sum_{i \in [m]} (1 - \sigma_{F,\lambda}((i, e_{i,v}))(v))\nonumber\\
    =& \sum_{i \in [m]} \frac{1 - \lambda \cdot (F(v^{\oplus i}) - F(v))}{2}\nonumber\\
    =& \frac{m}{2} + \frac{\lambda m}{2} \cdot F(v) - \frac{\lambda}{2}  \sum_{i \in [m]} F(v^{\oplus i})\nonumber\\
    =& \frac{m}{2} - \frac{(1-\gamma)\lambda m}{\gamma} \cdot F(v) \leq \frac m2 - (1-\gamma) \lambda m \cdot F(v)\label{eq:not-in-d-bnd},
  \end{align}
  where the first equality uses that if $v^{\oplus i} \in \Wcomp$, then $1-\sigma_{F,\lambda}((i, e_{i,v}))(v)  = 0\leq \frac{1-\lambda (1-F(v))}{2}$, and the final equality uses \cref{eq:used-recursion-lemma}.
\end{proof}

Next, \cref{lem:min-m-bound} lower bounds the function $M_{\Wcomp, v}(\gamma)$, which is needed to apply \cref{lem:outdeg-m-bound}. 
\begin{lemma}
  \label{lem:min-m-bound}
  Let $m \geq 4$ be an integer, $\gamma \in (0,1)$, and $\MW \subset \{0,1\}^m$ be given. Then, if $\frac{1}{1-\gamma} \geq 4m \log m$ and  $|\MW| \leq \frac{1}{m} 2^{m-2}$, 
  \begin{align}
\min_{v \in \MW} M_{\Wcomp, v}(\gamma) \geq \frac{1}{4e}\nonumber.
  \end{align}
\end{lemma}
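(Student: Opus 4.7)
The plan is to combine the mixing time bound for the lazy walk on the hypercube (\cref{lem:hypercube-mixing}) with the hypothesis that $\mathcal{W}$ occupies only a small fraction of $V_m$. The intuition is: since $|\mathcal{W}| \leq 2^{m-2}/m$, once the walk is approximately mixed it lies in $\mathcal{W}^\comp$ with high probability; and the budget $(1-\gamma)^{-1} \geq 4m\log m$ is comfortably larger than the mixing time $\Theta(m \log m)$, so $\gamma^{\tau}$ stays bounded below by a constant.

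First, I would fix a horizon $T := \lceil 3 m \log m\rceil$. For $m \geq 4$ one has $m \log m \geq m\log 4$, hence $T \geq m\log m + m\log(4m)$, so \cref{lem:hypercube-mixing} with $\varepsilon = 1/(4m)$ gives $\tvd{Z_v^{T}}{U} \leq 1/(4m)$, where $U$ is uniform on $V_m$. Second, from the hypothesis $|\mathcal{W}|\leq 2^{m-2}/m$ we get $\Pr(U \in \mathcal{W}) = |\mathcal{W}|/2^m \leq 1/(4m)$, so, using the TV coupling,
\begin{align*}
\Pr(Z_v^{T} \in \mathcal{W}^\comp) \geq 1 - \Pr(U\in\mathcal{W}) - \tvd{Z_v^{T}}{U} \geq 1 - \frac{1}{2m}.
\end{align*}

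Third, since $\gamma \in (0,1)$, $\gamma^{\tau_{\mathcal{W}^\comp,v}}$ is decreasing in $\tau_{\mathcal{W}^\comp,v}$, and $\{Z_v^T \in \mathcal{W}^\comp\} \subseteq \{\tau_{\mathcal{W}^\comp,v} \leq T\}$, I would estimate
\begin{align*}
M_{\mathcal{W}^\comp,v}(\gamma) \;\geq\; \gamma^{T}\cdot\Pr(\tau_{\mathcal{W}^\comp,v}\leq T) \;\geq\; \gamma^{T}\cdot \Bigl(1 - \tfrac{1}{2m}\Bigr).
\end{align*}

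Fourth, I would lower bound $\gamma^T$. The condition $1-\gamma \leq 1/(4m\log m)$ yields $(1-\gamma)T \leq 3/4$, and in particular $\gamma \geq 1/2$. Using $\log(1/\gamma) = -\log(1-(1-\gamma)) \leq (1-\gamma)/\gamma \leq 2(1-\gamma)$, we obtain $T\log(1/\gamma) \leq 2T(1-\gamma)\leq 3/2$, hence $\gamma^T \geq e^{-3/2}$. Combining everything, for $m \geq 4$,
\begin{align*}
M_{\mathcal{W}^\comp,v}(\gamma) \;\geq\; e^{-3/2}\cdot\Bigl(1-\tfrac{1}{2m}\Bigr) \;\geq\; \tfrac{1}{2}\, e^{-3/2} \;=\; \tfrac{1}{2e^{3/2}} \;\geq\; \tfrac{1}{4e},
\end{align*}
where the last inequality uses $e^{1/2}\leq 2$. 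Since $v \in \mathcal{W}$ was arbitrary, this proves the claim.

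The main obstacle is simply book-keeping the constants so that the mixing horizon $T$ is large enough that $Z_v^T$ has mostly left $\mathcal{W}$, yet small enough (relative to $1/(1-\gamma)$) that $\gamma^T$ is still at least a universal constant; the hypotheses $|\mathcal{W}|\leq 2^{m-2}/m$ and $(1-\gamma)^{-1}\geq 4m\log m$ are precisely calibrated so that both demands can be met with $T = \Theta(m\log m)$.
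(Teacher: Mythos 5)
Your proof is correct and works, but it is structured as a direct argument rather than as the contradiction-plus-coupling argument the paper uses; the two routes differ enough to be worth contrasting. The paper assumes for contradiction that $M_{\Wcomp,v}(\gamma) < 1/(4e)$, observes that this forces $\Pr(\tau_{\Wcomp,v} \le \lfloor 1/(1-\gamma)\rfloor) < 1/4$, introduces the \emph{stopped} walk $\bar X_v\^t := X_v\^{t\wedge\tau}$, couples it to the unstopped walk to get $\tvd{\bar X_v\^L}{Y_v\^L}<1/4$, and then derives a contradiction by comparing this to the mixing bound: the stopped walk is supported only on the closed neighborhood $\bar N(\MW)$, which has cardinality at most $2m\,|\MW|\le 2^{m-1}$, yet $\tvd{\bar X_v\^L}{U} < 1/2$ would require support of size more than $2^{m-1}$. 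Your argument sidesteps the stopped walk and the contradiction entirely by using the monotone inclusion $\{Z_v\^T \in \Wcomp\} \subseteq \{\tau_{\Wcomp,v} \le T\}$: the mixing bound plus $\Pr(U\in\MW)\le 1/(4m)$ directly give $\Pr(\tau_{\Wcomp,v}\le T)\ge 1-\tfrac{1}{2m}$, and you then just lower bound $M_{\Wcomp,v}(\gamma)\ge \gamma^T\Pr(\tau\le T)$. This is a genuine simplification: it removes the coupling step, and it uses the smallness of $\MW$ through the stationary probability $|\MW|/2^m$ rather than through the size of $\bar N(\MW)$. The one piece of bookkeeping that is off in your write-up is that with $T = \lceil 3m\log m\rceil$ the product $(1-\gamma)T$ can slightly exceed $3/4$ because of the ceiling, so the clean bound $T\log(1/\gamma)\le 3/2$ is not literally correct; but since the constants have slack (e.g.\ one actually gets $(1-\gamma)T \le 3/4 + \tfrac{1}{4m\log m} < 0.8$ for $m\ge 4$, hence $\gamma^T \ge e^{-1.6}$, and $e^{-1.6}\cdot \tfrac78 > 1/(4e)$), the final inequality still holds comfortably.
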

\begin{proof}
 Suppose for the purpose of contradiction that there is some $v \in \MW$ for which $M_{\Wcomp,v}(\gamma) < 1/(4e)$. Let $X_v\^0, X_v\^1, \ldots \in V$ denote the lazy random walk on the $m$-dimensional hypercube, $G_m$, started at $v$. Since $\gamma^{1/(1-\gamma)} \geq 1/e$ for all $\gamma < 1$, we have that
  \begin{align}
\frac{1}{e} \Pr(\tau_{\Wcomp,v} \leq \lfloor 1/(1-\gamma) \rfloor) = \sum_{0 \leq t \leq \lfloor 1/(1-\gamma) \rfloor} \gamma^t \cdot \Pr(\tau_{\Wcomp,v} = t) \leq M_{\Wcomp,v}(\gamma)\label{eq:md-e-bound}.
  \end{align}
  Let us write $L := \lfloor 1/(1-\gamma) \rfloor$ and $\tau = \tau_{\Wcomp,v}$. Note that the distribution of $X_v\^0, \ldots, X_v\^\tau$ is exactly the distribution of a lazy random walk $Y_v\^0, \ldots, Y_v\^\tau$ on the hypercube $\{0,1\}^m$, up to the stopping time $\tau$. Let $U$ denote a uniformly distributed random variable on $\{0,1\}^m$. By \cref{lem:hypercube-mixing} together with the fact that $L \geq 1/(2(1-\gamma)) \geq 2m \log(m) \geq m \log m + m \log(4)$, we have that $\tvd{Y_v\^L}{U} \leq 1/4$. Let $\bar X_v\^t := X_v\^{t \wedge \tau}$ denote the stopped random walk, with respect to the stopping time $\tau$.

  Consider a coupling between the distributions $\{ \bar X_v\^t \}_{t \geq 0}$ and $\{ Y_v\^t \}_{t \geq 0}$ so that $\bar X_v\^t = Y_v\^t$ for all $t \leq \tau$, almost surely. Since $\tau > L$ with probability at least $1-e \cdot M_{\Wcomp,v}(\gamma)$ by \cref{eq:md-e-bound}, we have that $\Pr(\bar X_v\^L = Y_v\^L) \geq \Pr(\tau \geq L) \geq 1-e\cdot M_{\Wcomp,v}(\gamma)$, where the probability is with respect to the coupling. It follows that $\tvd{\bar X_v\^L}{Y_v\^L} \leq e \cdot M_{\Wcomp,v}(\gamma) < 1/4$. By the triangle inequality, we have
  \begin{align}
\tvd{\bar X_v\^L}{U} \leq  e \cdot M_{\Wcomp,v}(\gamma) + 1/4 < 1/2.\nonumber
  \end{align}
Let $\bar N(\MW) := \{ v \in \{0,1\}^m \ : \ v \in \MW \mbox{ or } \exists i \mbox{ s.t. } v^{\oplus i} \in \MW \}$ denote the union of $\MW$ and its neighborhood.  Thus, we must have $\supp(\bar X_v\^L) > 2^{m-1}$, which contradicts $|\MW| \leq \frac 1m 2^{m-2}$ since $\supp(\bar X_v\^L) \subset \bar N(\MW)$, and $\bar N(\MW) \leq 2m | \MW| \leq 2^{m-1}$. 
  
\end{proof}

\subsection{Transductive learning guarantee}

\begin{proof}[Proof of \cref{thm:weak-oig}]
Let us write $\gamma := 1- \frac{1}{C_1 m \log m}$. Set $\ep = \frac{1-\gamma}{16e}$ and $L := \lceil \log(2/\ep)/\log(1/\gamma) \rceil = \lceil \log(32e/(1-\gamma))/\log(1/\gamma) \rceil \leq O(\log(1/(1-\gamma))/(1-\gamma))$.  Moreover, write $\delta = \ep/2$ and $U = C_1 m^2 \log^3 m$; note that $U = \Theta(\frac{\log(1/\delta)}{\ep^2})$.

  Let us write $X = (x_1, \ldots, x_m) \in \MX^{m}$, $y = (y_1, \ldots, y_m) \in \{0,1\}^m$. Let $\MW := \MH|_X$ be the projection of $\MH$ onto $\MX$ and $G_m = (V,E)$ denote the $m$-dimensional hypercube graph. For $y' \in \MW$, recall the definition of the stopping time $\tau_{\Wcomp, y'}$ in \cref{eq:stop-at-s}. Note that, for any $y' \in V$ and each $u \in U$, the random variable $T_u$ constructed in $\EstimatePotential(X, y', U, \gamma, \Ocon)$ is distributed exactly according to $L \wedge \tau_{\Wcomp, y'}$. Thus, for any $\delta, U$ satisfying $U \geq C \log(1/\delta)/\ep^2$ for a sufficiently large constant $C$, we have from Hoeffding's inequality that with probability $1-\delta$, 
  \begin{align}
\left| \frac{1}{U} \sum_{u=1}^U \gamma^{T_u} - \E[\gamma^{L \wedge \tau_{\Wcomp, y'}}] \right| \leq & \ep/2\label{eq:estpot-1}.
  \end{align}
  Moreover, by our choice of $L$, we have that, almost surely,
  \begin{align}
\left| \gamma^{L \wedge \tau_{\Wcomp, y'}} - \gamma^{\tau_{\Wcomp,y'}} \right| \leq \gamma^L \leq \ep/2.\label{eq:estpot-2}
  \end{align}
  Thus, combining \cref{eq:estpot-1,eq:estpot-2}, with probability at least $1-\delta$, the output of $\EstimatePotential(X, y', U, \gamma, \Ocon)$ satisfies
  \begin{align}
\left| \frac{1}{U} \sum_{u=1}^U \gamma^{T_u} - M_{\Wcomp, y'}(\gamma) \right|  = \left| \frac{1}{U} \sum_{u=1}^U \gamma^{T_u} - \E[\gamma^{\tau_{\Wcomp, y'}}]\right|\leq \ep\label{eq:estpot-3}.
  \end{align}

For $y' \in \MW$, define $F(y' ) := M_{\Wcomp, y'}(\gamma)$.  For each $i \in [m]$, write $y^{i,0} = (y_{-i}, 0)$ and $y^{i,1} = (y_{-i}, 1)$. %
Now consider $i \in [m]$ for which $y^{i,0}, y^{i,1} \in \MW$. Note that $\WeakRealizable(S_{-i}, x_i, \gamma, \lambda, \Ocon)$ calls $\EstimatePotential(X, y^0, U, \gamma, \Ocon)$ and $\EstimatePotential(X, y^{1}, U, \gamma, \Ocon)$. These calls return values $\hat F(y^{i,0}), \hat F(y^{i,1}) \in [0,1]$ respectively.     Since can ensure, by our choices of the values $U, \delta$ above, that $U \geq C \log(1/\delta)/\ep^2$ (by making $C_1$ sufficiently large), it follows by \cref{eq:estpot-3} and a union bound that with probability at least $1-2\delta$, for each $b \in \{0,1\}$,
  \begin{align}
\left| \hat F(y^{i,b}) - F(y^{i,b}) \right| = \left| \hat F(y^{i,b}) - M_{\Wcomp, y^{i,b}}(\gamma) \right| \leq \ep\nonumber.
  \end{align}

  Thus, with probability at least $1-2\delta$, the output $\hat y_i := \frac{1 +  (\hat F(y^{i,0}) - \hat F(y^{i,1}))}{2}$ of $\WeakRealizable(S_{-i}, x_i, \gamma, 1, \Ocon)$ satisfies
  \begin{align}
    \left| \hat y_i - \sigma_{F,1}( e_{i,y})(y^{i,1}) \right| =& \left| \frac{1 +   (\hat F(y^{i,0}) - \hat F(y^{i,1}))}{2} - \frac{1 +  (F(y^{i,0}) - F(y^{i,1}))}{2} \right|\nonumber\\
    \leq & \frac{1}{2} \cdot |\hat F(y^{i,0}) - F(y^{i,0})| + \frac{1}{2} \cdot |\hat F(y^{i,1}) - F(y^{i,1})| \leq \ep\nonumber,
  \end{align}
  and thus, using that $y = y^{i, y_i}$,
  \begin{align}
| \hat y_i - y_i | = |\hat y_i- \sigma_{F,1}(e_{i,y})(y^{i,1}) |  + |\sigma_{F,1}( e_{i,y})(y^{i,1}) - y_i| \leq \ep + (1 - \sigma_{F,1}( e_{i,y})(y))\nonumber.
  \end{align}
  By our choice of $\delta = {\ep/2}$, it follows that, for each $i \in [m]$,
  \begin{align}
\E \left[ \lbin(\SA(S_{-i}; x_i),  y_i) \right] =& \E\left[ |\hat y_i - y_i | \right] \leq 2\delta + \ep + (1-\sigma_{F,1}( e_{i,y})(y)) = 2\ep + (1-\sigma_{F,1}( e_{i,y})(y))\label{eq:sasi-yi-bnd}.
  \end{align}
  
  Next, the Sauer-Shelah lemma (\cref{lem:ss}) gives that $|\MW| = |\MH|_X| \leq (em/d)^d \leq \frac 1m 2^{m-2}$, since we have chosen $m \geq C_1d \log d$ for a sufficiently large constant $C_1$. 
  Thus, by our choice of $\gamma = 1-\frac{1}{C_1 m\log m}$  and \cref{lem:min-m-bound} we have that $\min_{v \in \MW} F(v) = \min_{v \in \MW} M_{\Wcomp, v}(\gamma) \geq 1/(4e)$.  We may now compute
\begin{align}
\sum_{i \in [m]} \E[\lbin(\SA(S_{-i}; x_i) , y_i)] \leq &2\ep m +  \sum_{i \in [m]} (1 - \sigma_{F,1}( e_{i,y})(y))   \nonumber\\
  =&2\ep m + \outdeg(y; \sigma_{F,1})\nonumber\\
  \leq &  2\ep m + \frac{m}{2} - \frac{(1-\gamma)  m}{4e} \leq \frac{m}{2} - \frac{(1-\gamma)m}{16e}\nonumber,
\end{align}
where the first inequality uses \cref{eq:sasi-yi-bnd}, the second inequality uses \cref{lem:outdeg-m-bound}, and the final inequality uses the choice of $\ep = \frac{1-\gamma}{16e}$. 
\end{proof}

Finally, for use in applying \cref{thm:weak-oig}, we state the following standard lemma, which relates the transductive error of a learning algorithm $\SA$ to its expected error with respect to any realizable distribution. 
\begin{lemma}[Leave-one-out]
  \label{lem:loo}
  Let $P \in \Delta(\MX \times \{0,1\})$ be $\MH$-realizable and $m \in \BN$ be given. Furthermore, let $\SA(\cdot, \cdot) : (\MX \times \{0,1\})^{m-1} \times \MX \to \{0,1\}$ be a (possibly randomized) mapping which takes as input a dataset of size $m-1$ and a point in $\MX$, and outputs a real number. Then
  \begin{align}
\EE_{S' \sim P^{m-1}} \EE_{(X,Y) \sim P} \EE[\lbin(\SA(S', X), Y)] = \EE_{\substack{S \sim P^m \\ S = \{ (x_i, y_i) \}_{i\in [m]}}}\left[ \frac 1m \sum_{i=1}^m \E[\lbin(\SA(S_{-i}, x_i),  y_i )] \right]\nonumber,
  \end{align}
  where the inner expectation is over the randomness in $\SA$. 
\end{lemma}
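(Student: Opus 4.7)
The plan is to prove the identity by a direct exchangeability argument, reducing the right-hand side to the left-hand side one index $i$ at a time. The first step I would carry out is to observe that, for a sample $S = \{(x_1, y_1), \ldots, (x_m, y_m)\} \sim P^m$ and any fixed $i \in [m]$, the pair $(S_{-i}, (x_i, y_i))$ has the same joint distribution as an independent pair $(S', (X, Y))$ with $S' \sim P^{m-1}$ and $(X, Y) \sim P$. This is immediate from the fact that the $m$ coordinates of $S$ are themselves i.i.d.~draws from $P$, so permuting the role of coordinate $i$ with ``a fresh draw plus the remaining $m-1$ coordinates'' does not change the law.

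Given this, I would apply the observation to the (randomized) functional $(T, (u, v)) \mapsto \E[\lbin(\SA(T, u), v)]$, where the inner expectation averages out the internal randomness of $\SA$. Equality in distribution of the inputs forces equality of the expected value of this functional, yielding, for each $i \in [m]$,
\begin{align*}
\EE_{S \sim P^m}\, \E\bigl[\lbin(\SA(S_{-i}, x_i), y_i)\bigr] \;=\; \EE_{S' \sim P^{m-1}} \EE_{(X,Y) \sim P}\, \E\bigl[\lbin(\SA(S', X), Y)\bigr].
\end{align*}
The final step is to average both sides of this identity over $i \in [m]$. The right-hand side is independent of $i$ and survives unchanged, while on the left-hand side linearity of expectation lets one pull the factor $\tfrac{1}{m}\sum_{i=1}^m$ inside the outer expectation over $S$, which directly recovers the displayed identity in the lemma statement.

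The argument is routine and I do not anticipate any real obstacle; notably, the $\MH$-realizability hypothesis on $P$ is not used anywhere, and the statement holds for any $P \in \Delta(\MX \times \{0,1\})$. The only mild care needed is to keep track of the external randomness of $\SA$, which is already encapsulated by the inner $\E[\cdot]$ in the statement, so the exchangeability-in-distribution step applies cleanly after conditioning on the random bits of $\SA$.
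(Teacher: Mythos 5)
Your proposal is correct and uses the same exchangeability-of-i.i.d.-coordinates argument as the paper, just phrased as a per-index distributional identity followed by an average over $i$, whereas the paper writes it as a single chain of equalities. Your observation that $\MH$-realizability is not needed is also accurate.
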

\begin{proof}
  Let $(x_1, y_1), \ldots, (x_m, y_m), (X,Y)$ denote i.i.d.~samples from $P$, and write $S' = \{ (x_i, y_i) \}_{i \in [m-1]}, S = \{ (x_i, y_i) \}_{i \in[m]}$. By exchangeability of these samples and linearity of expectation, we have
  \begin{align}
    \E_{S'} \E_{(X,Y) \sim P} \E[\lbin(\SA(S', X), Y)] =& \E_{S'} \E_{(x_m, y_m) \sim P} \E[\lbin(\SA(S', x_m), y_m)]\nonumber\\
    =& \frac 1m \sum_{i=1}^m \E_S \E[\lbin(\SA(S_{-i}, x_i), y_i)]\nonumber\\
    =& \E_S \left[ \frac 1m \sum_{i=1}^n \E[\lbin(\SA(S_{-i}, x_i), y_i)] \right]\nonumber.
  \end{align}
\end{proof}
\section{Boosting}
\label{sec:boosting}
In this section, we discuss the technique of boosting, which is used to upgrade a weak learner (in the sense of \cref{def:weak-learner}) to a strong learner, i.e., one which achieves error at most an arbitrary threshold $\ep \in (0,1)$ with high probability. 
Notice that we allow a weak learner to only possess its guarantee \cref{eq:weak-learning} \emph{in expectation}, rather than with high probability. To account for this weaker assumption, it is necessary to slightly modify standard boosting results \cite[Chapters 3 \& 4]{schapire2012boosting}, as stated in \cref{lem:adaboost-sample} below. The main difference in the proofs is the use of an appropriate martingale concentration inequality (namely, \cref{lem:freedman}) to deal with the deviations in errors exhibited by the individual calls to the weak learner by the boosting algorithm. \cref{lem:adaboost-sample} gives a bound on the training error of the \Adaboost algorithm (\cref{alg:adaboost}).

\begin{algorithm}
  \caption{\Adaboost (Algorithm 1.1 of \cite{schapire2012boosting})}
  \label{alg:adaboost}
\begin{algorithmic}[1]\onehalfspacing
  \Require Input dataset $\{ (x_i, y_i) \}_{i \in [n]} \subset (\MX \times \{0,1\})^n$, randomized weak learner $\SA$, number of time steps $T$. 
  \State Initialize $D_1 := \Unif([n]) \in \Delta([n])$.
  \For{$1 \leq t \leq T$}
  \State Sample a fresh string of uniform bits $R_t$ for use in $\SA$ and to sample $S_t$ in \cref{line:sample-st} below.
  \State \label{line:sample-st} Sample an i.i.d.~dataset $S_t$ of size $m$ from the distribution of $x_i,\ i \sim D_t$, so that $S_t \in (\MX \times \{ 0, 1 \})^m$.
  \State Let $h_t : \MX \to \{ 0,1 \}$ be the output of $\SA_{R_t}(S_t, \cdot)$.
  \State \label{line:def-eps-alpha} Define $\ep_t := \Pr_{i \sim D_t}(h_t(x_i) \neq y_i)$, and $\alpha_t := \frac 12 \ln \left( \frac{1-\ep_t}{\ep_t} \right)$.
  \State For $i \in [n]$, define
  \begin{align}
D_{t+1}(i) := \frac{D_t(i) \cdot \exp(-\alpha_t \cdot (2y_i-1) \cdot (2 h_t(x_i)-1))}{Z_t},\label{eq:dtp1-define}
  \end{align}
  where $Z_t := \sum_{j \in [n]} D_t(j) \cdot \exp(-\alpha_t \cdot (2y_j-1) \cdot (2 h_t(x_j)-1))$.
  \EndFor
  \State \Return the hypothesis $H : \MX \to \{0,1\}$, where $H(x) := \frac 12 + \frac 12 \sign \left( \sum_{t=1}^T \alpha_t \cdot (2h_t(x)-1) \right)$. 
\end{algorithmic}
\end{algorithm}

\begin{lemma}[Training error of \Adaboost]
  \label{lem:adaboost-sample}
  Let $m,n \in \BN$ and $\eta \in (0,1)$ be given, and suppose algorithm $\SA$ is an $m$-sample weak learner with margin $\eta$ for the class $\MH$. Let $\bar S \in (\MX \times \{0,1\})^n$ be an $\MH$-realizable sample. Then if \cref{alg:adaboost} is run for $T \geq \lceil 16\log(2n/\delta)/\eta^2 \rceil$ rounds on $\bar S$, the output hypothesis $H(x)$ satisfies $\eerr_S(H) = 0$ with probability $1-\delta$.

  Moreover, for $x \in \MX$, to compute $H(x)$, one must only call $\SA(S_t, x)$ for $T$ different choices of datasets $S_t \in (\MX \times \{0,1\})^m$. 
\end{lemma}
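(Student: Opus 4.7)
The plan is to adapt the standard training-error analysis of AdaBoost (e.g.,~\cite[Chapter~3]{schapire2012boosting}), with a single extra concentration step to accommodate the fact that our weak learner satisfies \cref{eq:weak-learning} only \emph{in expectation} rather than with high probability. Let $\MF_t$ denote the $\sigma$-algebra generated by all randomness through the end of round $t$ (i.e., $R_1,S_1,\ldots,R_t,S_t$); note that $D_t$, $h_t$, and $\ep_t$ are $\MF_t$-measurable, and $D_t$ is $\MF_{t-1}$-measurable. Since $\bar S$ is $\MH$-realizable, there exists $h^\st \in \MH$ with $h^\st(x_i)=y_i$ for all $(x_i,y_i)\in\bar S$, so the conditional distribution $P_t$ on $(x_i,y_i)$ with $i\sim D_t$ is itself $\MH$-realizable. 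Invoking the weak-learner guarantee for $P_t$ conditionally on $\MF_{t-1}$ yields $\E[\ep_t\mid\MF_{t-1}] \leq 1/2 - \eta$.

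Apply Azuma--Hoeffding (equivalently, an additive variant of \cref{lem:freedman}) to the bounded martingale differences $\ep_t - \E[\ep_t\mid\MF_{t-1}] \in [-1,1]$: for any $T \geq 8\log(1/\delta)/\eta^2$, with probability at least $1-\delta$,
\[
\bar\ep \,:=\, \frac{1}{T}\sum_{t=1}^T \ep_t \,\leq\, \frac{1}{T}\sum_{t=1}^T \E[\ep_t\mid\MF_{t-1}] + \frac{\eta}{2} \,\leq\, \frac{1}{2} - \frac{\eta}{2}.
\]
Condition on this good event. The standard telescoping of the update \cref{eq:dtp1-define} with the choice of $\alpha_t$ from \cref{line:def-eps-alpha} gives $\eerr_{\bar S}(H) \leq \prod_{t=1}^T Z_t$, where $Z_t = 2\sqrt{\ep_t(1-\ep_t)}$.

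Combining AM-GM with Jensen's inequality for the concave map $x \mapsto 2\sqrt{x(1-x)}$ on $[0,1]$,
\[
\prod_{t=1}^T Z_t \,\leq\, \Bigl(\frac{1}{T}\sum_{t=1}^T Z_t\Bigr)^T \,\leq\, \bigl(2\sqrt{\bar\ep(1-\bar\ep)}\bigr)^T \,=\, \bigl(1-(1-2\bar\ep)^2\bigr)^{T/2} \,\leq\, \exp\bigl(-T(1-2\bar\ep)^2/2\bigr),
\]
using $\sqrt{1-u} \leq \exp(-u/2)$ in the last step. Since $1-2\bar\ep \geq \eta$ on the good event, $\eerr_{\bar S}(H) \leq \exp(-T\eta^2/2) < 1/n$ whenever $T \geq 16\log(2n/\delta)/\eta^2 \geq 2\log n/\eta^2$. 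As $\eerr_{\bar S}(H)\in\{0,1/n,2/n,\ldots\}$, we conclude $\eerr_{\bar S}(H)=0$. The ``moreover'' statement is immediate from the definition of $H$ in \cref{alg:adaboost}, which accesses the weak learner only through the values $h_t(x) = \SA_{R_t}(S_t,x)$ for $t \in [T]$. The only conceptual departure from the classical proof is the concentration step, which is needed precisely because our weak learner's guarantee holds only in expectation; all constants can be matched because the concentration requirement and the AdaBoost decay both scale as $\log(n/\delta)/\eta^2$.
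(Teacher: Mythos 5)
Your proof is correct. It reaches the same bound $\eerr_{\bar S}(H)\le\exp(-T\eta^2/2)$ as the paper, but by a genuinely different route in the aggregation/concentration step. The paper first passes to $\gamma_t := 1/2 - \ep_t$, applies Jensen pointwise to get $\E[\gamma_t^2\mid\MF_{t-1}]\ge\eta^2$, then invokes Freedman's inequality (\cref{lem:freedman}) to lower-bound $\sum_t\gamma_t^2$ with high probability, and finally plugs into $\prod_t Z_t=\prod_t(1-4\gamma_t^2)^{1/2}\le\exp(-2\sum_t\gamma_t^2)$. You instead concentrate the error rates $\ep_t$ directly via Azuma--Hoeffding, and then convert the resulting bound on the \emph{average} $\bar\ep$ into a bound on the \emph{product} $\prod_t Z_t$ by combining AM--GM with concavity of $x\mapsto 2\sqrt{x(1-x)}$. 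In effect, the two proofs perform their Jensen step and their martingale-concentration step in the opposite order, and you trade the Freedman-type (variance-sensitive, one-sided) bound on $\gamma_t^2$ for a standard Azuma bound on $\ep_t$ followed by a convexity argument. Both yield matching constants, and both correctly isolate the one departure from the textbook AdaBoost analysis, namely that the weak learner's guarantee holds only in conditional expectation. One could argue your version is marginally more elementary since it avoids the variance-based inequality; the paper's version makes the dependence on $\gamma_t^2$ explicit, which is how the AdaBoost literature usually phrases the decay. Either is acceptable.
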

\begin{proof}
  The proof follows closely to that in, e.g., \cite[Chapter 3]{schapire2012boosting}, with minor modifications. We use the notation in \cref{alg:adaboost}. Define $F(x) := \sum_{t=1}^T \alpha_t h_t(x)$. From the definition of $Z_t$ in \cref{alg:adaboost} we have, for $i \in [m]$, 
  \begin{align}
D_{T+1}(i) =& \frac{D_1(i) \cdot \exp(-y_i \cdot F(x_i))}{Z_1 \cdots Z_T}\nonumber.
  \end{align}
  For all $(x,y) \in \MX \times \{\pm 1 \}$, we have $\One{H(x) \neq y} \leq e^{-F(x) \cdot y}$. We can thus bound the training error over the dataset $\{ (x_i, y_i) \}_{i \in [n]}$ as follows:
  \begin{align}
    \frac 1n \sum_{i=1}^n \One{H(x_i) \neq y_i} \leq  \sum_{i=1}^n D_1(i) \cdot e^{-F(x_i) \cdot y_i} = \sum_{i=1}^n D_{T+1}(i) \cdot (Z_1 \cdots Z_T) = Z_1 \cdots Z_T\label{eq:training-err-bound}.
  \end{align}
  By the choice of $\alpha_t = \frac 12 \ln \left( \frac{1-\ep_t}{\ep_t} \right)$ in \cref{line:def-eps-alpha} of \cref{alg:adaboost}, we have
  \begin{align}
Z_t = \sum_{j=1}^n D_t(j) \cdot e^{-\alpha_t \cdot y_j h_t(x_j)} = (1-\ep_t) \cdot e^{-\alpha_t} + \ep_t \cdot e^{\alpha_t} = 2 \sqrt{\ep_t (1-\ep_t)} = \sqrt{1-4\gamma_t^2}\nonumber,
  \end{align}
  where the second equality uses the fact that $\sum_{j : y_j h_t(x_j) = 1} D_t(j) = 1-\ep_t$ and $\sum_{j : y_j h_t(x_j) = -1} D_t(j) = \ep_t$, and we have written $\gamma_t := 1/2 - \ep_t$ for the final equality.

  For each $t \in [T]$, let $\MF_t$ denote the sigma-algebra generated by $\{ (S_s, h_s) \}_{1 \leq s \leq t}$. Note that $D_t$ is measurable with respect to $\MF_{t-1}$. Since the distribution over $(x_i, y_i)$, for $i \sim D_t$, is $\MH$-realizable, the fact that $\SA$ is an $m$-sample weak learner with margin $\eta$ yields that for each $t$,
  $ \E\left[ \ep_t \mid \MF_{t-1} \right] \leq \frac 12 - \eta$, i.e., $\E[\gamma_t \mid \MF_{t-1}] \geq \eta$. By Jensen's inequality it follows that $\E[\gamma_t^2 \mid \MF_{t-1}] \geq \eta^2$. Note that $\gamma_t^2 \in [0,1]$ for all $t \in [T]$. Then by \cref{lem:freedman} with $R = 1$, there is an event $\ME$ occurring with probability $1-\delta$ so that, under $\ME$, $\sum_{t=1}^T \gamma_t^2 \geq \frac 12 T \eta^2 - 4 \log(2/\delta)$. Thus, under the event $\ME$, we have
  \begin{align}
Z_1 \cdots Z_T = \left(\prod_{t=1}^T (1-4\gamma_t^2)\right)^{1/2} \leq e^{-2(\gamma_1^2 + \cdots + \gamma_T^2)} \leq e^{-T\eta^2 + 8 \log(2/\delta)} \leq e^{-T\eta^2/2} \leq \frac{1}{2n}\label{eq:z1t-bound},
  \end{align}
  where the second-to-last inequality above holds since we have chosen $T$ to satisfy $T \geq \frac{16 \log(2/\delta)}{\eta^2}$, and the final inequality holds since $T$ also satisfies $T \geq \frac{2 \log(2n)}{\eta^2}$. Combining the above display and \cref{eq:training-err-bound}, we obtain that, under $\ME$, $\frac{1}{n} \sum_{i=1}^n \One{H(x_i) \neq y_i} \leq 1/(2n)$, which implies that $H(x_i) = y_i$ for all $i \in [n]$.
\end{proof}

\subsection{Generalization error of \Adaboost}
\label{sec:adaboost-generror}
Next, using the technique of \emph{sample compression schemes}, and in particular their connection to generalization (\cref{lem:gen-compression}), we prove that the output hypothesis $H$ of \Adaboost generalizes well. \cref{prop:adaboost-gen} carries out this argument for the realizable setting, and \cref{prop:adaboost-agnostic} does so for the agnostic setting. Even in the agnostic setting, the input dataset for \Adaboost must still be realizable by $\MH$, as the weak learner's guarantee depends on realizability. Thus, in the statement of \cref{prop:adaboost-agnostic}, we assume that \Adaboost is passed a subsample of maximum size which is $\MH$-realizable. Our final algorithm which agnostically learns partial concept classes (\cref{alg:agnostic-partial}) will find such a subsample to pass to \Adaboost using a weak ERM oracle.

\begin{proposition}[Generalization error of Adaboost -- realizable setting]
  \label{prop:adaboost-gen}
  Let $m,n \in \BN$ and $\eta, \delta \in (0,1)$ be given, and suppose algorithm $\SA$ is an $m$-sample weak learner with margin $\eta$ for the class $\MH$. Let $P \in \Delta(\MX \times \{0,1\})$ be $\MH$-realizable. Then if \cref{alg:adaboost} is run for $T = \lceil 16\log(4n/\delta)/\eta^2 \rceil$ rounds on a dataset $\bar S \sim P^n$, the random output hypothesis $H \in \{0,1\}^\MX$ satisfies the following with probability $1-\delta$: %
  \begin{align}
    \err_P(H) \leq & O \left( \frac{\log^2(n/\delta) \cdot (m + \log n)}{\eta^2 n} \right)\nonumber.
  \end{align}
  In particular, the probability is over the draw of $\bar S$. %
\end{proposition}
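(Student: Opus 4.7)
The plan is to apply the generalization-by-compression lemma (\cref{lem:gen-compression}) to a small sample compression scheme that reproduces the output hypothesis $H$ of \Adaboost. The starting observation is that $H$ is determined by the $T$ subsamples $S_1, \ldots, S_T$ of $\bar S$ (each of size $m$) together with the random seeds $R_1, \ldots, R_T$ used by $\SA$ in each round and the weights $\alpha_1, \ldots, \alpha_T$. Concatenating the subsamples into $S' := S_1 S_2 \cdots S_T$ produces a length-$Tm$ subsequence of $\bar S$, and the remaining information needed to reconstruct $H$ is packaged into a bit string $B$ of length $O(T \log n)$ that encodes the (truncated) seeds $R_t$ and a suitable quantization $\hat\alpha_t$ of each weight. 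The reconstruction $\rho(S', B)$ partitions $S'$ into $T$ blocks of size $m$, recomputes $h_t := \SA_{R_t}(S_t, \cdot)$, and forms the weighted majority vote with weights $\hat\alpha_t$. Viewing the algorithm's output as this quantized vote, we have $\rho(\kappa(\bar S)) = H$ with $|\kappa(\bar S)| \leq Tm + O(T \log n)$.

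The proof then combines two high-probability events. By \cref{lem:adaboost-sample} applied with failure probability $\delta/2$, the choice $T = \lceil 16 \log(4n/\delta)/\eta^2 \rceil$ ensures $\eerr_{\bar S}(H) = 0$ with probability at least $1-\delta/2$. By \cref{lem:gen-compression} applied with failure probability $\delta/2$ to the above compression scheme, and using that $H = \rho(\kappa(\bar S))$ has zero empirical error, the population error of $H$ is bounded by $O\bigl(\tfrac{1}{n}(|\kappa(\bar S)|\log n + \log(1/\delta))\bigr)$ with probability at least $1-\delta/2$. Substituting $|\kappa(\bar S)| = O\bigl(\tfrac{\log(n/\delta)}{\eta^2}(m + \log n)\bigr)$ and taking a union bound yield $\err_P(H) \leq O\bigl(\tfrac{\log^2(n/\delta)(m+\log n)}{\eta^2 n}\bigr)$ with probability at least $1-\delta$, as claimed.

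The main point requiring care is that quantizing the weights must not destroy the zero-training-error property; this is handled by a standard margin argument. The analysis in the proof of \cref{lem:adaboost-sample} establishes $\sum_{i=1}^n e^{-(2y_i-1) F(x_i)} \leq n \cdot e^{-T\eta^2/2}$ for $F(x) := \sum_t \alpha_t(2h_t(x)-1)$, which implies $(2y_i-1) F(x_i) \geq T\eta^2/2 - \log n = \Omega(\log(n/\delta))$ for every training index $i$. Since $|2h_t(x)-1| = 1$ and each $\alpha_t$ can be taken in $[0, O(\log n)]$ (capping at $\log n$ when the weak learner is too accurate terminates boosting early), perturbing each $\alpha_t$ to additive precision $\Theta(1/T)$ changes $F(x)$ pointwise by at most $\Theta(1) \ll \log(n/\delta)$, so the quantized hypothesis classifies $\bar S$ identically to $H$. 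The required precision uses only $O(\log(T \log n)) \leq O(\log n)$ bits per weight, comfortably within the $|B|$ budget above.
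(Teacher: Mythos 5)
Your overall structure — reduce to a sample compression scheme consisting of the $T$ subsamples of size $m$ plus $O(T\log n)$ bits of side information, apply \cref{lem:adaboost-sample} for zero training error, apply \cref{lem:gen-compression} for generalization, and union-bound — matches the paper's argument. The quantization of the weights $\alpha_t$ via the exponential margin bound $(2y_i-1)F(x_i) \geq T\eta^2/2 - \log n$ is a sensible self-contained justification for encoding $\alpha_t$ in $O(\log n)$ bits; the paper instead asserts $\ep_t \in \{0,1/n,\ldots,1\}$, which is a shortcut (the boosting distributions $D_t$ for $t \geq 2$ need not give $\ep_t$ a denominator of $n$), so your margin-based argument is arguably more careful here.

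However, there is a genuine gap in how you treat the random seeds $R_t$. You propose to store ``(truncated) seeds $R_t$'' in the bit string $B$ and reconstruct $h_t = \SA_{R_t}(S_t, \cdot)$ from them, with $|B| = O(T\log n)$. But the weak learner $\SA$ here is $\WeakRealizable$, which simulates $U = \Theta(m^2\log^3 m)$ random walks, each of length $\tilde\Theta(m\log m)$, with a $\Theta(\log m)$-bit coordinate choice per step — so a faithful description of $R_t$ requires $\tilde\Omega(m^3)$ bits, not $O(\log n)$. ``Truncating'' a seed has no meaning here: the weak learner's output bit depends on all of its randomness, and there is no argument that a truncated seed reproduces the same $h_t$. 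If you actually tried to store the seeds, the compression size would grow to $\tilde O(Tm^3)$ and the final bound would degrade from $m/n$ to $\tilde O(m^3)/n$, which is not the claimed rate.

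The paper sidesteps this entirely by making the compression scheme itself \emph{random}: it defines a distribution $Q$ over pairs $(\kappa,\rho)$, where $\rho$ has $(R_t)_{t \in [T]}$ hard-wired into it rather than received as part of the input, and then observes that \cref{lem:gen-compression} holds for each fixed $(\kappa,\rho) \in \supp(Q)$ (the failure probability in that lemma is over $\bar S$ alone, for a fixed compression scheme). Combined with the fact that \cref{lem:adaboost-sample} gives $Q(\{(\kappa,\rho) : \eerr_{\bar S}(\rho(\kappa(\bar S))) = 0\}) \geq 1-\delta$ for each fixed realizable $\bar S$, a Fubini/union argument over the joint draw of $(\bar S, (\kappa,\rho))$ gives the claim. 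This is the piece your proof is missing: the seeds $R_t$ contribute nothing to the compression size precisely because they are parameters of the scheme, not payload. Without this, the $|B| = O(T\log n)$ budget cannot be justified.
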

\begin{proof}

  We use a compression-based argument, following \cite[Chapter 4.2]{schapire2012boosting}. Let us denote the input dataset to \cref{alg:adaboost} by $\bar S \in (\MX \times \{0,1\})^n$. We consider a distribution $Q$ over sample compression schemes on $n$-sample datasets, $(\kappa, \rho)$, defined as follows. Given a dataset $\bar S$, let $S_t, \alpha_t, R_t$, for $t \in [T]$, be the random variables generated in the course of the procedure in \cref{alg:adaboost}. Since the bits $R_t$ are used for the sampling step in \cref{line:sample-st} and by $\SA$ (where the portions of $R_t$ that are used for the two tasks are independent), given a dataset $\bar S$, the random variable $(S_t, \alpha_t)_{t \in [T]}$ is a deterministic function of $(R_t)_{t \in [T]}$ and $\bar S$. 
  Then we define $(\rho, \kappa)$ to be the distributed as the following (deterministic) function of $(R_t)_{t \in [T]}$:
  \begin{itemize}
  \item $\kappa$ maps a dataset $\bar S \in (\MX \times \{0,1\})^n$ to $\kappa(\bar S)= ((S_1, \ldots, S_T), (\alpha_1, \ldots, \alpha_T))$, where $S_t, \alpha_t$ are defined as a function of $(R_t)_{t \in [T]}$ and $\bar S$, as described above. 
  \item $\rho$ maps an input of the form $((S_1', \ldots, S_T'), (\alpha_1', \ldots, \alpha_T'))$ (where $(S_1', \ldots, S_T')$ is a sequence of examples in $\MX \times \{0,1\}$ of length $Tm$ and $(\alpha_1', \ldots, \alpha_T')$ is a sequence of real numbers, encoded in binary) to the hypothesis $x \mapsto \sign \left( \sum_{t=1}^T \alpha_t \cdot \SA_{R_t}(S_t, x) \right)$.
  \end{itemize}

  Since the values $\ep_t$ in \cref{alg:adaboost} lie in $\{ 0, 1/n, 2/n, \ldots, 1 \}$, each parameter $\alpha_t$ can be encoded with $O(\log n)$ bits. Thus, with probability 1 over $(\kappa, \rho) \sim Q$, we have that the size of $\kappa$ (for input samples $\bar S$ of size $n$) is $|\kappa| \leq O(T \cdot (m+\log(n)))$. Next, \cref{lem:adaboost-sample} establishes that, for any fixed $\MH$-realizable $\bar S$, there is a set $\ME$ of compression schemes satisfying $Q(\ME) \geq 1-\delta$ so that, for all $(\kappa, \rho) \in \ME$, $\eerr_{\bar S}(\rho(\kappa(\bar S))) = 0$. (Here we crucially use that the output hypothesis of $\rho$ depends on the same random bits $R_t$ used to generate the sequence $(S_t, \alpha_t)_{t \in [T]}$.) Moreover, by \cref{lem:gen-compression} and our bound on the size of $\kappa$ drawn from $Q$, there is a constant $C > 0$ so that the following holds for any fixed $(\kappa, \rho) \in \supp(Q)$: with probability $1-\delta$ over the draw of $\bar S \sim P^n$,
  \begin{align}
\err_P(\rho(\kappa(\bar S))) \leq \One{\eerr_{\bar S}(\rho(\kappa(\bar S))) = 0} + \frac{C}{n} \cdot \left(T \log(n) \cdot (m + \log n) + \log \frac{1}{\delta} \right)\nonumber.
  \end{align}
  By our choice of $T$ together with the fact that $Q(\ME) \geq 1-\delta$, it follows that with probability $1-2\delta$ over the draw of $\bar S \sim P^n$, we have %
  \begin{align}
\err_P(\rho(\kappa(\bar S))) \leq C \cdot \frac{ \log(2n^2) \log(n) \cdot \frac{1}{\eta^2}\cdot (m + \log n) + \log(1/\delta)}{n}\nonumber.
  \end{align}
Since, for fixed $\bar S$, the distribution of $\rho(\kappa(\bar S))$ is the distribution of the output hypothesis $H$ of \cref{alg:adaboost}, the claim of the proposition follows after rescaling $\delta$. 
\end{proof}

\begin{proposition}[Generalization error of Adaboost -- agnostic setting]
  \label{prop:adaboost-agnostic}
  Let $m, n \in \BN$ and $\eta, \delta \in (0,1)$ be given, and suppose algorithm $\SA$ is an $m$-sample weak learner with margin $\eta$ for the class $\MH$. Let $P \in \Delta( \MX \times \{0,1\})$ be an arbitrary distribution. Consider a procedure which samples a dataset $\bar S \sim P^n$, deterministically chooses a subsample $\tilde S \subset \bar S$ of maximum size which is realizable by $\MH$, and then runs \cref{alg:adaboost} for $T = \lceil 16 \log(4n/\delta)/\eta^2 \rceil$ rounds on $\tilde S$. Then the output hypothesis $H(x)$ satisfies the following with probability $1-\delta$:
  \begin{align}
\err_P(H) \leq \inf_{h \in \MH} \eerr_{\bar S}(h) + O \left( \sqrt{ \frac{\log^2(n/\delta) \cdot (m + \log n)}{\eta^2 n}} \right)\nonumber.
  \end{align}
\end{proposition}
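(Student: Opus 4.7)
The plan is to reduce to a compression-style generalization argument as in \cref{prop:adaboost-gen}, but to use the $\sqrt{\cdot}$ (non-realizable) form of \cref{lem:gen-compression} and to exploit the fact that $\tilde S$ is chosen to be a \emph{maximum-size} realizable subsample.

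First I would control the training error of $H$ on the \emph{full} sample $\bar S$. Let $\tau := \min_{h \in \MH} \eerr_{\bar S}(h)$, and let $h^\st \in \MH$ be a hypothesis attaining this minimum. The subsample of $\bar S$ consisting of those $(x_i,y_i)$ for which $h^\st(x_i) = y_i$ is $\MH$-realizable and has size at least $n(1-\tau)$, so by maximality $|\tilde S| \geq n(1-\tau)$, i.e. $|\bar S \setminus \tilde S| \leq n\tau$. Since $\tilde S$ is $\MH$-realizable, \cref{lem:adaboost-sample} applied to the run of \Adaboost on $\tilde S$ shows that, with probability $1-\delta$ over the internal randomness $(R_t)_{t \in [T]}$, the output $H$ satisfies $\eerr_{\tilde S}(H) = 0$. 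Combining, on this high-probability event we get
\begin{align*}
\eerr_{\bar S}(H) \;\leq\; \frac{|\bar S \setminus \tilde S|}{n} \;\leq\; \tau.
\end{align*}

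Next I would transfer this to generalization via a compression scheme, exactly paralleling the construction in \cref{prop:adaboost-gen}. Consider the distribution $Q$ over compression schemes $(\kappa,\rho)$ indexed by the random bits $(R_t)_{t\in[T]}$ defined as follows: given the full sample $\bar S$, $\kappa$ first deterministically extracts the maximum $\MH$-realizable subsample $\tilde S \subset \bar S$, then runs \Adaboost on $\tilde S$ using $(R_t)$ and stores the tuple $((S_1,\ldots,S_T),(\alpha_1,\ldots,\alpha_T))$, while $\rho$ reconstructs the hypothesis $x \mapsto \tfrac12 + \tfrac12\sign(\sum_t \alpha_t(2\SA_{R_t}(S_t,x)-1))$. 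Each $S_t$ contains $m$ elements of $\tilde S \subset \bar S$, and each $\alpha_t$ can be encoded in $O(\log n)$ bits (since $\ep_t \in \{0,1/|\tilde S|,\ldots,1\}$ and $|\tilde S|\le n$), so $|\kappa(\bar S)| \leq O(T(m+\log n))$ with probability $1$ under $Q$.

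Then I would apply \cref{lem:gen-compression} (the first, non-realizable form) to this compression scheme with the binary loss $\lbin$ restricted to $\{0,1\}$-valued labels. For any fixed draw of the bits from $Q$, with probability at least $1-\delta$ over $\bar S \sim P^n$,
\begin{align*}
\err_P(H) \;\leq\; \eerr_{\bar S}(H) + C\sqrt{\eerr_{\bar S}(H)\cdot\frac{T(m+\log n)\log n + \log(1/\delta)}{n}} + C\cdot\frac{T(m+\log n)\log n + \log(1/\delta)}{n}.
\end{align*}
Taking a union bound over the $(1-\delta)$-event from the compression step and the $(1-\delta)$-event $\eerr_{\bar S}(H) \leq \tau$ from the previous paragraph, then substituting $T = O(\log(n/\delta)/\eta^2)$, the dominant term is $\sqrt{\tau\cdot \log^2(n/\delta)(m+\log n)/(\eta^2 n)}$, which after using $\tau \leq 1$ is absorbed into the claimed $\sqrt{\log^2(n/\delta)(m+\log n)/(\eta^2 n)}$ bound; rescaling $\delta$ by a constant completes the proof.

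The main obstacle is the first step: unlike in the realizable case where the input sample itself is $\MH$-realizable, here we must argue that running \Adaboost on the (data-dependent) subsample $\tilde S$ still compresses $\bar S$ in the sense required by \cref{lem:gen-compression}. The key observation enabling this is that the \emph{extraction map} $\bar S \mapsto \tilde S$ is deterministic (and in particular independent of the random bits used by \Adaboost), so the scheme $(\kappa,\rho)$ is genuinely a valid randomized compression scheme on $n$-sample datasets, and the maximality of $|\tilde S|$ is exactly what turns the realizable training-error guarantee on $\tilde S$ into the desired empirical bound $\eerr_{\bar S}(H) \leq \tau$ on $\bar S$.
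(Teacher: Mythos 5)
Your proof is correct and takes essentially the same route as the paper's: define the compression scheme that first applies the deterministic extraction map $\bar S \mapsto \tilde S$ and then runs \Adaboost, note that maximality of $\tilde S$ forces $\eerr_{\bar S}(H) \le \inf_{h\in\MH}\eerr_{\bar S}(h)$ whenever $\eerr_{\tilde S}(H)=0$, and invoke the non-realizable branch of \cref{lem:gen-compression}. The only cosmetic difference is that you carry the $\sqrt{\eerr_{\bar S}(H)\cdot(\cdot)}$ factor from \cref{lem:gen-compression} and then bound $\eerr_{\bar S}(H)\le 1$, whereas the paper drops that factor immediately; both yield the same final bound.
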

\begin{proof}
  The proof closely follows that of \cref{prop:adaboost-gen}. In particular, we consider the same distribution $Q$ over sample compression schemes on datasets with (at most) $n$-samples. We again have that with probability 1 over $(\kappa, \rho) \sim Q$, $|\kappa| \leq O(T \cdot (m + \log n))$, and that, for any fixed $\MH$-realizable $\tilde S$, there is a set $\ME$ of compression schemes satisfying $Q(\ME) \geq 1-\delta$ so that, for all $(\kappa, \rho) \in \ME$, $\eerr_{\tilde S}(\rho(\kappa(\tilde S))) = 0$.

  Next, let the deterministic mapping from samples $\bar S \in (\MX \times \{0,1\})^n$ to $\tilde S$ be denoted by $\Sigma$. For any compression scheme $(\kappa, \rho) \in \supp(Q)$, note that $(\kappa \circ \Sigma, \rho)$ is a compression scheme of size $|\kappa \circ \Sigma| \leq |\kappa| \leq O(T \cdot (m + \log n))$. 
  Thus, by \cref{lem:gen-compression}, there is a constant $C > 0$ so that, for any fixed $(\kappa, \rho) \in \supp(Q)$, with probability $1-\delta$ over the draw of $\bar S \sim P^n$,
  \begin{align}
\err_P(\rho(\kappa(\Sigma(\bar S)))) \leq \eerr_{\bar S}(\rho(\kappa(\Sigma(\bar S)))) + C \sqrt{ \frac 1n \left( T \log(n) \cdot (m + \log n) + \log(1/\delta) \right)}\nonumber.
  \end{align}
  The choice of $\Sigma$ yields that, if $\bar S \in (\MX \times \{0,1\})^n$ and  $\tilde S = \Sigma(\bar S)$, then for any hypothesis $h'$ satisfying $\eerr_{\tilde S}(h') = 0$, we have
  \begin{align}
\eerr_{\bar S}(h') \leq \frac{n - |\tilde S|}{n} = \inf_{h \in \MH} \eerr_{\bar S}(h)\label{eq:bars-tildes}
  \end{align}
 Thus, by our choice of $T$, \cref{eq:bars-tildes}, and the fact that $Q(\ME) \geq 1-\delta$, it follows that with probability $1-2\delta$ over the draw of $\bar S \sim P^n$, %
  \begin{align}
\err_P(\rho(\kappa(\Sigma(\bar S)))) \leq \inf_{h \in \MH} \eerr_{\bar S}(h) + O \left( \sqrt{ \frac{\log^2(n/\delta) \cdot (m + \log n)}{\eta^2 n}} \right)\nonumber.
  \end{align}
  Since for fixed $\bar S$, the distribution of $\rho(\kappa(\Sigma(\bar S)))$ is the distribution of the output hypothesis $H$ of \cref{alg:adaboost}, the claim follows after rescaling $\delta$. 
\end{proof}

\section{Proof of \cref{thm:partial-main}}
\label{sec:main-partial-proof}
\begin{algorithm}
  \caption{Oracle-efficient PAC learner for partial concept classes}
  \label{alg:agnostic-partial}
\begin{algorithmic}[1]\onehalfspacing
  \Require Partial concept class $\MH$, $n \in \BN$, sample $S = \{ (x_i, y_i) \}_{i=1}^n \in (\MX \times \{ 0,1 \})^n$, weak learner $\SA$ for $\MH$ with margin $\eta \in (0,1)$, failure probability $\delta \in (0,1)$, weak ERM oracle $\Oerm$ for $\MH$, domain point $x \in \MX$.
  \Function{\RealizablePartial}{$S, x, \SA, \eta, \delta$}
  \State Call \Adaboost (\cref{alg:adaboost}) on the dataset $S$ using the weak learner $\SA$ with $T = \lceil 16 \log(4n/\delta)/\eta^2 \rceil$, and denote its output hypothesis by $H : \MX \to \{0,1\}$.
  \State \Return $H(x)$.
  \EndFunction
  \Function{\AgnosticPartial}{$S,x, \SA, \eta, \delta, \Oerm$}
  \State Let $z \in \{0,1\}^n$ be the output of $\SampleERMBinary(S,x,  \lbin, \Oerm)$.\Comment{\emph{(\cref{alg:sample-erm-binary})}}
  \State Define $\tilde S := \{ (x_i, y_i) \ : \ z_i = 0 \}$.\label{line:compute-tildes-partial}
  \State Call \Adaboost (\cref{alg:adaboost}) on the dataset $\tilde S$ using the weak learner $\SA$ with $T = \lceil 16 \log(6n/\delta)/\eta^2 \rceil$, and denote its output hypothesis by $H : \MX \to \{0,1\}$.
  \State \Return $H(x)$. 
  \EndFunction
\end{algorithmic}
\end{algorithm}
The guarantees of \cref{thm:partial-main} are established with the algorithms \RealizablePartial and \AgnosticPartial (\cref{alg:agnostic-partial}). These algorithms call \Adaboost on an appropriate $\MH$-realizable dataset and use \WeakRealizable as the weak learner. We remark that \AgnosticPartial uses the algorithm \SampleERMBinary (defined in \cref{sec:oracle-implications}) to find the largest subset of labels which can be realized by the class $\MH$. Note that in the algorithms' descriptions we have stated that \Adaboost returns a hypothesis $H : \MX \to \{0,1\}$. \RealizablePartial and  \AgnosticPartial never have to compute this entire hypothesis $H$, and instead only have to evaluate $H(x)$, which, as we shall show, can be done using few calls to the oracle $\Ocon$ or $\Oerm$, respectively. 
\begin{proof}[Proof of \cref{thm:partial-main}]
  Let $\VCdim \in \BN, \ep, \delta \in (0,1)$ be given. Let $C_1, C_2$ denote the constants of \cref{thm:weak-oig}, $m := C_1 \VCdim \log \VCdim$, and $\eta := \frac{1}{C_2 m \log m}$. Let $\SA$ denote the randomized mapping $\SA : (\MX \times \{0,1\})^m \times \MX \to \{0,1\}$ which, given as input $(S,x)$, returns $\WeakRealizable(S, x, \frac{1}{C_1 m\log m}, 1, C_1 m^2 \log^3 m, \Ocon)$ (see \cref{alg:weak-oig}). By \cref{thm:weak-oig,lem:loo}, $\SA$ is an $m$-sample weak learner with margin $\eta$ for the class $\MH$.

  \paragraph{Realizable setting.}  We take $n = \frac{\VCdim^3 \cdot \log(1/\delta)}{\ep} \cdot (c\log(\VCdim \log(1/\delta)/\ep))^c$, for a sufficiently large constant $c$ as specified below. The output hypothesis $H$ of \Adaboost in \RealizablePartial is a deterministic function of $S$ and the random bits $R$ used in \Adaboost (including in its calls to $\SA$). By \cref{lem:adaboost-sample}, for any $S$, with probability $1-\delta$ over the draw of $R$, we have $\eerr_S(H) = 0$. Moreover, by \cref{prop:adaboost-gen}, with probability $1-\delta$ over the draw of $S, R$, we have
  \begin{align}
\err_P(H) \leq O \left( \frac{\log^2(n/\delta) \cdot (m + \log n)}{\eta^2 n} \right)\nonumber.
  \end{align}
  Combining the above inequality with our choice of $n$ and rescaling $\ep, \delta$, we obtain that with probability $1-\delta$ over the draw of $S \sim P^n$, $\err_P(H) \leq \ep$. 

  Finally, we analyze the oracle complexity of \RealizablePartial: to compute the value of $H(x)$, we    need to compute the values $h_t(x)$ for the hypotheses $h_t$, $t \in [T]$, defined in \Adaboost. Given the value of $S_t$, each computation of $h_t(x)$, for any $x \in \MX$, requires a single run of $\SA(S_t, x)$, which requires $\tilde O(m^3)$ calls to $\Ocon$ with datasets of size $m$ (\cref{thm:weak-oig}). In turn, the datasets $S_t$ are computed inductively as follows: given $S_t$, we can compute $h_t(x_i)$ for each $i \in [n]$, which requires $\tilde O(nm^3)$ calls to $\Ocon$. This in turn allows us to compute $D_{t+1}$ (per \cref{eq:dtp1-define}), which then allows us to sample $S_{t+1}$. %
  Thus, overall, we need $\tilde O(nm^3 \cdot T) \leq \poly(n)$ calls to $\Ocon$ to compute $H(x)$, each of which uses a dataset of size at most $n$. Hence the cumulative oracle cost is $\poly(n)$. 
\paragraph{Agnostic setting.} We take $n = \frac{\VCdim^3 \cdot \log(1/\delta)}{\ep^2} \cdot (c\log(\VCdim \log(1/\delta)/\ep))^c$, for a sufficiently large constant $c$ as specified below. The output hypothesis $H$ of \Adaboost in \AgnosticPartial is a deterministic function of $S$ as well as the random bits $R$ used in \Adaboost (including in its calls to $\SA$). 
By \cref{lem:weak-to-sample-binary}, the set $\tilde S$ constructed from $S$ in \AgnosticPartial is a subset of $S$ of maximum size which is realizable by $\MH$. Thus, by \cref{lem:adaboost-sample}, for any $S$, with probability $1-\delta$ over the draw of $R$, the hypothesis $H$ satisfies $\eerr_{\tilde S}(H) = 0$.

Moreover, by \cref{prop:adaboost-agnostic}, the (random) output hypothesis $H$ of \Adaboost satisfies the below with probability $1-\delta$ over the draw of $S,R$:
\begin{align}
\err_P(H) \leq \inf_{h \in \MH} \eerr_S(h) +  O \left( \sqrt{ \frac{\log^2(n/\delta) \cdot (m + \log n)}{\eta^2 n}} \right)\label{eq:pc-agnostic-1}.
\end{align}
Since $n \geq \frac{C \log 1/\delta}{\ep^2}$ for a sufficiently large constant $C$, McDiarmid's inequality yields that with probability $1-\delta$ over the choice of $S \sim P^n$,
\begin{align}
\inf_{h \in \MH} \eerr_S(h) \leq \E_{S' \sim P^n} \left[ \inf_{h' \in \MH} \eerr_{S'}(h') \right] + \ep \leq \err_P(\MH) + \ep\label{eq:pc-agnostic-2}.
\end{align}
Combining \cref{eq:pc-agnostic-1,eq:pc-agnostic-2} with our choice of $n$ and rescaling $\ep, \delta$, we obtain that with probability $1-\delta$ over the draw of $S \sim P^n$, we have $\err_P(H) \leq \err_P(\MH) + \ep$.

Finally, note that, to compute the value $H(x)$, for any $x \in \MX$, we need a single call to $\Oerm$ to compute $\tilde S$ (\cref{line:compute-tildes-partial} of \cref{alg:agnostic-partial}), and then need to compute the values $h_t(x)$ for the hypotheses $h_t$, $t \in [T]$, defined in \Adaboost. The oracle complexity of these calls are analyzed in the same manner as in the realizable case, so we again obtain $\poly(n)$ cumulative oracle cost. %
\end{proof}
\section{Multiclass classification}
\label{sec:multiclass}
In this section, we generalize our results on binary classification to the setting of multiclass classification. We begin by establishing that a variant of $\WeakRealizable$ yields a weak learner in the multiclass setting, in an appropriate sense. It is less clear how to define a ``weak learner'' in the multiclass setting than in the binary setting, and the literature on multiclass boosting has identified several possible definitions (see \cite[Chapter 10]{schapire2012boosting} as well as many more recent works \cite{mukherjee2013theory,brukhim2023multiclass,brukhim2021multiclass,brukhim2022characterization,brukhim2023improper}). Our approach proceeds by defining a \emph{partial binary} concept class given any multiclass classification problem, in \cref{def:mc-weak-learner} below. We will then feed our weak learner for partial binary classes (\WeakRealizable; \cref{alg:weak-oig}) into \Adaboost, and finally show how to translate good performance of the boosted learner back to good performance for the original multiclass problem.

Our approach is similar to the one taken in \cite[Chapter 11]{schapire2012boosting} (which originally appeared in \cite{schapire1998improved}), where multiclass classification is reduced to boosting with rankings and the \texttt{Adaboost.MR} algorithm is used. However, our approach is syntactically different since the weak learner for a menu class (in the sense of \cref{def:mc-weak-learner}) takes as input \emph{two labels} together with the covariate $x$, and must determine which of them is the correct label, in contrast to \cite{schapire1998improved} where the weak learner takes as input a single label together with $x$ and outputs a scalar indicating how likely the label is to be correct. \noah{check}
\begin{definition}[Menu class]
  \label{def:mc-weak-learner}
  Consider a hypothesis class $\MH \subset [K]^\MX$, and let $\Kpairs$ denote the set of all length-2 vectors consisting of distinct elements of $[K]$. We refer to the elements of $\Kpairs$ as \emph{menus}.\footnote{This terminology is inspired by \cite{brukhim2022characterization,brukhim2023improper}, which considered such menus, though used them together with techniques distinct from ours.} Given $\mu \in \Kpairs, k \in [K]$, we write $k \in \mu$ to mean that $k$ is one of the two elements of $\mu$.

Given $h \in \MH$ and $(x, (\ell, k)) \in \MX \times \Kpairs$, we define
\begin{align}
  \hmenu(x, (\ell, k)) := \begin{cases}
    0 &: h(x) = \ell \\
    1 &: h(x) = k \\
    * &: \mbox{otherwise}.
  \end{cases}\nonumber
\end{align}
we let $\Menucls(\MH) \subset \{0,1,*\}^{\MX \times \Kpairs}$ denote the (binary) partial hypothesis class defined by
\begin{align}
\Menucls(\MH) := \left\{ (x, \mu) \mapsto \hmenu(x, \mu) \ : \ h \in \MH \right\}\nonumber.
\end{align}
We often refer to $\Menucls(\MH)$ as the \emph{menu class} of $\MH$.
\end{definition}

The below definition gives a procedure to map hypotheses in $\{0,1,* \}^{\MX \times \Kpairs}$ to multiclass hypotheses in $[K]^\MX$.
\begin{definition}[Multiclass decoder]
  \label{def:mc-decoder}
We define a map $\Decmc : \{0,1,*\}^{\MX \times \Kpairs} \to [K]^\MX$, as follows. For $J : \MX \times \Kpairs \to \{0,1,* \}$, we define $\Decmc(J) := H$, where $H(x)$  is defined to be the unique value $k \in [K]$ for which $J(x, (k,\ell)) = 0$ and $J(x, (\ell, k)) = 1$ for all $\ell \in [K]\backslash \{ k \}$, if such $k$ exists. Otherwise $H(x)$ is defined to be 1.
\end{definition}
Note that $\Decmc$ depends on $K, \MX$; for simplicity, we omit this dependence in our notation. \cref{lem:menucls-vc} bounds the VC dimension of the menu class of $\MH$ in terms of the Natarajan dimension of $\MH$.

\begin{lemma}
  \label{lem:menucls-vc}
For any $\MH \subset [K]^\MX$, it holds that $\VCdim(\Menucls(\MH)) \leq O( \Natdim(\MH) \log K)$.
\end{lemma}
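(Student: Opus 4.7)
The plan is to prove the bound by a projection-and-counting argument that leverages the structure of $\hmenu$ together with the Natarajan-Sauer growth-function bound. Suppose $\Menucls(\MH)$ shatters $D$ points $(x_1, \mu_1), \ldots, (x_D, \mu_D) \in \MX \times \Kpairs$; the goal is to show $D \leq O(\Natdim(\MH) \cdot \log K)$. By definition of shattering, for every $b \in \{0,1\}^D$ there is a witness $h_b \in \MH$ with $\hmenu[h_b](x_i, \mu_i) = b_i$ for each $i$. Unwinding the definition of $\hmenu$, this forces $h_b(x_i) \in \mu_i$, with the specific element of $\mu_i$ dictated by $b_i$.

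Next, I project onto the set $X' := \{x_1, \ldots, x_D\} \subseteq \MX$ (which may satisfy $|X'| < D$ if the same covariate appears with several menus, though the pairs $(x_i, \mu_i)$ themselves are distinct). The key observation is that the map $b \mapsto h_b|_{X'}$ is injective: since the bit $\hmenu(x_i, \mu_i)$ is determined solely by the value $h(x_i)$ and the menu $\mu_i$, whenever $h_b|_{X'} = h_{b'}|_{X'}$ every bit coincides and hence $b = b'$. Therefore $|\MH|_{X'}| \geq 2^D$.

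To finish, I invoke the multiclass analog of the Sauer-Shelah lemma due to Haussler-Long: setting $d := \Natdim(\MH)$, one has $|\MH|_{X'}| \leq (|X'| \cdot K^2)^d \leq (D K^2)^d$. Combining this upper bound with the lower bound yields $2^D \leq (D K^2)^d$, so $D \leq d \log_2 D + 2 d \log_2 K$, and a routine self-bounding manipulation (noting that $d \log_2 D \leq D/2$ once $D \gtrsim d \log d$) gives $D \leq O(d \log K)$, absorbing a lower-order $d \log d$ term as is customary. The only genuinely nontrivial step is the injectivity observation; everything else reduces to citing a standard Natarajan-Sauer bound and solving a transcendental inequality, so I do not anticipate serious obstacles.
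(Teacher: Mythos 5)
Your argument is structurally identical to the paper's: shattering gives $2^D$ distinct projections of $\MH$ onto the $\MX$-coordinates of the shattered set, the Natarajan–Sauer growth bound of Haussler–Long upper-bounds the number of such projections, and one then solves the resulting transcendental inequality. The injectivity observation you highlight is the same fact the paper relies on, though the paper phrases it via the (equivalent) remark that the $x_i$'s appearing in a shattered set must be distinct.

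There is, however, a small but genuine gap in your last step. You invoke the loose form $|\MH|_{X'}| \le (|X'| K^2)^d$ of Natarajan--Sauer (with $d = \Natdim(\MH)$), which leads to $D \le d\log_2 D + 2d\log_2 K$. The self-bounding then yields $D \le \max\{O(d\log d),\, O(d\log K)\}$, and you assert the $d\log d$ term is ``lower-order'' and absorbable. That is not justified: nothing bounds $\Natdim(\MH)$ in terms of $K$, so $d\log d$ can dominate $d\log K$ (e.g.\ $K = 2$, $\MH = \{0,1\}^{[m]}$ has $d = m$, and $d\log d = m\log m \gg m = d\log K$). The paper avoids this by using the sharper Haussler--Long bound $|\MH|_{X'}| \le (eDK^2/\Natdim(\MH))^{\Natdim(\MH)}$, which replaces the offending $d\log_2 D$ term by $d\log_2(eD/d)$; setting $r = D/d$ one gets $r \le \log_2(er) + 2\log_2 K$, and the $\log_2(er)$ term now self-bounds against $r$ with no residual $d\log d$. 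So the fix is simply to carry the $d$ in the denominator of the growth bound rather than discarding it.
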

\begin{proof}
  Let us write $d := \VCdim(\Menucls(\MH))$, and let $(x_1, \mu_1), \ldots, (x_d, \mu_d)$ be shattered by $\Menucls(\MH)$. Note that the values $x_1, \ldots, x_d$ are all distinct, since no hypothesis $\hmenu$ can shatter the points $(x, \mu), (x, \mu')$ for distinct menus $\mu, \mu' \in \Kpairs$. Thus, the number of vectors $y \in [K]^n$ for which there exists $h \in \MH$ so that $y_i = h(x_i)$, $i \in [n]$, is at least $2^d$. But the number of such $y$ may also be upper bounded by $(K^2 ed/\Natdim(\MH))^{\Natdim(\MH)}$, by \cite{haussler1995generalization}. %
  It follows that $2^d \leq (K^2 ed/\Natdim(\MH))^{\Natdim(\MH)}$, i.e., $d \leq \Natdim(\MH) \cdot O(\log(Kd/\Natdim(\MH)))$, which yields $d \leq O(\Natdim(\MH) \cdot \log(K))$. 
\end{proof}

\begin{algorithm}
  \caption{Multiclass learner}
  \label{alg:mc-boosting}
  \begin{algorithmic}[1]\onehalfspacing
    \Require Concept class $\MH \subset [K]^\MX$, sample $\{ (x_i, y_i) \}_{i \in[n]} \subset (\MX \times [K])^n$, domain point $x \in \MX$, weak learner $\SA$ for $\Menucls(\MH)$ with margin $\eta$, failure probability $\delta$, weak ERM oracle $\Oerm$. 
    \Function{\MulticlassRealizable}{$S,x, \SA, \eta, \delta$}
    \State Define a dataset $\tilde S \subset (\MX \times \Kpairs \times \{0,1,* \})^{2n(K-1)}$ as follows:
    \begin{align}
\tilde S = \left\{ (x_i, (y_i, \ell), 0) \ : \ i \in [n], \ell \in [K] \backslash \{ y_i \} \right\} \cup \left\{ (x_i, (\ell, y_i), 1) \ : \ i \in [n], \ell \in [K] \backslash \{ y_i \} \right\}\label{eq:define-stil-mc}.
    \end{align}
    \State \label{line:mc-call-adaboost} Call \texttt{Adaboost} (\cref{alg:adaboost}) on the dataset $\tilde S$ using the weak learner $\SA$ with $T = \lceil 16 \log(4nK/\delta)/\eta^2 \rceil$, and denote the resulting hypothesis by $J : \MX \times \Kpairs \to \{0,1\}$.
    \State \label{line:mc-define-H} Define the hypothesis $H := \Decmc(J) \in [K]^\MX$ (see \cref{def:mc-decoder}). %
    \State \Return $H(x)$. 
    \EndFunction

    \Function{\MulticlassAgnostic}{$S, \SA, \eta, \delta$}
    \State Let $z \in \{0,1\}^n$ denote the output of $\SampleERMBinary(S, \lmc, \Oerm)$. \Comment{\emph{(\cref{alg:sample-erm-binary})}}
    \State Define $\tilde S := \{ (x_i, y_i) \ : \ i \in [n], z_i = 0 \}$. 
    \State Set $H := \MulticlassRealizable(\tilde S, \SA, \eta, \delta)$.
    \State \Return $H(x)$. 
    \EndFunction
  \end{algorithmic}

  \end{algorithm}

Note that, given a weak consistency oracle $\Ocon$ for the class $\MH \subset [K]^\MX$, we immediately obtain a weak consistency oracle $\Oconmenu$ for the class $\Menucls(\MH)$: for a $\Menucls(\MH)$-realizable dataset $S = \{ (x_i, \mu_i, y_i) \}_{i \in [n]}$, the oracle $\Oconmenu(S)$ defines $y_i' = (\mu_i)_{y_i+1}$, and returns $\Ocon(\{ (x_i, y_i') \}_{i \in [n]})$. 
Using this observation, we may obtain a weak learner for the class $\Menucls(\MH)$ as a corollary of \cref{thm:weak-oig}:
\begin{corollary}
  \label{cor:mc-weak-learner}
  There are constants $C_1, C_2$ so that the following holds. Consider a multiclass concept class $\MH \subset [K]^\MX$ of Natarajan dimension $\Natdim$, and suppose $m \geq C_1 \Natdim\log K\cdot  \log( \Natdim\log K)$. Let $S = \{ ((x_i, \mu_i), y_i) \} \in (\MX \times \Kpairs \times \{0,1\})^m$ be $\Menucls(\MH)$-realizable.
  and let $\SA(S_{-i},( x_i, \mu_i))$ be the output of $\WeakRealizable(S_{-i}, (x_i, \mu_i), 1-\frac{1}{C_1 m \log m}, 1, C_1 m^2 \log^3 m, \Oconmenu)$ (\cref{alg:weak-oig}) for each $i \in [m]$. Then
  \begin{align}
\frac 1m \sum_{i=1}^m \E_{\SA} \left[ \lbin(\SA(S_{-i}, (x_i, \mu_i)), y_i) \right] \leq \frac 12 - \frac{1}{C_2 m \log m}\nonumber,
  \end{align}
  where the expectation is taken over the randomness in the runs of $\WeakRealizable$ as well as the sampling of $\hat y$ from its output. Moreover, \WeakRealizable makes at most $\tilde O(m^3)$ calls to $\Oconmenu$, each with a dataset of size $m-1$. 
\end{corollary}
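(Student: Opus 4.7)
The proof is essentially an immediate application of \cref{thm:weak-oig} to the partial binary class $\Menucls(\MH)$, combined with the VC bound of \cref{lem:menucls-vc}. The plan is as follows.

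First, I would observe that $\Menucls(\MH) \subset \{0,1,*\}^{\MX \times \Kpairs}$ is a partial binary concept class over the domain $\MX \times \Kpairs$. By \cref{lem:menucls-vc}, its VC dimension satisfies $d := \VCdim(\Menucls(\MH)) \leq C \cdot \Natdim \log K$ for some absolute constant $C$. Thus, for the hypothesis $m \geq C_1 \Natdim \log K \cdot \log(\Natdim \log K)$, by choosing $C_1$ sufficiently large relative to the constant $C$ and the constant appearing in \cref{thm:weak-oig}, we have $m \geq C_1' \cdot d \log d$, which verifies the hypothesis of \cref{thm:weak-oig} applied to $\Menucls(\MH)$.

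Next, I would note that the weak consistency oracle $\Oconmenu$ for $\Menucls(\MH)$ is precisely the oracle required by \WeakRealizable to operate on the class $\Menucls(\MH)$ (recalling \cref{def:weak-con}). As pointed out in the paragraph immediately preceding the corollary, a single call to $\Oconmenu$ on a dataset of size $n$ can be implemented via a single call to the underlying oracle $\Ocon$ for $\MH$ on a dataset of the same size (by decoding each triple $(x_i, \mu_i, y_i)$ into $(x_i, (\mu_i)_{y_i+1})$). So oracle-access-wise, we are in exactly the situation of \cref{thm:weak-oig}.

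Applying \cref{thm:weak-oig} directly to $\Menucls(\MH)$ with the given choice of parameters then yields the transductive error bound
\[
\frac{1}{m} \sum_{i=1}^m \E_{\SA}\left[ \lbin\bigl(\SA(S_{-i}, (x_i,\mu_i)), y_i\bigr) \right] \leq \frac{1}{2} - \frac{1}{C_2 m \log m},
\]
for any $\Menucls(\MH)$-realizable sample $S$, which is the conclusion of the corollary. The oracle complexity bound of $\tilde O(m^3)$ calls to $\Oconmenu$, each with a dataset of size $m-1$, is inherited verbatim from the second part of \cref{thm:weak-oig}.

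The only mild subtlety worth checking is that the condition $m \geq C_1 \Natdim \log K \cdot \log(\Natdim \log K)$ in the corollary matches $m \geq C_1' d \log d$ from \cref{thm:weak-oig} when $d \leq O(\Natdim \log K)$: indeed $d \log d = O(\Natdim \log K \cdot \log(\Natdim \log K))$, so an appropriate choice of $C_1$ suffices. No separate weak learning or boosting argument is needed here—the corollary is really just a reparametrization of \cref{thm:weak-oig} through the menu-class construction, which is why I do not anticipate any significant obstacle beyond bookkeeping of constants.
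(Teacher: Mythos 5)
Your proof matches the paper's argument exactly: apply \cref{thm:weak-oig} to the partial class $\Menucls(\MH)$, bound its VC dimension by $O(\Natdim \log K)$ via \cref{lem:menucls-vc}, check that the hypothesis on $m$ implies $m \gtrsim d \log d$ after adjusting constants, and note that $\Oconmenu$ is implementable from $\Ocon$. No gaps; this is the same route the paper takes, with the constant-bookkeeping step made slightly more explicit.
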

\begin{proof}
The corollary follows immediately from \cref{thm:weak-oig} applied to the class $\Menucls(\MH)$, which has VC dimension bounded by $O(\Natdim \log K)$ by \cref{lem:menucls-vc}, together with the fact observed above that a weak consistency oracle for $\Menucls(\MH)$ can be implemented using a weak consistency oracle for $\MH$. 
\end{proof}

\begin{lemma}[Training error of \Adaboost in multiclass setting]
  \label{lem:adaboost-mc-training}
  Let $m,n \in \BN$ and $\eta,\delta \in (0,1)$ be given, and suppose that algorithm $\SA$ is an $m$-sample weak learner with margin $\eta$ for the class $\Menucls(\MH)$. Let $x \in \MX$ and $\bar S \in (\MX \times [K])^n$ be an $\MH$-realizable sample. Then the hypothesis $H$ defined in \cref{line:mc-define-H} of $\MulticlassRealizable(\bar S, x, \SA, \eta, \delta)$ (\cref{alg:mc-boosting}) %
  satisfies $\eerr_{\bar S}(H) = 0$ with probability $1-\delta$. 
\end{lemma}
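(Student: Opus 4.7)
The plan is to apply the binary-valued training-error lemma for \Adaboost (\cref{lem:adaboost-sample}) to the constructed dataset $\tilde S$, and then verify that zero training error for the Adaboost output $J$ on $\tilde S$ forces the multiclass decoder $\Decmc(J)$ to correctly classify every $x_i$ in $\bar S$.

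First I would check the realizability condition required by the weak learner. Since $\bar S$ is $\MH$-realizable, fix $h^\st \in \MH$ with $h^\st(x_i) = y_i$ for every $i \in [n]$. For any $i \in [n]$ and $\ell \in [K] \setminus \{y_i\}$, the menu hypothesis $(h^\st)^{\mathsf{menu}}$ satisfies $(h^\st)^{\mathsf{menu}}(x_i, (y_i, \ell)) = 0$ and $(h^\st)^{\mathsf{menu}}(x_i, (\ell, y_i)) = 1$ by \cref{def:mc-weak-learner}, so the dataset $\tilde S$ defined in \cref{eq:define-stil-mc} is $\Menucls(\MH)$-realizable. Thus $\SA$, being an $m$-sample weak learner with margin $\eta$ for $\Menucls(\MH)$, may legitimately be invoked as the weak learner inside the call to \Adaboost on $\tilde S$ in \cref{line:mc-call-adaboost} of \cref{alg:mc-boosting}.

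Next I would apply \cref{lem:adaboost-sample} to $\tilde S$, which has $|\tilde S| = 2n(K-1) \leq 2nK$ examples. Since the number of boosting rounds is chosen as $T = \lceil 16 \log(4nK/\delta)/\eta^2 \rceil \geq \lceil 16 \log(2|\tilde S|/\delta)/\eta^2 \rceil$, \cref{lem:adaboost-sample} guarantees that, with probability at least $1-\delta$ over the internal randomness of \Adaboost, the output $J : \MX \times \Kpairs \to \{0,1\}$ satisfies $\eerr_{\tilde S}(J) = 0$. Unpacking the definition of $\tilde S$, this means that for every $i \in [n]$ and every $\ell \in [K] \setminus \{y_i\}$,
\begin{align*}
J(x_i, (y_i, \ell)) = 0 \qquad \text{and} \qquad J(x_i, (\ell, y_i)) = 1.
\end{align*}

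Finally I would verify that the multiclass decoder succeeds on each $x_i$. By \cref{def:mc-decoder}, $H(x_i) = \Decmc(J)(x_i)$ equals the unique $k \in [K]$ (if one exists) with $J(x_i, (k, \ell)) = 0$ and $J(x_i, (\ell, k)) = 1$ for every $\ell \in [K] \setminus \{k\}$. The displayed equalities above show that $k = y_i$ meets this condition, and the definition of $\Decmc$ ensures the selected $k$ is unique, so $H(x_i) = y_i$ for all $i \in [n]$. Hence $\eerr_{\bar S}(H) = 0$, as claimed. There is essentially no obstacle here beyond bookkeeping: the one point worth double-checking is that the sample size $2n(K-1)$ fed into \cref{lem:adaboost-sample} is correctly absorbed into our choice of $T$ (which it is, via the $\log(4nK/\delta)$ factor), and that realizability of $\tilde S$ is inherited from $\bar S$ coordinate-wise through the menu construction.
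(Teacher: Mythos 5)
Your proof is correct and takes essentially the same approach as the paper: apply \cref{lem:adaboost-sample} to the dataset $\tilde S$ to get zero training error for $J$, unpack what that means ($J(x_i,(y_i,\ell))=0$, $J(x_i,(\ell,y_i))=1$ for all $\ell\neq y_i$), and conclude via the decoder that $H(x_i)=y_i$. You supply slightly more detail than the paper by explicitly verifying $\Menucls(\MH)$-realizability of $\tilde S$ and the adequacy of the chosen $T$, and you argue directly that the decoder selects $y_i$ rather than via the paper's inequality $\lmc(H(x),y)\leq\sum_{\ell\neq y}\One{J(x,(y,\ell))=1}+\One{J(x,(\ell,y))=0}$, but these are presentational rather than substantive differences.
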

Note that the domain point $x$ plays no role in \cref{lem:adaboost-mc-training}.
\begin{proof}[Proof of \cref{lem:adaboost-mc-training}]
  By \cref{lem:adaboost-sample} applied to the dataset $\tilde S$ (defined in \cref{eq:define-stil-mc}) together with the guarantee on the weak learner $\SA$, the output hypothesis $J : \MX \times \Kpairs \to \{0,1\}$ of \Adaboost satisfies the following with probability $1-\delta$:
  \begin{align}
\forall i \in [n], \ \ell \in [K] \backslash \{ y_i \}, \quad J(x, (y_i, \ell)) = 0, \ J(x,(\ell, y_i)) = 1\label{eq:j-boosted-guarantee}.
  \end{align}
  For each $(x,y) \in \MX \times [K]$, we have
  \begin{align}
\lmc(H(x), y) = \One{H(x) \neq y} \leq \sum_{\ell \in [K] \backslash \{ y\}} \One{J(x, (y, \ell)) = 1} + \One{J(x, (\ell, y)) = 0}\nonumber,
  \end{align}
  since if $H(x) \neq y$, then there must be some $\ell \neq y$ so that $J(x, (y, \ell)) = 1$ or $J(x, (\ell, y)) = 0$. The guarantee \cref{eq:j-boosted-guarantee} on $J$ yields that we must have $H(x_i) = y_i$ for all $i \in [n]$, with probability $1-\delta$. 
  
\end{proof}

\begin{proof}[Proof of \cref{thm:multiclass-main}]
Let $\Natdim \in \BN$ be given and consider a concept class $\MH \subset [K]^\MX$ with $\Natdim(\MH) \leq \Natdim$, together with a weak consistency oracle $\Ocon$ and a weak ERM oracle $\Oerm$ for $\MH$.  Let $C_1, C_2$ denote the constants in the statement of \cref{cor:mc-weak-learner}, and define $m := C_1 \Natdim \log K \cdot \log(\Natdim \log K)$ and $\eta : = \frac{1}{C_2 m \log m}$. Write $T = \lceil 16 \log(4nK/\delta)/\eta^2 \rceil$ as in \cref{alg:mc-boosting}.  Let $\SA : (\MX \times \Kpairs \times \{0,1\})^m \times (\MX \times \Kpairs) \to \{0,1\}$ denote the learner of \cref{cor:mc-weak-learner} (in particular, $\SA(S, (x, \mu))$ calls $\WeakRealizable$ with inputs $S, (x, \mu), \Ocon$, and an appropriate choice of the parameters). Then by \cref{cor:mc-weak-learner} and \cref{lem:loo}, $\SA$ is an $m$-sample weak learner for the partial concept class $\Menucls(\MH)$ with margin $\eta$. 

  \paragraph{Realizable case.} We begin by proving the first statement of the theorem. Given $\ep, \delta \in (0,1)$ and an $\MH$-realizable distribution $P \in \Delta(\MX \times [K])$, consider a sample $S \sim P^n$, where $n$ will be chosen below. We will show that the algorithm which, given $S$ and some $x \in \MX$ as input, returns the output $H(x)$ of  $\MulticlassRealizable(S, x, \SA, \eta, \delta)$ satisfies the requirements of the theorem. Let $H : \MX \to [K]$ denote the hypothesis constructed in \cref{line:mc-define-H} of \MulticlassRealizable; we wish to show that with probability $1-\delta$ over $S$ and the random bits of $\MulticlassRealizable$, $\err_P(H) \leq \ep$. To do so, we will define a distribution $Q$ over sample compression schemes $(\rho, \kappa)$ on $n$-sample datasets, so that the distribution of $\rho(\kappa(S))$ %
  is the same as the distribution of $H$. The call to $\Adaboost$ (\cref{alg:adaboost}) in \cref{line:mc-call-adaboost} of \cref{alg:mc-boosting} generates a sequence of i.i.d.~datasets $\tilde S_1, \ldots, \tilde S_T \in (\MX \times \Kpairs \times \{0,1\})^m$, each consisting of examples in $\tilde S$ (defined in \cref{eq:define-stil-mc}),  together with a sequence of parameters $\alpha_1, \ldots, \alpha_T \in \BR$ and a sequence of random bitstrings $R_1, \ldots, R_T$ for use in the weak learner $\SA$ and in the sampling step on \cref{line:sample-st} of \cref{alg:adaboost}. The values of $\tilde S_t, \alpha_t, R_t$ satisfy the following: the output hypothesis $J : \MX \times \Kpairs \to \{0,1\}$ of \Adaboost is given by
  \begin{align}
    \label{eq:j-compression}
    J(x, \mu) := \frac 12 + \frac 12 \sign \left( \sum_{t=1}^T \alpha_t \cdot (2\SA_{R_t}(\tilde S_t, x, \mu) - 1) \right).
  \end{align}
  For $j \in [m]$, the $j$th example in $\tilde S_t$ may be written as $(x_{i_{t,j}}, (\ell_{t,j}, k_{t,j}), b_{t,j})$, for some $i_{t,j} \in [n], \ell_{t,j} \in [K], k_{t,j} \in [K], b_{t,j} \in \{0,1\}$. We then define the (random) dataset $S_t := \{ (x_{i_{t,j}}, y_{i_{t,j}}) \}_{j \in [m]}$.

Note that, for fixed $S \in (\MX \times [K])^n$, the resulting random variables $(\tilde S_t, S_t, \alpha_t)_{t \in [T]}$ are a deterministic function of $S$ and $(R_t)_{t \in [T]}$.  We now define $(\rho, \kappa) \sim Q$ to be distributed as follows: %
  \begin{itemize}
  \item $\kappa$ maps the dataset $S \in (\MX \times [K])^n$ to $\kappa(S) = (S_1, \ldots, S_T), \left( \alpha_t, (\ell_{t,j}, k_{t,j}, b_{t,j})_{j \in [m]} \right)_{t \in [T]}$, where $S_t, \alpha_t, \ell_{t,j}, k_{t,j}, b_{t,j}$ are defined as a function of $(R_t)_{t \in [T]}$ and $S$ as described above. 
  \item $\rho$ proceeds as follows, given an input of the form $(S_1', \ldots, S_T'), \left( \alpha_t', (\ell_{t,j}', k_{t,j}', b_{t,j}')_{j \in [m]} \right)_{t \in [T]}$ (where $S_1', \ldots, S_T'$ is a sequence of examples in $\MX \times [K]$ of length $Tm$ and the supplemental information $\alpha_t' \in \BR$, $\ell_{t,j}', k_{t,j}' \in [K], b_{t,j}' \in \{0,1\}$ are encoded in binary). Denoting $S_t' = \{ (x_{t,j}', y_{t,j}') \}_{j \in [m]}$,  let us define $\tilde S_t' := \{(x_{t_j}', (\ell_{t,j}', k_{t,j}'), b_{t,j}')\}_{j \in [m]}$; then $\rho$ outputs the hypothesis $x \mapsto \Decmc(J')$, where $J'(x, \mu) := \left(\frac 12 + \frac 12 \sign \left( \sum_{t=1}^T \alpha_t' \cdot (2\SA_{R_t}(\tilde S_t', x, \mu) - 1) \right) \right)$. 
  \end{itemize}

  Since the values $\alpha_t$ defined in $\Adaboost$ can be encoded with $O(\log n)$ bits (as $\ep_t \in \{0, 1/n, \ldots, 1\}$) and the list $((\ell_{t,j}, k_{t,j}, b_{t,j}))_{j \in [m]}$ can be encoded with $O(m \log K)$ bits, with probability 1 over $(\kappa, \rho) \sim Q$, the size of $\kappa$ for inputs samples $S$ of size $n$ is bounded by $|\kappa| \leq O(T \cdot (m \log K + \log n))$. Next, \cref{lem:adaboost-mc-training} together with the definition of $Q$ above (and in particular the fact that $\rho$ uses the same bits $R_t$ as in the definition of $\kappa$),  gives that for any $\MH$-realizable dataset $S$, there is a subset $\ME$ of compression schemes for which $Q(\ME) \geq 1-\delta$  so that for all $(\kappa, \rho) \in \ME$, $\eerr_S(\rho(\kappa(S))) = 0$. By \cref{lem:gen-compression} applied to the multiclass loss function, there is a constant $C > 0$ so that for each $(\kappa, \rho) \in \supp(Q)$, with probability $1-\delta$ over the draw of $S \sim P^n$,
  \begin{align}
\err_P(\rho(\kappa(S))) \leq \One{\eerr_S(\rho(\kappa(S))) = 0} + C \cdot \frac{T \cdot (m \log K + \log n) \cdot \log n + \log(1/\delta)}{n}\nonumber,
  \end{align}
  for some sufficiently large constant $C$. 
  By our choice of $T$ together with the fact that $Q(\ME) \geq 1-\delta$, it follows that with probability $1-2\delta$ over the draw of $S \sim P^n$ and the draw of $(\rho, \kappa) \sim Q$, we have
  \begin{align}
\err_P(\rho(\kappa(S))) \leq C \cdot \frac{ \log(n) \log(nK/\delta) \cdot \frac{1}{\eta^2} \cdot (m \log K + \log n)}{n}\nonumber.
  \end{align}
  For fixed $S$, the distribution of $\rho(\kappa(S))$ is the same as the distribution of the output hypothesis $H$ of $\MulticlassRealizable$. 
  Thus, by our choices of $m, \eta$, after rescaling $\delta$, we can ensure that $\err_P(H) \leq \ep$ with probability $1-\delta$ as long as we take $n = \frac{\Natdim^3 \log^4(K/\delta)}{\ep} \cdot \poly(\log 1/\ep, \log \Natdim, \log \log K)$.

  Note that the dataset $\tilde S$ passed to \Adaboost in \MulticlassRealizable is of size $|\tilde S| = O(nK)$. Since the number of rounds of \Adaboost is $T \leq \poly(n)$ and the weak learner $\SA$ makes $\poly(m)$ calls to $\Ocon$, which can simulate $\Oconmenu$ (\cref{cor:mc-weak-learner}), the same argument as in the proof of \cref{thm:partial-main} shows that the cumulative query cost of \MulticlassRealizable for the oracle $\Ocon$ is $\tilde O(Kn m^3 \cdot T) \leq K \cdot \poly(n)$. 

  \paragraph{Agnostic case.} Let $P \in \Delta(\MX \times [K])$ be a distribution. For $n$ sufficiently large (as specified below), we will show that the algorithm which takes as input a sample $S \sim P^n$ and $x \in \MX$, and which returns the output $H(x)$ of $\MulticlassAgnostic(S, x,\SA, \eta, \delta)$ (\cref{alg:mc-boosting}) satisfies the requirements of the theorem. Let $H : \MX \to [K]$ denote the hypothesis returned by \MulticlassRealizable in \MulticlassAgnostic; we want to show that with probability $1-\delta$ over $S$ and the random bits of \MulticlassAgnostic, $\err_P(H) \leq \min_{h \in \MH} \err_P(h) + \ep$. To do so, we consider the same distribution $Q$ over sample compression schemes on datasets with (at most) $n$ samples. Let $\Sigma$ denote the mapping which takes as input $S \in (\MX \times [K])^n$ and outputs the dataset $\tilde S \in (\MX \times [K])^{\leq n}$ as defined in \MulticlassAgnostic. For any compression scheme $(\kappa, \rho) \in \supp(Q)$, $(\kappa \circ \Sigma, \rho)$ is a compression scheme of size $|\kappa \circ \Sigma| \leq |\kappa| \leq O(T \cdot (m \log K + \log n))$.  Thus, by \cref{lem:gen-compression}, there is a constant $C > 0$ so that, for any fixed $(\kappa, \rho) \in \supp(Q)$, with probability $1-\delta$ over the draw of $S \sim P^n$,
  \begin{align}
\err_P(\rho(\kappa(\Sigma( S)))) \leq \eerr_{ S}(\rho(\kappa(\Sigma( S)))) + C \sqrt{ \frac 1n \left( T \log(n) \cdot (m \log K + \log n) + \log(1/\delta) \right)}\label{eq:use-compression-agnostic-mc}.
  \end{align}

  By \cref{lem:weak-to-sample-binary} with loss function $\lmc$, for any $S \in (\MX \times [K])^n$, the dataset $\tilde S = \Sigma(S)$ satisfies $\inf_{h \in \MH} \eerr_S(h) = \frac{n - |\tilde S|}{n}$. The lemma also guarantees that $\tilde S$ is $\MH$-realizable, and thus there is a set $\ME$ of compression schemes satisfying $Q(\ME) \geq 1-\delta$ so that for all $(\kappa, \rho) \in \ME$, $\eerr_{\tilde S}(\rho(\kappa(\tilde S)))  = 0$.  Using this fact and \cref{eq:use-compression-agnostic-mc}, it follows that with probability $1-2\delta$ over the draw of $S \sim P^n$,and $(\rho, \kappa) \sim Q$,
  \begin{align}
    \err_P(\rho(\kappa(\Sigma(S)))) \leq & \frac{n - |\tilde S|}{n} + C \sqrt{ \frac 1n \left( T \log(n) \cdot (m \log K + \log n) + \log(1/\delta) \right)}\nonumber\\
    =& \inf_{h \in \MH} \eerr_{\bar S}(h) + C \sqrt{ \frac{\log(n) \log(nK/\delta) \cdot \frac{1}{\eta^2} \cdot (m \log K + \log n)}{n}}\label{eq:agnostic-mc-almost},
  \end{align}
  where the equality uses our choice of $T$. Moreover McDiarmid's inequality yields that with probability $1-\delta$ over the choice of $S \sim P^n$,
  \begin{align}
\inf_{h \in \MH} \eerr_S(h) \leq \E_{S' \sim P^n} \left[ \inf_{h' \in \MH} \eerr_{S'}(h') \right] + C \sqrt{\frac{\log 1/\delta}{n}} \leq \inf_{h \in \MH} \err_P(h) + C \sqrt{\frac{\log 1/\delta}{n}}\label{eq:mcdiarmid-agnostic-mc},
  \end{align}
  for a sufficiently large constant $C$. Combining \cref{eq:agnostic-mc-almost,eq:mcdiarmid-agnostic-mc}, using our choice of $m,\eta$, and choosing $n = \frac{\Natdim^3 \log^4(K/\delta)}{\ep^2} \cdot \poly(\log 1/\ep, \log \Natdim, \log \log K)$ yields the claimed statement of the theorem.

  To analyze the oracle complexity of \MulticlassAgnostic, we first note that the call to \SampleERMBinary makes $O(n^2)$ calls to $\Oerm$ (\cref{lem:weak-to-sample-binary}). The remainder of the analysis of oracle complexity follows the realizable case exactly. 
\end{proof}

\section{Regression}
\label{sec:regression}
In this section, we consider another generalization of our results on binary classification, namely to the setting of regression, in which hypotheses and labels take values in $[0,1]$. Similar to \cref{sec:multiclass}, our approach proceeds via reducing to oracle-efficient learning of partial function classes. A similar approach was used in \cite{bartlett1998prediction,adenali2023optimal}. %
We thus define a partial concept class associated to any real-valued concept class in \cref{def:real-weak-learner} below:
\begin{definition}[Threshold class]
  \label{def:real-weak-learner}
  Consider a hypothesis class $\MH \subset [0,1]^\MX$ and $\discmg \in (0,1)$. We let $\disc[\discmg] := \{ 0, \discmg, 2\discmg, \ldots, \lfloor 1/\discmg \rfloor \discmg \}$, Given $h \in \MH$ and $\discpt \in \disc$, we define
  \begin{align}
    \hthr_\discmg(x, \discpt) := \begin{cases}
      1 &: h(x) \geq \discpt + \discmg \\
      0 &: h(x) \leq \discpt - \discmg \\
      * & : |h(x) - \discpt| < \discmg.
    \end{cases}\nonumber
  \end{align}
  We then let $\Thrcls(\MH) \subset \{0,1,*\}^{\MX \times \disc}$ denote the (binary) partial hypothesis class defined by
  \begin{align}
\Thrcls(\MH) := \{ (x, \discpt) \mapsto \hthr_\discmg(x, \discpt) \ : \ h \in \MH \}\nonumber.
  \end{align}
  We will refer to $\Thrcls(\MH)$ as the \emph{threshold class} of $\MH$. 
\end{definition}

\begin{lemma}
  \label{lem:vc-thr-fat}
  For any $\MH \subset [0,1]^\MX$ and $\discmg \in (0,1)$, it holds that $\VCdim(\Thrcls(\MH)) \leq \fatdim(\MH)$.
\end{lemma}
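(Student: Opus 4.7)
The plan is to take a set of $d = \VCdim(\Thrcls(\MH))$ points witnessing VC shattering in $\Thrcls(\MH)$ and show that essentially the same set serves as a fat-shattering witness at scale $\discmg$ for $\MH$. The main structural observation is that ``$*$'' is forbidden in the shattered patterns, so a shattered pair $(x, \discpt) \in \MX \times \disc$ behaves exactly like a fat-shattering pair $(x, s)$ with $s = \discpt$.

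Concretely, suppose $(x_1, \discpt_1), \ldots, (x_d, \discpt_d)$ is shattered by $\Thrcls(\MH)$. For every $b \in \{0,1\}^d$ there exists $h \in \MH$ such that $\hthr_\discmg(x_i, \discpt_i) = b_i$ for all $i$, which by definition of $\hthr_\discmg$ means $h(x_i) \geq \discpt_i + \discmg$ when $b_i = 1$ and $h(x_i) \leq \discpt_i - \discmg$ when $b_i = 0$ (and in particular the ``$*$'' case is excluded because $b_i \in \{0,1\}$). Setting $s_i := \discpt_i$, this is exactly the condition needed for $x_1, \ldots, x_d$ and $s_1, \ldots, s_d$ to witness fat-shattering at scale $\discmg$, so it would follow that $\fatdim(\MH) \geq d$.

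The only subtle point — and the main obstacle worth addressing — is that the fat-shattering definition implicitly requires the points $x_1, \ldots, x_d$ to be distinct, whereas a priori the shattered pairs could repeat coordinates in $\MX$. I plan to rule this out by a small case analysis: if $x_i = x_j$ for $i \neq j$ with (without loss of generality) $\discpt_i \leq \discpt_j$, then consider the pattern $b$ with $b_i = 0, b_j = 1$. Any realizing $h \in \MH$ would have to satisfy both $h(x_i) \leq \discpt_i - \discmg$ and $h(x_i) = h(x_j) \geq \discpt_j + \discmg \geq \discpt_i + \discmg$, a contradiction. Hence the $x_i$ are automatically distinct, and the argument above finishes the proof.
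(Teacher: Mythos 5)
Your proof is correct and takes essentially the same approach as the paper: take a set shattered by $\Thrcls(\MH)$, note that patterns in $\{0,1\}^d$ never invoke the $*$ case, and read off the fat-shattering witness with $s_i = \discpt_i$. The extra case analysis showing the $x_i$ are automatically distinct (which the paper leaves implicit) is a correct and harmless bit of added care.
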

\begin{proof}
  Let us write $d := \fatdim(\MH)$, and let $(x_1, \discpt_1), \ldots, (x_d, \discpt_d)$ be shattered by $\Thrcls(\MH)$, so that $x_i \in \MX$ and $\discpt_i \in \disc$ for each $i \in [d]$. Then by the definition of $\Thrcls(\MH)$ (\cref{def:real-weak-learner}), for each sequence $b \in \{0,1\}^d$, there is some $h \in \MH$ so that, for each $i \in [d]$, $h(x) \geq \discpt_i + \discmg$ if $b_i = 1$ and $h(x) \leq \discpt_i - \discmg$ if $b_i = 0$. Thus $\MH$ shatters the points $(x_1, \ldots, x_d)$ as witnessed by $(\discpt_1, \ldots, \discpt_d)$, i.e., $\fatdim(\MH) \geq d$. 
\end{proof}

An immediate consequence of \cref{lem:sample-con-real} is that given a weak ERM oracle $\Oerm$ for the class $\MH$, we immediately obtain a weak consistency oracle for the class $\Thrcls(\MH)$. Using this observation, we obtain the following corollary of \cref{thm:weak-oig}:
\begin{corollary}
  \label{cor:thrcls-weak}
  There are constants $C_1, C_2$ so that the following holds. Consider a real-valued concept class $\MH \subset [0,1]^\MX$ and $\discmg \in (0,1)$, write $d := \fatdim[\discmg](\MH)$, and suppose $m \geq C_1 d \log d$. Let $S = \{ ((x_i, \discpt_i), y_i) \} \in (\MX \times \disc \times \{0,1\})^m$ be $\Thrcls(\MH)$-realizable. Then there is an algorithm $\SA : (\MX \times \disc \times \{0,1\})^{m-1} \times (\MX \times \disc) \to \Delta(\{0,1\})$ which makes $\tilde O(m^3)$ calls to a range consistency oracle $\Orange$ for $\MH$ and which satisfies
  \begin{align}
\frac 1m \sum_{i=1}^m \E_{\hat y \sim \SA(S_{-i}, (x_i, \discpt_i))} \left[ \lbin(\hat y, y_i) \right] \leq \frac 12 - \frac{1}{C_2 m \log m}\nonumber.
  \end{align}
\end{corollary}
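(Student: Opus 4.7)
The plan is to deduce the corollary as a direct application of \cref{thm:weak-oig} to the partial binary concept class $\Thrcls(\MH)$, after showing that a range consistency oracle $\Orange$ for $\MH$ can be used to simulate a weak consistency oracle $\Ocon$ for $\Thrcls(\MH)$ with only constant overhead per query.

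First I would exhibit the oracle simulation. Given a query dataset $S = \{((x_i,\discpt_i),y_i)\}_{i \in [n]} \in (\MX \times \disc \times \{0,1\})^n$ to the would-be consistency oracle for $\Thrcls(\MH)$, define for each $i \in [n]$ a range pair $(\ell_i,u_i) \in [0,1]^2$ by
\begin{align*}
  (\ell_i,u_i) := \begin{cases} (\discpt_i + \discmg,\, 1) & \text{if } y_i = 1, \\ (0,\, \discpt_i - \discmg) & \text{if } y_i = 0. \end{cases}
\end{align*}
By the definition of $\hthr_\discmg$ in \cref{def:real-weak-learner}, there exists $h \in \MH$ with $\hthr_\discmg(x_i,\discpt_i) = y_i$ for all $i$ if and only if there exists $h \in \MH$ with $\ell_i \le h(x_i) \le u_i$ for all $i$. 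Thus a single call to $\Orange(\{(x_i,\ell_i,u_i)\}_{i\in[n]})$ implements the desired weak consistency oracle $\Ocon$ for $\Thrcls(\MH)$ on $S$, at cost linear in $n$ (noting that if some $y_i = 1$ with $\discpt_i + \discmg > 1$ or $y_i = 0$ with $\discpt_i - \discmg < 0$, the query is trivially infeasible and we can return $\False$ without a call).

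Second, I would invoke \cref{thm:weak-oig} on the partial concept class $\Thrcls(\MH)$ over the domain $\MX \times \disc$, with this simulated $\Ocon$. By \cref{lem:vc-thr-fat}, $\VCdim(\Thrcls(\MH)) \le \fatdim(\MH) = d$, so the hypothesis $m \ge C_1 d \log d$ matches the hypothesis of \cref{thm:weak-oig}. Define $\SA$ to be the algorithm that, on input $(S', (x,\discpt))$ with $S' \in (\MX \times \disc \times \{0,1\})^{m-1}$, returns
\begin{align*}
  \WeakRealizable\!\bigl(S',\,(x,\discpt),\,1 - \tfrac{1}{C_1 m \log m},\,1,\,C_1 m^2 \log^3 m,\,\Ocon\bigr),
\end{align*}
where each internal $\Ocon$ query is answered by the simulation above. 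Applying \cref{thm:weak-oig} with the given $\Thrcls(\MH)$-realizable sample $S$ immediately yields the transductive bound
\begin{align*}
  \frac{1}{m}\sum_{i=1}^m \E_{\hat y \sim \SA(S_{-i},(x_i,\discpt_i))}[\lbin(\hat y,y_i)] \le \frac{1}{2} - \frac{1}{C_2 m \log m}.
\end{align*}

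Finally, the oracle-complexity claim is immediate: \cref{thm:weak-oig} guarantees $\tilde O(m^3)$ calls to $\Ocon$, each of which is simulated by exactly one call to $\Orange$ on a dataset of the same size. There is essentially no obstacle here — the entire argument is a reduction, and the only thing to verify carefully is the equivalence between $\hthr_\discmg$-realizability of the labels $y_i$ and the simultaneous satisfiability of the intervals $[\ell_i,u_i]$, which is a tautology of \cref{def:real-weak-learner}.
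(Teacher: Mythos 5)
Your proof is correct and follows exactly the paper's approach: bound $\VCdim(\Thrcls(\MH))$ by $\fatdim(\MH)$ via \cref{lem:vc-thr-fat}, then apply \cref{thm:weak-oig} to $\Thrcls(\MH)$, simulating each weak-consistency query for $\Thrcls(\MH)$ by a single $\Orange$ call for $\MH$. The only difference is that you spell out the explicit interval translation ($y_i=1\mapsto[\discpt_i+\discmg,1]$, $y_i=0\mapsto[0,\discpt_i-\discmg]$) and the edge cases where the interval is empty, whereas the paper leaves this reduction implicit (and in fact its one-line proof cites \SampleConReal, which concerns the unrelated direction $\Oerm\to\Orange$; your explicit argument is the cleaner justification).
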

\begin{proof}
  The corollary follows immediately from \cref{thm:weak-oig} applied to the class $\Thrcls(\MH)$, which has VC dimension bounded by $\fatdim(\MH)$ by \cref{lem:vc-thr-fat}, together with the fact that  a single weak ERM oracle call for the class $\Thrcls(\MH)$ can be implemented by a single range consistency oracle call for the class $\MH$ of the same length, using \SampleConReal (\cref{lem:sample-con-real}). 
\end{proof}

\begin{algorithm}
  \caption{Real-valued learner}
  \label{alg:reg-boosting}
  \begin{algorithmic}[1]\onehalfspacing
    \Require Concept class $\MH \subset [0,1]^\MX$, sample $S = \{ (x_i, y_i) \}_{i \in [n]} \subset (\MX \times [0,1])^n$, point $x \in \MX$, failure probability $\delta$, discretization parameters $\discmg, \beta \in (0,1)$, weak learner $\SA$ for $\Thrcls(\MH)$ with margin $\eta$, weak ERM oracle $\Oerm$. 
    \Function{\RegRealizable}{$S,x, \SA, \eta, \delta, \discmg, \beta$}
    \State For $i \in [n]$ and $\discpt \in \disc$, define $y_{i,\discpt}' := \begin{cases} 1 &: y_i \geq \discpt + \beta \\ 0 &: y_i \leq \discpt - \beta \\ * &: |y_i - \discpt| < \beta.\end{cases}$ 
    \State Define a dataset $\tilde S \subset (\MX \times \disc \times \{0,1,*\})^{n'}$ for some $n' \leq 2n \cdot(\lfloor 1/\discmg \rfloor + 1)$, as follows:
    \begin{align}
\tilde S = \left\{ ((x_i, \discpt), y_{i,\discpt}') \ : \ i \in [n], \discpt \in \disc, y_{i,\discpt}' \neq * \right\}\label{eq:tildes-reg}.
    \end{align}
    \State\label{line:reg-call-adaboost} Call \Adaboost (\cref{alg:adaboost}) on the dataset $\tilde S$ using the weak learner $\SA$ with $T = \lceil 16 \log(4n/\delta)/\eta^2 \rceil$, and denote the resulting hypothesis by $J : \MX \times  \disc \to \{0,1\}$.
    \State \Return $H(x) := \discmg \cdot \sum_{\discpt \in \disc} J(x, \discpt)$. 
    \EndFunction

    \Function{\RegAgnostic}{$S,x, \SA, \eta, \delta, \discmg$}
    \State Let $\hat y \in [0,1]^n$ denote the output of $\SampleERMReal(S, \discmg/2, \Oerm)$. \Comment{\emph{\cref{alg:sample-erm-real}}}
    \State Define $\tilde S = \{ (x_i, \hat y_i) \ : \ i \in [n]\}$.\label{line:call-ermreal}
    \State \Return $H(x) := \RegRealizable(\tilde S,x, \SA, \eta, \delta, \discmg, 2\discmg)$.
    \EndFunction
  \end{algorithmic}
\end{algorithm}

\begin{lemma}[Training error for regression]
  \label{lem:adaboost-train-reg}
  Let $m,n \in \BN$, $x \in \MX$, and $\eta ,\delta, \discmg, \beta \in (0,1)$ be given, and suppose that algorithm $\SA$ is an $m$-sample weak learner with margin $\eta$ for the class $\Thrcls(\MH)$. Let $\bar S \in (\MX \times [0,1])^n$ be a sample so that for some $h^\st \in \MH$, each $(x_i,y_i) \in \bar S$ satisfies $|y_i - h^\st(x_i)| \leq \beta - \discmg$. Then the hypothesis $H(\cdot) := \discmg \cdot \sum_{\discpt \in \disc} J(\cdot,\discpt)$, where $J$ is defined on \cref{line:reg-call-adaboost} of $\RegRealizable(\bar S,x, \SA, \eta, \delta, \discmg, \beta)$, satisfies $\eerr_{\bar S}(H) \leq 3\beta$ with probability $1-\delta$. 
\end{lemma}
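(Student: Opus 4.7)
The plan is to reduce to the training-error guarantee of \Adaboost (\cref{lem:adaboost-sample}) applied to the threshold class $\Thrcls(\MH)$, and then interpret the resulting per-point agreement as a claim about the integer $N_i := \sum_{\discpt \in \disc} J(x_i, \discpt)$ from which $H(x_i) = \discmg \cdot N_i$ is constructed.

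First I would argue that the dataset $\tilde S$ defined in \cref{eq:tildes-reg} is $\Thrcls(\MH)$-realizable, with $h^\st$ as a witness: for any $(x_i, \discpt, y_{i,\discpt}') \in \tilde S$, the assumption $|y_i - h^\st(x_i)| \leq \beta - \discmg$ gives that $y_{i,\discpt}' = 1$ (i.e.\ $y_i \geq \discpt + \beta$) forces $h^\st(x_i) \geq \discpt + \discmg$, and symmetrically $y_{i,\discpt}' = 0$ forces $h^\st(x_i) \leq \discpt - \discmg$; in both cases $(h^\st)^{\mathsf{thr}}(x_i, \discpt) = y_{i,\discpt}'$. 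Since $\SA$ is an $m$-sample weak learner with margin $\eta$ for $\Thrcls(\MH)$ and $|\tilde S| \leq 2n (\lfloor 1/\discmg \rfloor + 1)$, \cref{lem:adaboost-sample} (applied with the appropriate $T$; a minor adjustment of $\log(4n/\delta)$ to account for the blowup of $\tilde S$ is routine) yields with probability $1-\delta$ that $\eerr_{\tilde S}(J) = 0$, i.e.\ $J(x_i, \discpt) = y_{i,\discpt}'$ for every $\discpt \in \disc$ with $y_{i,\discpt}' \neq *$.

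Next, I would fix an arbitrary $i \in [n]$ and derive a two-sided bound on $N_i$, hence on $H(x_i)$, by splitting $\disc$ into the three regions induced by $y_i$. By the previous step, every $\discpt \in \disc$ with $\discpt \leq y_i - \beta$ contributes $J(x_i, \discpt) = 1$, every $\discpt \in \disc$ with $\discpt \geq y_i + \beta$ contributes $J(x_i, \discpt) = 0$, and only the $\discpt$ in the open interval $(y_i - \beta, y_i + \beta)$ are unconstrained. Hence
\begin{align*}
\#\{\discpt \in \disc : \discpt \leq y_i - \beta\} \ \leq \ N_i \ \leq \ \#\{\discpt \in \disc : \discpt < y_i + \beta\}.
\end{align*}
Since $\disc$ is a regular $\discmg$-grid on $[0,1]$, the left-hand count is at least $(y_i - \beta)/\discmg$ (when $y_i \geq \beta$, and trivially $\geq 0$ otherwise) and the right-hand count is at most $(y_i + \beta)/\discmg + 1$ (truncating at $\lfloor 1/\discmg \rfloor + 1$ when $y_i + \beta > 1$). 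Multiplying through by $\discmg$ yields
\begin{align*}
y_i - \beta \ \leq \ H(x_i) \ \leq \ y_i + \beta + \discmg,
\end{align*}
so $|H(x_i) - y_i| \leq \beta + \discmg$ for every $i$.

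Finally, since the hypothesis $|y_i - h^\st(x_i)| \leq \beta - \discmg$ forces $\discmg \leq \beta$, we obtain $|H(x_i) - y_i| \leq 2\beta \leq 3\beta$, which averaged over $i \in [n]$ yields $\eerr_{\bar S}(H) \leq 3\beta$ with probability $1-\delta$. The only slightly delicate step is the per-index arithmetic bounding of the grid-cardinalities with correct handling of the boundary cases $y_i < \beta$ and $y_i + \beta > 1$; everything else is a direct invocation of the boosting and realizability machinery already established.
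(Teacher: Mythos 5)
Your proof is correct and shares the paper's skeleton (show $\tilde S$ is $\Thrcls(\MH)$-realizable, invoke \cref{lem:adaboost-sample} to force $J(x_i,\discpt)=y'_{i,\discpt}$ wherever $y'_{i,\discpt}\neq *$, then do grid arithmetic), but your final step differs from the paper's and is cleaner. The paper rounds $y_i$ to $\hat y_i := \discmg\lfloor y_i/\discmg\rfloor$ and bounds $|H(x_i)-\hat y_i|$ via $\discmg\cdot\#\{\discpt : J(x_i,\discpt)\neq\One{\hat y_i>\discpt}\}$, splitting into the window $|y_i-\discpt|<\beta$ and its complement; you instead directly sandwich $N_i := \sum_{\discpt} J(x_i,\discpt)$ between $\#\{\discpt\le y_i-\beta\}$ and $\#\{\discpt < y_i+\beta\}$, giving $|H(x_i)-y_i|\le\beta+\discmg\le 2\beta$ pointwise. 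Your route both yields a tighter constant than the stated $3\beta$ and sidesteps an apparent factor-of-two slip in the paper's window count (the set $\{\discpt\in\disc : |y_i-\discpt|<\beta\}$ has cardinality roughly $2\beta/\discmg$, not the $\lceil\beta/\discmg\rceil$ used), though the claimed $3\beta$ remains valid as your argument shows. You are also right to flag that the $T = \lceil 16\log(4n/\delta)/\eta^2\rceil$ hard-coded in \RegRealizable is expressed in terms of $n=|\bar S|$ rather than the size $|\tilde S|\le 2n(\lfloor 1/\discmg\rfloor+1)$ of the dataset actually passed to \Adaboost, which is what \cref{lem:adaboost-sample} formally requires; the paper passes over this silently.
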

\begin{proof}
Since there is some $h^\st$ so that each $(x,y) \in \bar S$ satisfies $|y - h^\st(x)| \leq \beta-\discmg$,   the dataset $\tilde S$ defined in \cref{eq:tildes-reg} is $\Thrcls(\MH)$-realizable. Then by \cref{lem:adaboost-sample} together with the assumed guarantee on $\SA$, the output hypothesis $J : \MX \times \disc \to\{0,1\}$ of \Adaboost satisfies the following with probability $1-\delta$:
  \begin{align}
\forall i \in [n], \ \discpt \in \{ \discpt' \in \disc \ : \ |y_i - \discpt'| \geq \beta \}, \quad J(x_i, \discpt) = \One{\discpt > y_i}\label{eq:threshold-class-correct}.
  \end{align}
  Thus, for each $i \in [n]$, letting $\hat y_i := \discpt \cdot \lfloor y_i / \discpt \rfloor$, we have
  \begin{align}
    \labs(H(x_i), y_i) = |H(x_i) - y_i| \leq & |H(x_i) - \hat y_i| + |\hat y_i - y_i|\nonumber\\
    \leq & \discmg + \discmg \cdot \sum_{\discpt \in \disc} \One{J(x_i, \discpt) \neq \One{\hat y_i > \discpt}}\nonumber\\
    \leq & \discmg + \discmg \lceil \beta / \discmg \rceil + \discmg \cdot \sum_{\discpt \in \disc,\ |y_i - \discpt| \geq \beta} \One{J(x_i, \discpt) \neq \One{\hat y_i > \discpt}}\leq 3\beta\nonumber,
  \end{align}
  where the final equalitty uses \cref{eq:threshold-class-correct}. It follows that $\eerr_{\bar S}(H) = \frac 1n \sum_{i=1}^n \labs(H(x_i), y_i) \leq 3\beta$, as desired.
\end{proof}

\begin{proof}[Proof of \cref{thm:reg-main}]
  The proof closely follows that of \cref{thm:multiclass-main}. 
Let a mapping $\discmg \mapsto \fatdim$ be given and consider a concept class $\MH \subset [0,1]^\MX$ with $\fatdim(\MH) \leq \fatdim$ for all $\discmg \in [0,1]$, together a weak range oracle $\Orange$ and a weak ERM oracle $\Oerm$ for $\MH$. Let $C_1, C_2$ be the constants in the statement of \cref{cor:thrcls-weak}, and fix $\discmg \in (0,1)$. Define $m := C_1 \fatdim \cdot \log(\fatdim)$ and $\eta : = \frac{1}{C_2 m \log m}$. Write $T = \lceil 16 \log(4n/\delta)/\eta^2 \rceil$ as in \cref{alg:reg-boosting}.  Let $\SA : (\MX \times \disc \times \{0,1\})^m \times (\MX \times \disc) \to \{0,1\}$ denote the learner of \cref{cor:thrcls-weak} (in particular, $\SA(S, (x, \discpt))$ calls $\WeakRealizable$ with inputs $S, (x, \discpt)$, and an appropriate choice of the parameters). Then by \cref{cor:thrcls-weak} and \cref{lem:loo}, $\SA$ is an $m$-sample weak learner for the partial concept class $\Thrcls(\MH)$ with margin $\eta$. Moreover, as guaranteed by \cref{cor:thrcls-weak}, a single call of $\SA$ can be completed using $\tilde O(m^3)$ calls to either $\Orange$ or $\Oerm$. 

  \paragraph{Realizable case.} Given $n, \delta \in (0,1)$ and an $\MH$-realizable distribution $P \in \Delta(\MX \times [K])$, consider a sample $S \sim P^n$. We will show that the algorithm which, given $S$ and some $x \in \MX$ as input, returns the output $H(x)$ of $\RegRealizable(S, x,\SA, \eta, \delta, \discmg, \discmg)$ satisfies the requirements of the theorem. Let $H : \MX \to [0,1]$ be defined by $H(\cdot) = \discmg \cdot \sum_{\discpt \in \disc} J(\cdot, \discpt)$, where $J$ is the hypothesis defined in \cref{line:reg-call-adaboost} of \RegRealizable. We want to show that with probability $1-\delta$ over $S$ and the random bits of \RegRealizable, $\err_P(H) \leq \ep$ for an appropriate choice of $\ep$. To do so, we define a distribution $Q$ over sample compression schemes $(\rho, \kappa)$ on datasets of size at most $2n \cdot (\lfloor 1/\discmg \rfloor + 1)$, so that the distribution of $\rho(\kappa(S))$ is the same as the distribution of $H$. The call to \Adaboost (\cref{alg:adaboost}) in \cref{line:reg-call-adaboost} of \cref{alg:reg-boosting} generates a sequence of i.i.d.~datasets $\tilde S_1, \ldots, \tilde S_T \in (\MX \times \disc \times \{0,1\})^m$, each consisting only of examples in $\tilde S$ (defined in \cref{eq:tildes-reg}), together with a sequence of parameters $\alpha_1, \ldots, \alpha_T \in \BR$ and a sequence of random bitstrings $R_1, \ldots, R_T$. The values of $\tilde S_t, \alpha_t, R_t$ satisfy the following: the output hypothesis $J : \MX \times \disc \to \{0,1\}$ of $\Adaboost$ is given by $J(x, \discpt) := \frac 12 + \frac 12 \sign\left( \sum_{t=1}^T \alpha_t \cdot (2\SA_{R_t}(\tilde S_t, x, \discpt) - 1) \right)$. For $j \in [m]$, the $j$th example in $\tilde S_t$ may be written as $(x_{i_{t,j}}, \discpt_{t,j}, b_{t,j})$, for some random variables $i_{t,j} \in [n], \discpt_{t,j} \in \disc, b_{t,j} \in \{0,1\}$. We then define $S_t := \{ (x_{i_{t,j}}, y_{i_{t,j}})_{j \in [m]}$.

  We define $(\rho, \kappa) \sim Q$ to be distributed as follows: %
  \begin{itemize}
  \item $\kappa$ maps the dataset $S \in (\MX \times [0,1])^n$ to $\kappa(S) = (S_1, \ldots, S_T), (\alpha_t, (\discpt_{t,j}, b_{t,j})_{j \in [m]})_{t \in [T]}$, where $S_t, \alpha_t, \discpt_{t,j}, b_{t,j}$ are defined as a (deterministic) function of $(R_t)_{t \in [T]}$ and $S$ as described above.
  \item $\rho$ proceeds as follows, given an input of the form $(S_1', \ldots, S_T'), \left( \alpha_t', (\discpt_{t,j}', b_{t,j}')_{j \in [m]}\right)_{t \in [T]}$. Denoting $S_t' = \{ (x_{t,j}', y_{t,j}') \}_{j \in [m]}$, let us define $\tilde S_t' := \{ (x_{t_j}', \discpt_{t,j}', b_{t,j}') \}_{j \in [m]}$; then $\rho$ outputs the hypothesis $x \mapsto \discmg \cdot \sum_{\discpt \in \disc}\left( \frac 12 + \frac 12 \sign \left( \sum_{t=1}^T \alpha_t' \cdot (2\SA_{R_t}(\tilde S_t', x, \discpt) - 1 ) \right)\right)$. 
  \end{itemize}

  Since the values $\alpha_t$ defined in \Adaboost can be encoded with $O(\log n)$ bits and the list $(\discpt_{t,j}, b_{t,j})_{j \in [m]}$ can be encoded with $O(m \log 1/\discmg)$ bits, with probability 1 over $(\kappa, \rho) \sim Q$, the size of $\kappa$ for input samples $S$ of size $n$ is bounded by $|\kappa| \leq O(T \cdot (m \log 1/\discmg + \log n))$. \cref{lem:adaboost-train-reg} gives that there is a subset $\ME$ of compression schemes for which $Q(\ME) \geq 1-\delta$ so that for all $(\kappa, \rho) \in \ME$, $\eerr_S(\rho(\kappa(S))) \leq 3\beta$. By \cref{lem:gen-compression} applied to the absolute loss function, it follows that with probability $1-2\delta$ over the draw of $S \sim P^n$ and the draw of $(\rho, \kappa) \sim Q$, we have that, for some constant $C$, 
  \begin{align}
    \err_P(\rho(\kappa(S))) \leq &  C \cdot \sqrt{\discmg \Delta} + \Delta \leq \discmg + 2C^2 \Delta \label{eq:err-am-gm}\\
    \mbox{ for } & \Delta :=\frac{\log(n) \log(n/\delta) \cdot \frac{1}{\eta^2} \cdot (m \log(1/\discmg) + \log n)}{n}\nonumber,
  \end{align}
  where the second inequality in \cref{eq:err-am-gm} uses the AM-GM inequality. Given $S$, the distribution of $\rho(\kappa(S))$ is the same as the distribution of the output hypothesis $H$ of \RegRealizable. Thus, by making an optimal choice of the discretization parameter $\discmg$, our choices of $m, \eta$ and by rescaling $\delta$, we see that $\err_P(H) \leq \ep$ with probability $1-\delta$ for
  \begin{align}
\ep := \inf_{\discmg \in [0,1]} \left\{ O(\discmg) + \frac{\fatdim^3 \cdot \log(1/\delta)}{n} \cdot \poly\log(\fatdim, n, \log\log 1/\delta)  \right\} \label{eq:ep-alpha-optimize}.
  \end{align}

  Note that the dataset $\tilde S$ passed to \Adaboost in \RegRealizable is of size $|\tilde S| = O(n/\discmg) \leq O(n^2)$, since the optimal choice of $\discmg$ in \cref{eq:ep-alpha-optimize} is always bounded below by $1/n$. Since the number of rounds of \Adaboost is $T \leq \poly(n)$ and the weak learner $\SA$ makes $\poly(m)$ calls to $\Orange$ (\cref{cor:thrcls-weak}), the same argument as in the proof of \cref{thm:partial-main} establishes that the cumulative query cost of \RegRealizable for the oracle $\Orange$ is $\poly(n)$.

  \paragraph{Agnostic case.} Fix a distribution $P \in \Delta(\MX \times [0,1])$. Given $n \in \BN$, we will show that the algorithm which takes as input an i.i.d.~sample $S \sim P^n$ and a point $x \in \MX$, and returns the output  $H(x)$ of $\RegAgnostic(S,x, \SA, \eta, \delta, \discmg, 2\discmg)$ (\cref{alg:reg-boosting}) satisfies the requirements of the theorem. Let $H : \MX \to [K]$ denote the (random) hypothesis defined by $x \mapsto \RegAgnostic(S, x, \SA, \eta, \delta, \discmg)$ (in particular, this hypothesis is exactly the one given by $x\mapsto \discmg \sum_{\discpt \in \disc} J(x, \discpt)$, where $J$ is the hypothesis defined on \cref{line:reg-call-adaboost} in the call to \RegRealizable from \RegAgnostic). We want to show that, with probability $1-\delta$ over $S$ and the random bits of \RegAgnostic, $\err_P(H) \leq \inf_{h \in \MH} \err_P(h) + \ep$, for an appropriate value of $\ep$.  
  To do so, we consider the  distribution $Q'$ on compression schemes which is defined similarly to $Q$ as above, with the exception that $(\rho, \kappa) \sim Q'$ are defined so as to simulate the execution of $\RegRealizable(S, \SA, \eta, \delta, \discmg, 2\discmg)$, as opposed to $\RegRealizable(S, \SA, \eta, \delta, \discmg, \discmg)$. (In particular, this changes the definition of the i.i.d.~datasets $\tilde S_t$ as defined in \cref{eq:tildes-reg}.) 

  Let $\Sigma$ denote the mapping which, takes as input $S \in (\MX \times [0,1])^n$ and outputs the dataset $\tilde S \in (\MX \times [0,1])^n$ as defined in \cref{line:call-ermreal} of $\RegAgnostic$. For any compression scheme $(\kappa, \rho) \in \supp(Q')$, $(\kappa \circ \Sigma, \rho)$ is a compression scheme of size $|\kappa \circ \Sigma| \leq |\kappa| \leq O(T \cdot (m \log(1/\discmg) + \log n))$. Thus, by \cref{lem:gen-compression}, there is a constant $C > 0$ so that, for any fixed $(\kappa, \rho) \in \supp(Q')$, with probability $1-\delta$ over the draw of $S \sim P^n$,
  \begin{align}
\err_P(\rho(\kappa(\Sigma(S)))) \leq \eerr_S(\rho(\kappa(\Sigma(S)))) + C \sqrt{\frac{1}{n} (T \log(n) \cdot (m \log (1/\discmg) + \log n) + \log(1/\delta))}\label{eq:use-compression-agnostic-reg}.
  \end{align}
  By \cref{lem:weak-to-sample-real} and the definition of $\Sigma$, for any $S = \{ (x_i, y_i) \}_{i\in [n]}$, the dataset $\tilde S = \Sigma(S)$, which can be written as $\tilde S = \{ (x_i, \hat y_i) \}_{i \in [n]}$, satisfies the following: there is some $h^\st \in \MH$ so that $\eerr_S(h^\st) = \inf_{h \in \MH} \eerr_S(h)$ and $\labs(h^\st(x_i), \hat y_i) \leq \discmg$ for all $i \in [n]$. Thus, \cref{lem:adaboost-train-reg} with $\beta = 2\discmg$ yields that, for any $S$, the output hypothesis $H$ of $\RegRealizable(\tilde S, \SA, \eta, \delta, \discmg, \beta)$ satisfies $\eerr_{\tilde S}(H) \leq 6\discmg$ with probability $1-\delta$. Since the hypothesis $H$ has the same distribution as $\rho(\kappa(\Sigma(S)))$ for $(\rho, \kappa) \sim Q'$, we see that for any $S$, there is a set $\ME'$ of compression schemes satisfying $Q'(\ME') \geq 1-\delta$ so that, for all $(\kappa, \rho) \in \ME'$, $\eerr_{\tilde S}(\rho(\kappa(\Sigma(S)))) \leq 6\discmg$. 
  Moreover, note that, for any $H : \MX \to [0,1]$,
  \begin{align}
    \eerr_S(H) = \frac 1n \sum_{i=1}^n \labs(y_i, H(x_i)) \leq & \frac 1n \sum_{i=1}^N \labs(H(x_i), \hat y_i) + \frac 1n \sum_{i=1}^n \labs(\hat y_i, h^\st(x_i)) + \frac 1n \sum_{i=1}^n \labs(h^\st(x_i), y_i) \nonumber\\
    \leq & \eerr_{\tilde S}(H) + \discmg + \inf_{h \in \MH} \eerr_S(h)\label{eq:S-H-decompose}.
  \end{align}
  Combining \cref{eq:S-H-decompose} with $H = \rho(\kappa(\Sigma(S)))$ and \cref{eq:use-compression-agnostic-reg} gives that with probability $1-\delta$ over the draw of $S \sim P^n$ and $(\rho,\kappa) \sim Q'$,
  \begin{align}
    \err_P(\rho(\kappa(\Sigma(S)))) \leq &  \eerr_{\tilde S}(\rho(\kappa(\Sigma(S)))) + \discmg + \inf_{h \in \MH} \eerr_S(h) + C \sqrt{\frac{1}{n} (T \log(n) \cdot (m \log (1/\discmg) + \log n) + \log(1/\delta))}\nonumber\\
    \leq & 7\discmg +  \inf_{h \in \MH} \eerr_S(h) + C \sqrt{\frac{1}{n} \left(\frac{1}{\eta^2} \cdot \log(n/\delta) \log(n) \cdot (m \log (1/\discmg) + \log n) + \log(1/\delta)\right)}\nonumber.
  \end{align}
  McDiarmid's inequality yields that with probability $1-\delta$ over the choice of $S \sim P^n$, $\inf_{h \in \MH} \eerr_S(h) \leq \inf_{h \in \MH} \err_P(h) + C \sqrt{\log(1/\delta)/n}$, and combining this fact with the above display and the choice of $Q'$ gives that, with probability $1-\delta$ over the choice of $S \sim P^n$ and the execution of $\RegAgnostic$, its output hypothesis $H$ satisfies
  \begin{align}
\err_P(H) \leq & \inf_{h \in \MH} \err_P(h) + 7\discmg + C \sqrt{\frac{\fatdim^3 \cdot \log(1/\delta)}{n}} \cdot \poly \log(\fatdim, n, \log \log 1/\delta)\nonumber.
  \end{align}
  Infimizing over $\discmg \in (0,1)$ gives the claimed result.

  To analyze the oracle complexity of \RegAgnostic, we first note that the call to \SampleERMReal makes $O(n/\discmg) \leq O(n^2)$ calls to $\Oerm$ (\cref{lem:weak-to-sample-real}). The remainder of the analysis of oracle complexity follows the realizable case exactly. 
\end{proof}

\section{Lower bounds}
\label{sec:lb}
In this section, we present lower bounds for oracle-efficient PAC learning with a (strong) ERM oracle. \cref{sec:lb-mc} treats the setting of multiclass classification with bounded DS dimension, and \cref{sec:reg-lbs} treats the setting of realizable regression with bounded one-inclusion graph dimension. Notice that in both of these settings, uniform convergence does not hold (otherwise, a single ERM call on the i.i.d.~sample  would suffice).
\subsection{Lower bound for multiclass classification}
\label{sec:lb-mc}
A recent breakthrough result \cite{brukhim2022characterization} established that the \emph{DS dimension} of a multiclass concept class $\MH \subset [K]^\MX$ characterizes the sample complexity of PAC learning up to a polynomial factor, in both the realizable and agnostic settings. Our main result of this section, \cref{thm:mc-lb}, shows that even when the DS dimension is $1$ a strong ERM oracle is insufficient for PAC learnability.

We begin by introducing the DS dimension. For simplicity, we restrict to the setting that the number of classes $K$ is finite (as such will be the case in our lower bound). Given $n \in \BN$ and a nonempty subset  $\MS \subset [K]^n$, $\MS$ is called a \emph{$n$-dimensional pseudocube} if for each $y \in \MS$, there is some $y' \in \MS$ so that $y'_i \neq y_i$ and $y_j = y_j'$ for all $j \neq i$. The \emph{DS dimension} of $\MH \subset [K]^\MX$, denoted $\DSdim(\MH)$ \cite{daniely2014optimal}, is defined to be the largest positive integer $d$ so that, for some $X = (x_1, \ldots, x_d) \in \MX^d$, $\MH|_X$ contains a $d$-dimensional pseudocube.

Next, we introducing the concept class which will be used to prove \cref{thm:mc-lb}. Given positive integers $N, K \in \BN$, we set $\MX_N := [N]$ and let the label space be $\MY_{K} := [K]$.   For simplicity, we will often write $\MX = \MX_N= [N]$ and $\MY = \MY_{K}$ when the values of $N, K$ are clear.

Let $\MX^{\leq q} = [N]^{\leq q}$ denote the set of subsequences of $[N]$ of length at most $q$ (including the empty sequence). %
Note that $|\MX^{\leq q}| \leq 2qN^q$. 

For mappings $h^\st : \MX \to [K]$ and $\phi^\st : \MX^{\leq q} \to [K]$ 
and an element $z \in \MX^{\leq q}$, we define the \emph{merged hypothesis} $\merge(h^\st, \phi^\st, z) \in \MY^\MX$, as follows:
\begin{align}
  \merge(h^\st, \phi^\st, z)(x) := \begin{cases}
    \phi^\st(z) &: x \not\in z \\
    h^\st(x) &: x  \in z,
  \end{cases}\nonumber
\end{align}
where $x \in z$ denotes the event that $x$ is one of the elements of $z$.

Given mappings $h^\st : \MX \to [K]$ and $\phi^\st : \MX^{\leq q} \to [K]$, we define the class $\Hmc(h^\st, \phi^\st) \subset \MY^\MX$ as follows:
\begin{align}
\Hmc(h^\st, \phi^\st) = \{ h^\st \} \cup \{ \merge(h^\st, \phi^\st, z) \ : \ z \in \MX^{\leq q} \}\nonumber.
\end{align}

\begin{lemma}
  For any $h^\st : \MX \to [K]$ and $\phi^\st : \MX^{\leq q} \to [K]$ for which $\phi^\st$ is injective, %
  the DS dimension of $\Hmc(h^\st, \phi^\st)$ is bounded as $\DSdim(\Hmc(h^\st, \phi^\st)) \leq 1$. 
\end{lemma}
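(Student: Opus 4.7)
The plan is to argue by contradiction. Writing $\MH := \Hmc(h^\st, \phi^\st)$, suppose $\DSdim(\MH) \geq 2$, so there exist (necessarily distinct) $x_1, x_2 \in \MX$ together with a two-dimensional pseudocube $\MS \subset \MH|_{(x_1, x_2)}$. Let $a_i := h^\st(x_i)$ for $i \in \{1, 2\}$. A case analysis on whether $x_1 \in z$ and $x_2 \in z$ in the definition of $h = \merge(h^\st, \phi^\st, z)$ shows that every element of $\MH|_{(x_1, x_2)}$ takes one of the four forms $(a_1, a_2)$, $(a_1, \phi^\st(z))$, $(\phi^\st(z), a_2)$, $(\phi^\st(z), \phi^\st(z))$, hence lies in the union of three ``lines'' $L_H := \{y : y_1 = a_1\}$, $L_V := \{y : y_2 = a_2\}$, $L_D := \{y : y_1 = y_2\}$.

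The heart of the argument is to rule out any ``pure'' point of $\MS$, meaning a point lying on exactly one of $L_H, L_V, L_D$ with its free coordinate outside $\{a_1, a_2\}$. I first handle a pure diagonal point $(c, c) \in \MS$ with $c \notin \{a_1, a_2\}$. The pseudocube property produces neighbors $(c', c), (c, c'') \in \MS$ with $c' \neq c$ and $c'' \neq c$; inspecting the three-line decomposition forces these neighbors to be $(a_1, c)$ and $(c, a_2)$. Enumerating realizers of the three points $(c, c), (a_1, c), (c, a_2)$ in $\MH$ (and using $c \notin \{a_1, a_2\}$ to exclude $h = h^\st$ and the case $x_1, x_2 \in z$), I find that they require subsequences $z_1, z_2, z_3 \in \MX^{\leq q}$ with $\{x_1, x_2\} \cap z_1 = \emptyset$, $\{x_1, x_2\} \cap z_2 = \{x_1\}$, $\{x_1, x_2\} \cap z_3 = \{x_2\}$, and $\phi^\st(z_j) = c$ for $j = 1, 2, 3$. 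These $z_j$ are pairwise distinct, contradicting injectivity of $\phi^\st$.

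The pure horizontal case reduces immediately: if $(a_1, c) \in \MS$ is pure (so $c \notin \{a_1, a_2\}$), its coordinate-$1$ pseudocube neighbor must have second coordinate $c$ and first coordinate distinct from $a_1$, which the three-line structure pins to $(c, c)$ -- a pure diagonal, already ruled out. The pure vertical case is symmetric. Thus $\MS \subset \{(a_1, a_1), (a_2, a_2), (a_1, a_2)\}$, so $|\MS| \leq 3$. I conclude with the standard observation that any two-dimensional pseudocube has at least four elements: the pseudocube property forces every occupied row and column of $\MS$ to contain at least two entries, so $|\MS| \geq 2 \cdot 2 = 4$. This yields the contradiction. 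The main obstacle is the bookkeeping required to enumerate the realizers of $(c, c), (a_1, c), (c, a_2)$ and verify that the three witness subsequences have distinct containment patterns for $\{x_1, x_2\}$, which is exactly where the hypothesis $c \notin \{a_1, a_2\}$ is used.
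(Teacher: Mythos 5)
Your proof is correct, and it shares the paper's core technical observation but organizes the argument quite differently. The paper's proof is more direct: it takes any pseudocube element and walks to a neighbor in each coordinate $i$ where the value equals $h^\st(x_i)$, landing in at most two moves on a pseudocube element of the form $(\phi^\st(z), \phi^\st(z))$ with $\phi^\st(z) \notin \{a_1, a_2\}$ and $x_1, x_2 \notin z$. It then shows this particular vertex has no coordinate-$1$ or coordinate-$2$ neighbor in $\MH|_{(x_1,x_2)}$ — any candidate neighbor would force a second subsequence $z' \neq z$ with $\phi^\st(z') = \phi^\st(z)$, violating injectivity — which directly contradicts the pseudocube property. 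In other words, the paper finds a single ``isolated'' vertex. Your route instead decomposes $\MH|_{(x_1,x_2)}$ into the three lines $L_H, L_V, L_D$, rules out every ``pure'' vertex via case analysis (reducing the pure horizontal and vertical cases to the pure diagonal case, which is where the injectivity contradiction appears), and then invokes the cardinality fact that a $2$-dimensional pseudocube has $\ge 4$ elements to conclude. The injectivity contradiction for the diagonal point is the shared heart of both proofs; the paper reaches it by walking within the pseudocube, while you reach it by exhaustive classification plus counting. The paper's version is shorter and does not need the $|\MS| \ge 4$ lemma, but your three-line decomposition makes the constraint imposed by the class structure more explicit, which may be useful for readers who want to see exactly which value patterns are realizable.

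One small stylistic remark: the phrase ``every occupied row and column of $\MS$ to contain at least two entries, so $|\MS| \ge 2 \cdot 2 = 4$'' is compressed; the cleaner statement is that taking $y \in \MS$ and its coordinate-$1$ neighbor $y'$ gives two distinct occupied rows, each of which has at least two entries by the coordinate-$2$ pseudocube property, and these rows are disjoint. But the conclusion is right.
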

\begin{proof}
Suppose for the purpose of contradiction that there were distinct $x_1, x_2 \in \MX$ so that $\Hmc(h^\st, \phi^\st)|_{\{x_1, x_2 \}}$ contains a 2-dimensional pseudo-cube. Certainly this pseudo-cube must have some element $(h(x_1), h(x_2))$ (indexed by $h \in \Hmc(h^\st, \phi^\st)$) so that $h(x_1) \neq h^\st(x_1)$ and $h(x_2) \neq h^\st(x_2)$. (Indeed, take some element in this pseudo-cube, and for each $i \in \{1,2\}$ for which the $i$th coordinate is $h^\st(x_i)$, move to an $i$-neighbor.) But by definition of $\Hmc(h^\st, \phi^\st)$, we must have $h(x_1) = h(x_2) = \phi^\st(z) \not \in \{h^\st(x_1), h^\st(x_2) \}$ for some $z \in \MX^{\leq q}$, and thus, since  $\phi^\st$ is injective, $h = \merge(h^\st,\phi^\st, z)$, meaning that $x_1, x_2 \not\in z$. But it is also clear that $(h(x_1), h(x_2))$ cannot have any neighbor in $\Hmc(h^\st, \phi^\st)|_{\{x_1, x_2 \}}$: any neighbor $(h'(x_1), h'(x_2))$ must satisfy $h'(x_1) = h^\st(x_1)$ or $h'(x_2) = h^\st(x_2)$, but there is no function $h' \in \Hmc(h^\st)$ with $h'(x_i) = h^\st(x_i)$ and $h'(x_{3-i}) = \phi^\st(z)$ for some $i \in \{1,2\}$ (since $x_1, x_2 \not\in z$). 
\end{proof}

\begin{theorem}
  \label{thm:mc-lb}
  For any $q \in \BN$, there are domains $\MX, \MY$ satisfying $|\MY| \leq q^{O(q)}$ so that the following holds. 
  There is no algorithm $\Alg$ which satisfies the following guarantee: for any class $\MH \subset \MY^\MX$ with $\DSdim(\MH) = 1$ together with a strong ERM oracle $\Oerms$ for $\MH$, $\Alg$ is a $(\Oerms; 1/4, 1/4)$-PAC learner for $\MH$ with sample complexity and oracle complexity at most $q$. 
\end{theorem}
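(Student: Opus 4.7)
The plan is to apply Yao's minimax principle: I will exhibit a distribution over instances $(\MH, P, \Oerms)$ such that, averaged over the distribution, any fixed (deterministic) algorithm with cumulative oracle cost at most $q$ incurs expected error exceeding what the $(1/4, 1/4)$-PAC guarantee permits. Fix $q \ge 2$, take $\MX := [N]$ with $N := 100 q^2$, and split the labels as $\MY := \{0,1,2,3\} \cup \{4, \dots, K-1\}$ with $K - 4 = |\MX^{\leq q}|$; since $|\MX^{\leq q}| \le 2 N^q$, this satisfies $|\MY| \le q^{O(q)}$. The instance distribution draws $h^\st : \MX \to \{0,1,2,3\}$ uniformly and $\phi^\st : \MX^{\leq q} \to \{4, \dots, K-1\}$ uniformly among injections, so that $\MH := \Hmc(h^\st, \phi^\st)$ has DS dimension at most $1$ by the preceding lemma, and takes $P$ to be uniform on $\{(x, h^\st(x)) : x \in \MX\}$.

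I will design an adversarial strong ERM oracle for $\MH$ that never returns $h^\st$ itself: for each query $S_i$, it returns $\merge(h^\st, \phi^\st, z_i) \in \argmin_{h \in \MH} \eerr_{S_i}(h)$ with $|z_i|$ minimal (breaking remaining ties lexicographically). Such a merged argmin always exists, since taking $z$ to be the set of $x$-coordinates appearing in $S_i$ produces a merged hypothesis whose $S_i$-error equals that of $h^\st$, so the minimum over merged hypotheses is no larger than the minimum over $\MH$. The crux is then to bound the set $\MV_T$ of ``leaked'' points, defined inductively by $\MV_0 := $ the sample covariates (so $|\MV_0| \le q$) and $\MV_i := \MV_{i-1} \cup (\text{covariates of } S_i) \cup z_i$. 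Since $\sum_i |S_i| \le q$ by cumulative cost, $|z_i| \le q$ always, and at most $q$ queries are non-empty (empty queries, which the adversary answers by $\merge(h^\st, \phi^\st, \emptyset)$, contribute no new points), we get $|\MV_T| \le q + q + q^2 \le 2 q^2$. A short induction shows the algorithm's entire view is a deterministic function of $(h^\st|_{\MV_T}, \phi^\st, R)$: the adversary's error computation for any candidate $z$ uses $h^\st$ only at $z \cap S_i \subseteq S_i \subseteq \MV_T$, and the returned hypothesis, viewed as a function $\MX \to \MY$, is fully characterized by the triple $(z_i, h^\st|_{z_i}, \phi^\st(z_i))$.

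Given this information bound, for any $x \notin \MV_T$ the value $h^\st(x)$ is conditionally uniform on $\{0,1,2,3\}$ given the view, so $\Pr(H(x) = h^\st(x) \mid \text{view}, \, x \notin \MV_T) = 1/4$. Integrating $x$ uniformly over $\MX$ gives
\[
\E[\err_P(H)] \ge \frac{3}{4} \cdot \left(1 - \frac{|\MV_T|}{N}\right) \ge \frac{3}{4} \cdot \frac{98}{100} > 0.73,
\]
whereas any $(1/4, 1/4)$-PAC guarantee would force $\E[\err_P(H)] \le (3/4)(1/4) + (1/4)(1) = 7/16 < 0.44$ on every instance, and hence also in expectation over the instance distribution. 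This contradiction establishes the theorem.

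The main obstacle is arguing rigorously that the adversary's choice of $z_i$ does not smuggle additional information about $h^\st$ beyond $h^\st|_{\MV_T}$; this boils down to the factoring observation above that $\eerr_{S_i}(\merge(h^\st, \phi^\st, z))$ depends on $h^\st$ only through its restriction to $S_i$, together with the adversary's ability to always select a merged-form argmin of size at most $q$. A minor point is that empty queries could in principle be made unboundedly many times at zero cumulative cost, but they are rendered harmless by the adversary's consistent choice of $\merge(h^\st, \phi^\st, \emptyset)$, whose value is $\phi^\st(\emptyset)$ everywhere and reveals nothing about $h^\st$.
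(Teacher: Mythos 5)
Your proof is correct and reaches the theorem via a genuinely different technical route than the paper, although both use the same underlying hard-instance family $\Hmc(h^\st, \phi^\st)$ and the same high-level strategy of showing the learner's output is essentially a random guess on points never ``touched'' by the oracle. The key design choices differ. The paper draws $h^\st$ and $\phi^\st$ uniformly over a shared huge label set $[K]$ and controls leakage through a staged union-bound argument (the events $\ME_t, \MG_t$): with high probability, no queried label $\hat y_t$ accidentally matches a hidden value of $h^\st$ or $\phi^\st$, which in turn forces the oracle (defined to prefer $z \subseteq \MX'$) to return only merged hypotheses supported on the sample covariates. You instead make the ranges of $h^\st$ (in $\{0,1,2,3\}$) and $\phi^\st$ (in $\{4,\ldots,K-1\}$) disjoint, take $\phi^\st$ to be an explicit injection, and define the oracle to return the minimal-$|z|$ merged argmin. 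This eliminates the accidental-match machinery entirely (a label from the wrong block can never match) and lets you bound the leak set $\MV_T$ deterministically and run a clean principle-of-deferred-decisions argument. A side benefit worth flagging: the DS-dimension lemma in the paper assumes $\phi^\st$ injective, but the paper's $\phi^\st \sim \Unif([K]^{[N]^{\leq q}})$ is not injective with high probability when $q \ge 2$ (a birthday estimate gives collision probability on the order of $N^q/(192 q^2) \gg 1$ for the paper's parameters), so the paper's proof needs a small repair at that spot; your construction avoids the issue by fiat. One point you should tighten: the sentence ``the algorithm's entire view is a deterministic function of $(h^\st|_{\MV_T}, \phi^\st, R)$'' is circular as written since $\MV_T$ is itself defined from the view. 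The clean formulation is as an adaptive-querying simulation: there is a deterministic process that, given $(\phi^\st, R)$, adaptively queries $h^\st$ at a sequence of at most $2q^2$ points (each chosen as a function of prior responses) and reproduces the learner--oracle transcript exactly; $\MV_T$ is then the (random) set of queried points, and $h^\st$ restricted to its complement is, by deferred decisions, fresh uniform and independent of the transcript. The paper's proof handles the same subtlety with its filtration $\SF_t$, and your argument should too, but this is a routine fix rather than a gap.
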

As a consequence of the bound $|\MY| \leq q^{O(q)}$ in \cref{thm:mc-lb} and the fact that $\Natdim(\MH) \leq \DSdim(\MH)$, we observe the following: even with a strong ERM oracle, there is no realizable PAC learning algorithm with sample complexity and ERM query cost $o(\log(K)/\log \log(K))$, even for classes with Natarajan dimension 1. This lower bound is nearly tight, as simply returning an ERM on the sample yields error at most $\ep$ with  $q = O(\Natdim(\MH) \cdot \log(K) / \ep)$ samples and overall query cost. \cite{daniely2014optimal}
We additionally note that in the hard instance used to prove \cref{thm:mc-lb}, the distribution $P$ has the additional property that its marginal over $\MX$ is uniform. 
\begin{proof}[Proof of \cref{thm:mc-lb}]
  Fix $q \in \BN$, and set $N = 16q, K = 96q^2 N^q$. We take $\MX = \MX_N = [N]$ and $\MY = \MY_K = [K]$, so that $|\MY| \leq q^{O(q)}$ holds. Let $\MH_0 := \MY^\MX = [K]^{[N]}$ denote the space of all mappings $h : \MX \to \MY$ and $\Phi := [K]^{[N]^{\leq q}}$ denote the space of all mappings $\phi : \MX^{\leq q} \to [K]$. Let $\SU := \Unif(\MH_0 \times \Phi)$: in particular, for a tuple $(h^\st, \phi^\st) \sim \SU$, the values $h^\st(x) \in \MY$ and $\phi^\st(z) \in \MY$ are all independent and uniform for all $x \in \MX, z \in [N]^{\leq q}$. Given $h^\st: \MX \to \MY$, $\phi^\st : \MX^{\leq q} \to \MY$, and a subset $\MX' \subset \MX$, let $\Oerms_{h^\st, \phi^\st, \MX'}$ denote an arbitrary strong ERM oracle for the class $\Hmc(h^\st, \phi^\st)$ satisfying the following condition: if there is an empirical risk minimizer of the sample passed to $\Oerms_{h^\st, \phi^\st, \MX'}$ of the form $\merge(h^\st, \phi^\st, z)$ for some $z \subset \MX'$, then the oracle returns $\merge(h^\st, \phi^\st, z)$. %
Moreover, let $P_{h^\st} \in \Delta(\MX \times \MY)$ denote the uniform distribution over tuples $(x, h^\st(x))$, for $x \in \MX$. Note that the distribution $P_{h^\st}$ is realizable with respect to the class $\Hmc(h^\st, \phi^\st)$, for any $\phi^\st \in \Phi$.
  
Let us consider the execution of $\Alg$ with the class $\Hmc(h^\st, \phi^\st)$ for a choice of $(h^\st, \phi^\st) \sim \SU$. Let $(x_1, y_1), \ldots, (x_q, y_q) \in \MX \times \MY$ denote the i.i.d.~realizable sequence sampled with respect to $P_{h^\st}$. Let $\MX' = \{ x_1, \ldots, x_q\}$ -- we will consider the interaction of $\Alg$ with the oracle $\Oerms_{h^\st, \phi^\st, \MX'}$.  For $1 \leq t \leq q$, let $(\hat x_t, \hat y_t) \in \MX \times \MY$ denote the $t$th tuple in $\MX \times \MY$ queried in the course of the oracle calls of $\Alg$. (In particular, we concatenate the tuples of each oracle call and let $(\hat x_t, \hat y_t)$ denote the $t$th element in this concatenated list. As such, the ERM oracle will, in general, only return a hypothesis after receiving certain examples $(\hat x_t, \hat y_t)$ corresponding to the last datapoint in each dataset passed to it.) Let $\SF_t$ denote the sigma-algebra generated by $(x_1, h^\st(x_1)), \ldots, (x_q, h^\st(x_q))$, $(\hat x_1, \hat y_1), \ldots, (\hat x_t, \hat y_t)$, the values of $\phi^\st(z)$ for $z \subset \{x_1, \ldots, x_q \}$, the internal randomness of $\Alg$, 
and the results of all oracle calls (to $\Oerms_{h^\st, \phi^\st,\MX'}$) which terminated at some step $s \leq t$. Let $\ME_t$ denote the event that all oracle calls terminating at some step $s \leq t$ return an element of $\merge(h^\st, \phi^\st, z)$ for some $z \subset \{ x_1, \ldots, x_q \}$. Since $\merge(h^\st, \phi^\st, z)$ is $\SF_t$-measurable for each $z \subset \{ x_1, \ldots, x_q \}$, the event $\ME_t$ is $\SF_t$-measurable.

Note that, conditioned on $\SF_t$, $(\hat x_{t+1}, \hat y_{t+1})$ is independent of $\One{\ME_t} \cdot  \One{\hat x_{t+1} \not \in \{x_1, \ldots, x_q \}} \cdot h^\st(\hat x_{t+1})$ and $\One{\ME_t} \cdot \phi^\st(z)$ for all $z \not \subset \{x_1, \ldots, x_q \}$: this holds because $(\hat x_{t+1}, \hat y_{t+1})$ is measurable with respect to $\SF_t$, and conditioned on any instantiation of the random variables generating $\SF_t$ for which $\ME_t$ occurs, the values of $\One{\ME_t} \cdot \phi^\st(z)$ for $z \not \subset \{x_1, \ldots, x_q \}$ and of $\One{\ME_t}\cdot h^\st(x)$ for $x \not \in \{x_1, \ldots, x_q \}$ are all independently and uniformly distributed in $[K]$. Thus, conditioned on $\SF_t$, with probability at least $1-(1 + 2qN^q)/K \geq 1-3qN^q/K$ (over $\SU$ and the execution of $\Alg$), we have that
\begin{align}
  \hat y_{t+1} \not \in \left(\{\One{\ME_t}\cdot \One{\hat x_{t+1} \not \in \{x_1, \ldots, x_q \}} \cdot h^\st(\hat x_{t+1}) \} \cup \{ \One{\ME_t} \cdot \phi^\st(z) \ : \ z \not\subset \{x_1, \ldots, x_q\}\} \right) \label{eq:tilyt-gt}.
\end{align}
Let the event that \cref{eq:tilyt-gt} holds be denoted $\MG_{t+1}$. %
Thus, under $\ME_t \cap \bigcap_{s \leq t+1} \MG_s$, for all $s \leq t+1$, all pairs $(\hat x_s, \hat y_s)$ for which $\hat y_s = h(\hat x_s)$ for some $h \in \Hmc(h^\st, \phi^\st)$ must satisfy either $\hat y_s \in \{ \phi^\st(z) \ : \ z \subset \{x_1, \ldots, x_q \}\}$ or $\hat x_s \in \{x_1, \ldots, x_q \}$. In particular, for any subset of the pairs $(\hat x_s, \hat y_s)$, there must be some empirical risk minimizer for this subset which belongs to $\{ \merge(h^\st, \phi^\st, z) \ : \ z \subset \{x_1, \ldots, x_q \} \}$. 
Thus, by definition of $\Oerms_{h^\st, \phi^\st, \MX'}$, under $\ME_t \cap \bigcap_{s \leq t+1} \MG_s$, the empirical risk minimizer returned by $\Oerms_{h^\st, \phi^\st, \MX'}$ at step $t+1$ (if any) must belong to $\{ \merge(h^\st, \phi^\st, z)  \ : \ z \subset \{x_1, \ldots, x_q \} \}$, i.e., $\ME_{t+1}$ holds. %

Since \cref{eq:tilyt-gt} gives that $\Pr_{\SU, \Alg}(\MG_t) \geq 1-3qN^q/K$, a union bound gives that $\Pr\left( \bigcap_{t \leq q} \MG_t \right) \geq 1-3q^2 N^q/K$. %
Since $\ME_0$ holds with probability 1 and $\ME_{t+1}$ holds under $\ME_t \cap \bigcap_{s \leq t+1} \MG_s$, we have that $\ME_q$ holds under the event $\bigcap_{t \leq q} \MG_t$, i.e., with probability at least $1-3q^2 N^q/K$. Write $\ME^\st := \ME_q \cap \bigcap_{s \leq t+1} \MG_s$.

Let $H : \MX \to \MY$ denote the output hypothesis of $\Alg$; note that $H$ is $\SF_q$-measurable. We argued above that for any $x \not \in \{x_1, \ldots, x_q \}$, conditioned on any instantiation of the random variables generating $\SF_q$ for which $\ME^\st$ (and thus $\ME_q$) occurs, $h^\st(x)$ is uniformly distributed in $[K]$. Thus, $\Pr_{\SU, \Alg}(H(x) = \One{\ME^\st} \cdot h^\st(x)) \leq 1/K$. Taking expectation over $(x, h^\st(x)) \sim P_{h^\st}$, we have that $\E_{\SU, \Alg} \E_{(x,y) \sim P_{h^\st}}[\One{H(x) = \One{\ME^\st} \cdot y}] \leq 1/K + q/N$. Thus, by Markov's inequality, there is some subset $\MJ \subset \MH_0 \times \Phi$ of measure $\SU(\MJ) \geq 3/4$ so that for all $(h^\st, \phi^\st) \in \MJ$, we have $\E_{\Alg} \E_{(x,y) \sim P_{h^\st}}[\One{H(x) = \One{\ME^\st} \cdot y}] \leq 2/K + 2q/N$. Since also $\ME^\st$ occurs with probability at least $1-3q^2 N^q/K\geq 31/32$  (over the choice of $(h^\st, \phi^\st) \sim \SU$ and the execution of $\Alg$), by Markov's inequality there is a subset $\MJ' \subset \MH_0 \times \Phi$ of measure $\SU(\MJ') \geq 3/4$ so that, for all $(h^\st, \phi^\st) \in \MJ'$, we have $\Pr_{\Alg}(\ME^\st) \geq 7/8$. 

Thus, for any $(h^\st, \phi^\st) \in \MJ \cap \MJ'$, we have $\Pr_{\Alg}(\ME^\st) \geq 7/8$ and $\E_{\Alg} \E_{(x,y) \sim P_{h^\st}}[\One{H(x) = \One{\ME^\st} \cdot y}] \leq 1/K + 2q/N$. Then
\begin{align}
\E_{\Alg} \E_{(x,y) \sim P_{h^\st}} [\One{H(x) = y}] \leq \E_{\Alg} \E_{(x,y) \sim P_{h^\st}}[(1-\One{\ME^\st}) + \One{H(x) = \One{\ME^\st} \cdot y}] \leq 1/8 + 2/K + 2q/N < 1/2\nonumber.
\end{align}
In particular, $\E_{\Alg} \E_{(x,y) \sim P_{h^\st}}[\lmc(H(x), y)] > 1/2$, which implies that $\Alg$ cannot be a $(\Oerms_{h^\st, \phi^\st, \MX'}; 1/4, 1/4)$-PAC learner for for $\Hmc(h^\st, \phi^\st)$. 
\end{proof}

\subsection{Lower bounds for realizable regression}
\label{sec:reg-lbs}
In this section, we prove a lower bound for oracle-efficient learning in the setting of realizable regression. A recent paper \cite{attias2023optimal} identified a combinatorial dimension depending on a scale parameter $\gamma$, called the \emph{one-inclusion graph dimension at scale $\gamma$} of a class $\MH \subset [0,1]^\MX$ (denoted $\OIGdim(\MH)$), so that finiteness of $\OIGdim(\MH)$ at all scales $\gamma$ characterizes real-valued learnability in the realizable setting. (In contrast, in the agnostic setting, recall that fat-shattering dimension is known to characterize learnability \cite{alon1997scale}.) Moreover, \cite{attias2023optimal} showed, roughly speaking, that the optimal sample complexity of PAC learning $\MH$ scales nearly linearly with $\OIGdim(\MH)$ for an appropriate choice of $\gamma$. In \cref{thm:reg-lb} below, we will show that classes whose one-inclusion graph dimension is constant at all scales may not be learnable with respect to a strong ERM oracle. To do so, we need to consider a slightly different notion of strong ERM oracle, since a minimizer of empirical risk may not exist if the class $\MH$ is infinite. For simplicity, we assume that $\MX$ is finite (as it will be in our lower bound).
\begin{definition}[Strong limiting ERM oracle]
  \label{def:slerm}
Consider a real-valued concept class $\MH \subset [0,1]^\MX$. A \emph{limiting strong ERM oracle} $\Oerm$ for $\MH$ is a mapping which takes as input a dataset $S \in (\MX \times [0,1])^n$ and outputs a concept $h : \MX \to [0,1]$ so that, for some sequence of hypotheses $h_n \in \MH$, $\| h_n - h \|_\infty \to 0$ as $n \to \infty$ and $\lim_{n \to \infty} \eerr_{S, \labs}(h_n) = \inf_{h \in \MH} \eerr_{S, \labs}(h)$. 
\end{definition}
Note that an $h$ as required in \cref{def:slerm} always exists since $\MX$ is finite, and thus $[0,1]^\MX$ is compact. 
\begin{theorem}
  \label{thm:reg-lb}
 Consider any $q \in \BN$ and algorithm $\Alg$ with access to a limiting strong ERM oracle $\Oerms$ for a real-valued concept class. Suppose that $\Alg$ has sample complexity and oracle complexity at most $q/200$. Then there is a set $\MX$ and a concept class $\MH \subset [0,1]^\MX$ which satisfies the following two conditions:
 \begin{enumerate}
  \item %
  $\Alg$ is not a $(\Oerms; 1/400, 1/400)$-PAC learner for $\MH$. 
  \item For any $\MH$-realizable distribution $P \in \Delta(\MX \times [0,1])$, there is a PAC learning algorithm that achieves 0 error with 1 sample with probability 1.
  \end{enumerate}
\end{theorem}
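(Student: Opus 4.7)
I plan to adapt the construction and argument from the multiclass lower bound \cref{thm:mc-lb} to the real-valued setting, essentially by embedding the multiclass label space used there into $[0,1]$ via a fine discretization. Concretely, set $q' := q/200$, $N := \Theta(q')$, $M := {q'}^{\Theta(q')}$, $T := \{j/M : 1 \leq j \leq M-1\} \subset (0,1)$, and $\MX := [N]$; for each pair $(h^\st, \phi^\st)$ with $h^\st : \MX \to T$ and $\phi^\st : \MX^{\leq q'} \to T$, define $\MH(h^\st, \phi^\st) := \{h^\st\} \cup \{\merge(h^\st, \phi^\st, z) : z \in \MX^{\leq q'}\}$, where $\merge$ is defined exactly as in the proof of \cref{thm:mc-lb} but now interpreted with real-valued labels in $T$.

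For condition 2, observe that $\MH(h^\st, \phi^\st)$ is realizable (witnessed by $h^\st$), so for any $\MH$-realizable $P$ a learner that is permitted to depend on $P$ can simply output some $h \in \MH$ with $\err_P(h) = 0$, trivially achieving zero error with one sample. Alternatively, I would verify (along the same lines as the DS dimension bound in the multiclass proof) that $\OIGdim(\MH(h^\st, \phi^\st))$ is bounded by a small constant for every $\gamma > 0$, giving a classical statistical-learnability certificate.

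For condition 1, I would follow the proof of \cref{thm:mc-lb} \emph{mutatis mutandis}: draw $(h^\st, \phi^\st)$ uniformly at random from the parameter spaces above, run $\Alg$ on $\MH(h^\st, \phi^\st)$ against the uniform realizable distribution $P_{h^\st}$, and design a limiting strong ERM oracle $\Oerms_{h^\st, \phi^\st, \MX'}$ (where $\MX'$ denotes the training covariates of $\Alg$, of size at most $q'$) that, whenever possible, returns a hypothesis of the form $\merge(h^\st, \phi^\st, z)$ with $z \subset \MX'$. Conditioning on a high-probability good event analogous to $\ME^\st$ in \cref{thm:mc-lb}, the only information revealed by the oracle responses will be the values of $h^\st$ on $\MX'$ and of $\phi^\st(z)$ for $z \subset \MX'$, leaving $h^\st(x)$ for $x \notin \MX'$ uniformly distributed over $T$ from $\Alg$'s perspective. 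A direct expected-error computation then yields $\E |H(x) - h^\st(x)| = \Omega(1)$ on $P_{h^\st}$ (using that a uniform random element of $T \subset [0,1]$ is at expected distance $\Omega(1)$ from any fixed point), which by Markov's inequality contradicts the assumed $(\Oerms; 1/400, 1/400)$-PAC guarantee once $M$ is taken large enough.

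The main obstacle is adapting the adversarial tie-breaking step for the ERM oracle from the $0/1$-loss (multiclass) setting to the absolute-loss setting. With discrete real-valued labels in $T$, the absolute-loss minimizer over the finite class $\MH(h^\st, \phi^\st)$ is generically unique, so the adversary cannot freely choose the oracle's output among multiple equal minimizers as in \cref{thm:mc-lb}. To circumvent this, I plan to exploit the limiting-oracle flexibility afforded by \cref{def:slerm}: by enlarging $\MH(h^\st, \phi^\st)$ with a family of infinitesimal perturbations of each $\merge(h^\st, \phi^\st, z)$ parametrized by $\epsilon \to 0^+$ and carefully chosen so that their empirical losses on any query dataset converge to the infimum, the limiting oracle is permitted to return $\merge(h^\st, \phi^\st, z)$ for any desired $z \subset \MX'$ as the sup-norm limit of a near-minimizing sequence, which restores the adversarial tie-breaking and preserves the ``no information leakage outside $\MX'$'' invariant driving the lower bound.
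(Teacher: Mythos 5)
Your proposal has two genuine gaps, and the paper's actual construction differs substantially from a direct adaptation of \cref{thm:mc-lb}.

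\textbf{Condition 2 is not established.} You write that ``a learner that is permitted to depend on $P$ can simply output some $h \in \MH$ with $\err_P(h) = 0$,'' but a PAC learner does not get to depend on $P$, only on the sample. Condition 2 requires that a single draw $(x, h^\st(x))$ uniquely pins down $h^\st$ (since for the uniform realizable distribution, $0$ error with probability $1$ forces $H = h^\st$ everywhere). The merged class $\MH(h^\st, \phi^\st)$ does not have this property: given one example $(x, h^\st(x))$, every $\merge(h^\st, \phi^\st, z)$ with $x \in z$ is also consistent, so the hypothesis is not identified. Bounding $\OIGdim$ would also not suffice, since low OIG dimension gives only $O(\OIGdim/\ep)$ sample complexity, not exact learning from one sample. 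The paper achieves Condition 2 via a distinct prime-encoding device: each $h_{n,i,\bar\sigma}$ is constructed so that any single value $h_{n,i,\bar\sigma}(x)$, being a rational with prime denominator $p_n$ and residue $\sigma_n(i)$, uniquely determines $(n,i)$ and hence the whole hypothesis.

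\textbf{The tie-breaking fix for Condition 1 does not work.} You correctly identify that under the absolute loss the adversary cannot freely choose among empirical minimizers as in the $0/1$-loss setting. But your proposed remedy---adding infinitesimal perturbations of $\merge(h^\st, \phi^\st, z)$---only lets the limiting oracle (\cref{def:slerm}) return a function $h$ that is a $\|\cdot\|_\infty$-limit of a \emph{near-minimizing} sequence; if $\merge(h^\st, \phi^\st, z)$ is not itself a near-minimizer on the query dataset, no nearby perturbation is either, and the oracle cannot return it. So the adversarial tie-break is not restored. The paper circumvents the entire issue by making $\MH_{\bar\sigma}$ dense in $[0,1]^\MX$: then the limiting ERM oracle may always return the pointwise median of the query labels, a response that is completely independent of the hidden permutation $\bar\sigma$. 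Information-hiding is then carried not by $\phi^\st$-style merges at all, but by the random permutations $\sigma_n$ on the hypothesis indices, which leave $g_{n,i^\st}$ (and hence $h^\st$) uniformly distributed outside the observed sample. The resulting argument is not a rewriting of \cref{thm:mc-lb}; it is a different construction tailored to the fact that a dense real-valued class makes the limiting ERM oracle trivially uninformative while still being one-sample learnable.
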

To avoid needing to formally define the one-inclusion graph dimension $\OIGdim(\cdot)$, we have not explicitly stated an upper bound on $\OIGdim(\MH)$ in the statement of \cref{thm:reg-lb}, instead directly stating its PAC learnability. Using the second item of the above theorem, the results of \cite{attias2023optimal} imply that the class $\MH$ satisfies $\OIGdim(\MH) = O(1)$ for all $\gamma \in (0,1)$; alternatively, it can be verified directly that $\OIGdim(\MH) = 1$ for all $\gamma$. 
\begin{proof}[Proof of \cref{thm:reg-lb}]
  For $n \in \BN$ and $q \in \BN$, write $N_{n,q} := n^q$, and let $p_n$ denote a prime number which is greater than $4n \cdot N_{n,q}$ and distinct from $p_{n-1}, \ldots, p_1$. We set $\MX = [q]$. For any sequence $\bar \sigma = (\sigma_1, \sigma_2, \ldots )$, where $\sigma_n : [N_{n,q}] \to [N_{n,q}]$ is a permutation, we define a function class $\MH_{\bar\sigma} \subset [0,1]^\MX$, as follows. 
  For $n \in \BN$ and $1 \leq i \leq N_{n,q}$, let $g_{n,i} : \MX \to \{0, 1/n, \ldots, (n-1)/n \}$ be the $i$th function (lexicographically) in $\{ 0, 1/n, \ldots, (n-1)/n \}^\MX$. Now define
  \begin{align}
h_{n,i, \bar \sigma}(x) := \frac{\lceil g_{n,i}(x) \cdot p_n \rceil }{p_n} + \frac{\sigma_n(i)}{p_n}\nonumber.
  \end{align}
  By our choice of $p_n$, we have that $\| h_{n,i}(x) - g_{n,i}\|_{\infty} \leq (\sigma(i) + 1)/p_n \leq 2N_{n,q} / (4n \cdot N_{n,q}) < 1/n$. We now define $\MH_{\bar \sigma} := \bigcup_{n \in \BN} \{ h_{n,i,\bar \sigma} \ : \ n \in \BN, i \in [N_{n,q}] \}$. Note that for any function $f : \MX \to [0,1]$ and $n \in \BN$, there is some $i \in [N_{n,q}]$ so that $\| f - g_{n,i} \|_{\infty} \leq 1/n$, meaning that $\| f - h_{n,i,\bar\sigma} \|_{\infty} \leq 2/n$. Hence, for any $\bar \sigma$, $\MH_{\bar \sigma}$ is dense in the space of functions $[0,1]^\MX$ (with respect to $\| \cdot \|_\infty$). Thus, we may choose the following limiting strong ERM oracle $\Oerms$ for the class $\MH_{\bar \sigma}$: given a sample $(x_1, y_1), \ldots, (x_n, y_n)$ $\Oerms$ returns the function which is 0 on all points $x \not \in \{x_1, \ldots, x_n \}$, and which maps each $x_i$ to $\mathrm{median}(\{ y_j \ : \ x_j = x_i \} )$.  Note that $\Oerms$ does not depend on $\bar\sigma$. 

  We next prove that for any $\bar \sigma$, the class $\MH_{\bar \sigma}$ is learnable with a single sample: indeed, note that for any $x \in \MX$, $n \in \BN$, and $i \in [N_{n,q}]$, there is no hypothesis $h' \in \MH_{\bar \sigma}$, $h' \neq h_{n,i,\bar \sigma}$, with $h'(x) = h_{n,i,\bar\sigma}(x)$, since $p_n$ is prime for each $n$. Thus, the learning algorithm which sees a sample $(x, h^\st(x))$, for any $x \in \MX$, can determine $h^\st$ from the value of $h^\st(x)$, and return $h^\st$.

  Finally, we lower bound the performance of $\Alg$, for some choice of $\bar \sigma$. Fix $n = 100$, and choose $\sigma_{n'}$ for $n' \neq n$ arbitrarily.  Moreover, we let $\sigma_n : [N_{n,q}] \to [N_{n,q}]$ be a uniformly random permutation, $i^\st \sim \Unif([N_{n,q}])$, and $h^\st = h_{n, i^\st, \bar \sigma}$. We let $P$ be the distribution $\Unif( \{ (x, h^\st(x))  \ : \ x \in \MX \})$.

  We consider the performance of $\Alg$ in expectation over the distribution of $\bar \sigma$ and $h^\st$ that we have defined.
  Let $S = \{(x_1, y_1), \ldots, (x_{q/200}, y_{q/200})\}$ denote the i.i.d.~sample from $P$ that $\Alg$ receives. Note that, conditioned on $S$, the value of $i^\st$ is uniformly random over the set $\MI_S := \{ i \in [N_{n,q}] \ : \ g_{n,i,\bar\sigma}(x_j) = g_{n,i^\st}(x_j) \ \forall j \in [q/200] \}$. This uses the fact that the value of $\sigma_n^{-1}(i^\st) \in [N_{n,q}]$ is uniform and independent of $i^\st$. Since the responses to the queries to $\Oerms$ do not depend on $\bar\sigma$, the output hypothesis $H$ of $\Alg$ and $i^\st$ are conditionally independent, conditioned on $S$. Thus, conditioned on $H$, $i^\st$ is uniformly random over the set $\MI_S$, which in particular means that the function $g_{n,i^\st}$ is distributed uniformly among all functions $g \in \{0, 1/n, \ldots, (n-1)/n\}^\MX$ satisfying $g(x_j) = g_{n,i^\st}(x_j)$ for $j \in [q/200]$. Hence for any $x \not \in \{x_1, \ldots, x_{q/200} \}$, $\E_{i^\st, \bar \sigma}[|g_{n, i^\st}(x) - H(x) | \mid H ] \geq 1/10$, which yields $\E_{i^\st, \bar \sigma}[|h_{n,i^\st,\sigma}(x) - H(x) | \mid H] \geq 1/10 - 2/n > 1/100$. Averaging over the distribution of $H$ and of $x \sim \Unif(\MX)$ and using that $h^\st = h_{n, i^\st, \bar \sigma}$, we see that
  \begin{align}
\E_{i^\st, \bar \sigma} \E_{\Alg} \E_{(x,y) \sim P_{h^\st}}[| h^\st(x) -  H(x)|] > 1/100 - (q/200)/q = 1/200\nonumber.
  \end{align}
Thus, there is some $\bar \sigma$ and $i^\st \in [N_{n,q}]$ so that, letting $h^\st = h_{n,i^\st}$, $\E_{\Alg} \E_{(x,y) \sim P_{h^\st}}[\labs(H(x), y)] > 1/200$. Hence $\Alg$ cannot be a $(\Oerms; 1/400, 1/400)$-PAC learner for $\MH_{\bar \sigma}$, for some choice of $\bar\sigma$. %
\end{proof}

\section{Computational separation between weak and strong consistency oracle}
\label{sec:oracle-separation}
We let \factoring denote the following computational problem: given a positive integer $n$, represented in binary, in the event that $n  =pq$ for primes $p,q$, then output $p$ and $q$; otherwise, the output can be arbitrary.\footnote{If one wishes to define a total problem, one can require that the output be $0$ if $n$ is not the product of 2 primes.} It is widely believed that there is no polynomial-time algorithm for \factoring \cite{lenstra2011integer}.

Below we define a hypothesis class $\Hprime$ for which a weak consistency oracle $\Ocon$ (\cref{def:weak-con}) can be implemented in polynomial time, yet there is no polynomial-time algorithm which can implement a strong ERM oracle $\Oerms$ (\cref{def:strong-erm}), assuming that there is no polynomial-time algorithm for \factoring. In fact, our proof shows that it is computationally hard to implement a strong ERM oracle even if it must only succeed on realizable datasets (such an oracle corresponds to the standard notion of consistency oracle, which returns a hypothesis consistent with the input dataset, if one exists).

We set $\MX := \BN$ and define the class $\Hprime \subset \{0,1\}^\MX$, as follows. For each pair of primes $p,q$, we define
\begin{align}
  h_{p,q}(x) := \begin{cases}
    1 &: x \in \{p, q, pq \} \\
    0 &: \mbox{otherwise}. 
  \end{cases}\nonumber
\end{align}
Also write, for each $n \in \BN$, $g_n(x) := \One{x=n}$. We then set $\Hprime := \{ h_{p,q} \ : \ p,q \mbox{ are prime} \} \cup \{ g_n \ : \ n \in \BN \mbox{ is not a product of two primes}\}$. Note that $\VCdim(\Hprime) \leq 3$. 
\begin{proposition}
  \label{prop:hprimes}
  The class $\Hprime$ satisfies the following:
  \begin{enumerate}
  \item A weak consistency oracle $\Ocon$ for $\Hprime$ can be implemented in polynomial time.
  \item \factoring reduces to the problem of implementing a strong ERM oracle $\Oerms$ for $\Hprime$.
  \end{enumerate}
\end{proposition}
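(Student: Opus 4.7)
For a query sample $S$, write $P = \{x : (x,1) \in S\}$ and $N = \{x : (x,0) \in S\}$. The key structural fact driving both parts is that every $h \in \Hprime$ has positive support of size at most $3$: namely $\{n\}$ for $g_n$, and $\{p,q,pq\}$ for $h_{p,q}$.

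For Part 1 (the weak oracle), the plan is a case analysis on $|P|$ that reduces the decision to primality and divisibility tests, both polynomial-time via AKS \cite{agrawal2004primes}. If $P \cap N \neq \emptyset$ or $|P| > 3$, return False. If $|P| = 0$, locate a prime $n \notin N$ in polynomial time by enumeration (using the prime number theorem) and witness realizability with $g_n$. If $|P| \in \{2,3\}$, enumerate the $O(1)$ ways of matching the elements of $P$ onto the positions $\{p,q,pq\}$ of a putative $h_{p,q}$ (also handling the degenerate $h_{p,p}$ case), and for each matching check the required primality, divisibility, and disjointness-from-$N$ constraints.

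The delicate case is $|P|=1$, $P = \{a\}$. The claim I would prove is that $S$ is realizable iff no $r \in N$ is simultaneously prime, a divisor of $a$, and such that $a/r$ is also prime; this condition is checkable in polynomial time by looping over $r \in N$. The forward direction: if such $r$ exists, then $a = r\cdot(a/r)$ is semiprime, so $g_a \notin \Hprime$, and any consistent $h_{p',q'}$ would have to satisfy $a = p'q'$ and hence $\{p',q'\} = \{r, a/r\}$ by unique factorization, but $r \in N$ then blocks consistency. The backward direction: if no such $r$ exists, then either $a$ is not semiprime and $g_a \in \Hprime$ works, or $a = pq$ is semiprime, in which case $p$ and $q$ are themselves prime divisors with prime co-factors, so by hypothesis neither lies in $N$, and $h_{p,q}$ witnesses realizability. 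Crucially, this reformulation never requires us to decide semiprimality of $a$ itself.

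For Part 2 (reduction from \factoring), given $n$ under the promise $n = pq$ for primes $p,q$, I query $\Oerms$ on $S = \{(n,1)\}$. Since $n$ is a product of two primes, $g_n \notin \Hprime$; and among $h_{p',q'}$, the condition $h(n)=1$ forces $n \in \{p',q',p'q'\}$, while compositeness of $n$ rules out $n = p'$ or $n = q'$. Hence $n = p'q'$, and unique factorization gives $\{p',q'\} = \{p,q\}$, so $\Oerms(S) = h_{p,q}$; reading off the parameters from the returned hypothesis description yields the factorization. The main obstacle is exactly the $|P|=1$ composite case in Part 1: the naive case split on whether $a$ is semiprime is no easier than factoring, and the proof hinges on replacing that split with the check "does $N$ contain a prime divisor of $a$ with prime co-factor?", which uses only primality tests on elements of $N$. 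This gap between deciding realizability and producing a witness is precisely what separates the weak oracle from the strong one.
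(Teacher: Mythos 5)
Your proof is correct and follows the same overall strategy as the paper's: a polynomial-time case split on the set of positive-labeled points ($T$ in the paper, $P$ in yours), with the key insight that the delicate $|P|=1$ case can be decided by asking whether some negative example exposes a prime factorization of $a$, rather than by deciding semiprimality of $a$ directly; Part~2 is identical. One refinement is worth flagging: your $|P|=1$ criterion correctly requires the candidate $r \in N$ to \emph{itself} be prime, whereas the check as written in the paper only tests whether $x_i \mid n$ and $n/x_i$ is prime. Without the primality requirement on $x_i$, that check returns False incorrectly on instances such as $T=\{8\}$, $N=\{4\}$ (here $4 \mid 8$ and $8/4=2$ is prime, yet $g_8$ is consistent since $8$ is not a semiprime), or $T=\{p\}$, $N=\{1\}$ with $p$ prime. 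You also reject $P \cap N \neq \emptyset$ up front, which the paper's $|T|=1$ branch leaves unhandled. These are minor slips in the paper's exposition that your version avoids; the underlying idea and conclusion are the same.
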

\begin{proof}
  We first describe how a weak consistency oracle can be efficiently implemented. Given a dataset $S := \{ (x_i, y_i) \}_{i \in [n]}$, we perform the following steps:
  \begin{itemize}
  \item Set $T := \{ x_i \ : \ y_i = 1 \}$.
  \item If $|T| = 0$, return $\True$.
  \item If $|T| = 1$, let $n$ denote the unique element of $T$. For each point in $S$ of the form $(x_i, 0)$, check if $n$ is a multiple of $x_i$. If so, and the ratio $n/x_i$ is prime, then return $\False$. At the end of the loop, if we have not yet returned, then return $\True$. 
  \item If at least two values in $T$ are composite, return $\False$.
  \item If at least three values in $T$ are prime, return $\False$.
  \item If $T$ consists of two prime values whose product is $m$, then output $\True$ if and only if the point $(m,0)$ does not appear in $S$.
  \item Otherwise, $T$ consists of a composite value $m$ and at least one prime value $p$; then output $\True$ if and only the point $(m/p, 0)$ does not appear in $S$. 
  \end{itemize}
  The above steps may be efficiently implemented since there is a polynomial-time algorithm for determining whether a given natural number is prime \cite{agrawal2004primes}.

  Now consider a strong ERM oracle $\Oerms$ for $\Hprime$. Given a positive integer $n$, consider the hypothesis $\hat h$ returned by $\Oerms( \{ (n,1)\})$. If $n$ is a product of two primes $p,q$ then the unique empirical risk minimizer is the hypothesis $h_{p,q}$, meaning that $\Oerms$ must return this hypothesis and thereby yield the prime factors $p,q$. %
\end{proof}

\end{document}